\documentclass{article}
\usepackage{iclr2027_conference,times}
\usepackage[T1]{fontenc}

\usepackage{amsmath,amsfonts,bm}

\def\figref#1{Figure~\ref{#1}}
\def\secref#1{section~\ref{#1}}
\def\Secref#1{Section~\ref{#1}}

\def\1{\bm{1}}

\DeclareMathAlphabet{\mathsfit}{\encodingdefault}{\sfdefault}{m}{sl}
\SetMathAlphabet{\mathsfit}{bold}{\encodingdefault}{\sfdefault}{bx}{n}

\usepackage{hyperref}
\hypersetup{colorlinks=true,linkcolor=black,citecolor=black,urlcolor=black,pdftitle={Aperture: Merge-Consistent Rotary States for Compressed Tokens},pdfauthor={Yuhao DU and Shunian CHEN}}
\usepackage{url}
\usepackage{graphicx}
\usepackage{booktabs}
\usepackage{capt-of}
\usepackage{longtable}
\usepackage{multirow}
\usepackage{amsmath,amssymb,amsthm}
\usepackage{xcolor}
\usepackage{enumitem}
\usepackage{placeins}

\newcommand{\tabref}[1]{Table~\ref{#1}}
\newcommand{\propref}[1]{Proposition~\ref{#1}}
\newcommand{\thmref}[1]{Theorem~\ref{#1}}

\newtheorem{definition}{Definition}
\newtheorem{proposition}{Proposition}
\newtheorem{lemma}{Lemma}
\newtheorem{theorem}{Theorem}

\newcommand{\RedCondExp}{2.02}

\newcommand{\SpanN}{64}
\newcommand{\SpanTilingMax}{0.860}
\newcommand{\SpanTilingGain}{0.204}

\newcommand{\SpanTilingMiss}{0.140}

\newcommand{\SpanEvictGain}{0.000}

\newcommand{\SpanTilingMissIdent}{0.016}

\newcommand{\SpanTilingLOne}{0.656}

\newcommand{\SpanNmin}{4}
\newcommand{\SpanTilingMinOne}{0.518}

\newcommand{\GaugeUniform}{2.6\times 10^{-13}}
\newcommand{\GaugePlain}{539}
\newcommand{\GaugeQueryRel}{0.54}
\newcommand{\GaugeHetRel}{0.49}
\newcommand{\GaugeScale}{501}

\newcommand{\CovApertureRms}{0.34}

\newcommand{\CovNyquist}{0.10}
\newcommand{\ClampCovAbs}{0.06}
\newcommand{\ClampCovMain}{0.03}

\newcommand{\CovUneqAperture}{0.220}
\newcommand{\CovUneqApertureVar}{0.224}
\newcommand{\CovUneqApertureUnion}{0.30}

\newcommand{\CovUneqEqCtrl}{6.5\times 10^{-17}}

\newcommand{\AtomIntervalErr}{7.8\times 10^{-16}}
\newcommand{\AtomPointErr}{1.0}
\newcommand{\AtomDirichletId}{1.2\times 10^{-15}}

\newcommand{\AtomRatioPi}{1.5708}

\newcommand{\AtomNGrid}{32}
\newcommand{\AtomQwenPastNullHeld}{7/16}
\newcommand{\AtomQwenMinGainHeld}{0.0053}

\newcommand{\AtomQwenPastNullTrained}{4/16}

\newcommand{\RoteSmallHarmOne}{0.011}

\newcommand{\CostGainOverRope}{25\%}
\newcommand{\CostGainOverAttn}{19\%}

\newcommand{\CostPhiBytes}{512}
\newcommand{\CostPhiOverKv}{6.2\%}
\newcommand{\CostRopeUs}{1386}
\newcommand{\CostRopeGainUs}{1727}

\newcommand{\CostShape}{$B{=}8$, $H{=}16$, $L{=}2048$, $D{=}128$}
\newcommand{\CostFoldOverRope}{2.8\%}

\newcommand{\CostFoldHoistOverRope}{2.4\%}

\newcommand{\CostRounds}{12}
\newcommand{\CostFoldErr}{2e-16}
\newcommand{\CostFoldAmortised}{+0.9\%}
\newcommand{\CostFoldShared}{+6.2\%}

\newcommand{\TwoDMinN}{5}
\newcommand{\TwoDCleanUpto}{4}

\newcommand{\TwoDCodim}{2}

\newcommand{\TwoDCertified}{38}
\newcommand{\TwoDCertFailed}{0}

\newcommand{\TwoDRelAxes}{1664}
\newcommand{\TwoDRelTrivial}{1122}

\newcommand{\CovFarRate}{128}

\newcommand{\CovApertureFarLo}{0.403}
\newcommand{\CovApertureFarHi}{1.000}
\newcommand{\CovLearnFarLo}{0.184}
\newcommand{\CovLearnFarHi}{1.000}

\newcommand{\CovNSeeds}{15}

\newcommand{\CovTrainRates}{$2$--$32$}
\newcommand{\CovLearnCoveredP}{=0.93}

\newcommand{\CovLearnSingleP}{=0.42}

\newcommand{\CovWorstP}{=0.42}

\newcommand{\ScaleNBins}{12}
\newcommand{\ScaleChance}{0.0833}
\newcommand{\ScaleTrainable}{68}
\newcommand{\ScaleWfeatExtra}{4096}
\newcommand{\ScaleWfeatExtraPct}{0.006\%}

\newcommand{\ScaleTrained}{1, 2}
\newcommand{\ScrambleCost}{+0.658\text{ to }+0.682}

\newcommand{\ScaleNPaired}{24}

\newcommand{\ScaleSeedList}{0\text{--}9\text{, }11\text{--}24}

\newcommand{\ScaleFarBudget}{4}
\newcommand{\ScaleVsRote}{+0.0597}

\newcommand{\ScaleVsBlind}{+0.0259}

\newcommand{\ScaleVsBlindP}{=0.051}

\newcommand{\ScaleDidWfeatSel}{+0.0197}

\newcommand{\ScaleSyncN}{24}
\newcommand{\ScaleSyncFar}{+0.0000}

\newcommand{\ScaleSyncInDist}{-0.0055}

\newcommand{\ScaleBinTokRatioMFour}{1.001}

\newcommand{\ScaleTokRote}{+0.0610}

\newcommand{\ScaleSatBudget}{8}

\newcommand{\LadNSeeds}{24}

\newcommand{\LadTokMEight}{+0.0376}

\newcommand{\LadSlopeTok}{+0.0115}

\newcommand{\LadWideBudget}{8}
\newcommand{\LadWideRatio}{50}

\newcommand{\LadNarrowRatio}{6}

\newcommand{\LadCtlRms}{+0.0064}

\newcommand{\LadCtlRmsP}{=0.593}

\newcommand{\LadCtlNyq}{+0.0255}

\newcommand{\LadCtlNyqP}{=0.020}

\newcommand{\ScaleInflN}{24}
\newcommand{\ScaleInflVsRote}{+0.0214}

\newcommand{\ScaleInflVsRoteP}{=0.223}

\newcommand{\ClampAbsVsAp}{-0.0084}

\newcommand{\ClampAbsN}{48}
\newcommand{\ClampAbsCiLo}{-0.0274}
\newcommand{\ClampAbsCiHi}{+0.0105}

\newcommand{\ClampAbsMargin}{0.0199}

\newcommand{\ClampMainVsAp}{+0.0015}

\newcommand{\ClampMainN}{24}
\newcommand{\ClampMainCiLo}{-0.0274}
\newcommand{\ClampMainCiHi}{+0.0303}

\newcommand{\SevScaleTrainable}{106}

\newcommand{\SevScaleVsRote}{+0.0549}

\newcommand{\SevScaleVsBlind}{+0.0018}

\newcommand{\SevScaleVsBlindP}{=0.876}

\newcommand{\SevScaleShapeWfeatFar}{+0.0256}
\newcommand{\SevScaleShapeWfeatIn}{+0.0142}

\newcommand{\SevLadTokMEight}{+0.0197}

\newcommand{\SevLadSlopeTok}{+0.0144}

\newcommand{\SevScaleInflN}{24}
\newcommand{\SevScaleInflVsRote}{+0.0073}

\newcommand{\SevScaleInflVsRoteP}{=0.581}

\newcommand{\SpanGapCoarsenRange}{+0.186\text{ to }+0.202}

\newcommand{\SpanGapEvictRange}{+0.085\text{ to }+0.108}

\newcommand{\VseTrainRate}{8}
\newcommand{\VseFarRate}{32}

\newcommand{\VseNbins}{8}
\newcommand{\VseBinMs}{100}

\newcommand{\VseDuration}{4}
\newcommand{\VseFpsA}{50}
\newcommand{\VseLagMax}{0.4}
\newcommand{\VseSigma}{0.2}

\newcommand{\VseKeep}{0.5}
\newcommand{\VseKeepLo}{0.25}
\newcommand{\VseNlayer}{4}
\newcommand{\VseDmodel}{256}
\newcommand{\VseSteps}{15k}
\newcommand{\VseNval}{2048}
\newcommand{\VseNparams}{3.18M}
\newcommand{\VseEvalFpsLo}{2}
\newcommand{\VseEvalFpsHi}{32}
\newcommand{\VseNtypes}{12}
\newcommand{\VseDsig}{32}

\newcommand{\VseNhead}{8}
\newcommand{\VsePmin}{0.02}
\newcommand{\VsePmax}{8}
\newcommand{\VseBatch}{128}
\newcommand{\VseWd}{0.01}
\newcommand{\VseWarmup}{200}
\newcommand{\VseEvalEvery}{1500}
\newcommand{\VseLr}{3\times10^{-4}}
\newcommand{\VseNruns}{180}
\newcommand{\VseFlatness}{1.27}
\newcommand{\VseSatLo}{0.999}
\newcommand{\VseSatHi}{1.000}

\newcommand{\VseEvTiledLo}{0.935}
\newcommand{\VseEvTiledHi}{1.000}

\newcommand{\VseTiApertureRmsFar}{0.985}

\newcommand{\VseEvApertureRmsFar}{0.997}

\newcommand{\VseTiWallclockWfeatFar}{0.999}

\newcommand{\VseEvWallclockWfeatFar}{0.845}

\newcommand{\VseEvIndexFar}{0.238}

\newcommand{\VseTiNyquistFar}{0.960}

\newcommand{\VseMechTi}{-0.0147}

\newcommand{\VseIndexEvHalf}{0.755}

\newcommand{\CeilNcells}{15}
\newcommand{\CeilNval}{2048}
\newcommand{\CeilRateLo}{1.0000}

\newcommand{\CeilRateMarginRatio}{4.5}
\newcommand{\CeilEvictLo}{0.9976}

\newcommand{\CeilSelfCheck}{6\!\times\!10^{-12}}

\newcommand{\GeomAccCoarsen}{0.5238}
\newcommand{\GeomBaseCoarsen}{0.4643}

\newcommand{\GeomBaseDilate}{0.4508}

\newcommand{\GeomBaseMask}{0.4627}

\newcommand{\GeomBaseEvict}{0.4081}

\newcommand{\GeomEffInversion}{+0.0867}

\newcommand{\GeomEffInversionBootLo}{+0.0663}
\newcommand{\GeomEffInversionBootHi}{+0.1070}
\newcommand{\GeomEffInversionG}{+2.82}

\newcommand{\GeomNBoot}{20{,}000}

\newcommand{\GeomAncContent}{-0.0651}

\newcommand{\GeomPlainTiling}{-0.0077}

\newcommand{\GeomPlainContent}{-0.0677}

\newcommand{\GeomPlainInversion}{+0.0600}

\newcommand{\GeomPlainInversionG}{+2.00}
\newcommand{\GeomPlainInversionBootLo}{+0.0391}
\newcommand{\GeomPlainInversionBootHi}{+0.0798}

\newcommand{\GeomConvFrac}{0.5}

\newcommand{\GeomNrunsDepth}{48}

\newcommand{\GeomSteps}{40k}
\newcommand{\GeomChance}{0.0417}
\newcommand{\GeomMajority}{0.0498}
\newcommand{\GeomNbins}{24}
\newcommand{\GeomBinS}{0.5}
\newcommand{\GeomWindow}{16}
\newcommand{\GeomDvid}{768}
\newcommand{\GeomDaud}{1280}
\newcommand{\GeomDquery}{1536}
\newcommand{\GeomJv}{64}
\newcommand{\GeomJa}{32}
\newcommand{\GeomVidHz}{4}
\newcommand{\GeomNparams}{4.1M}
\newcommand{\GeomNparamsWfeat}{512}
\newcommand{\GeomNparamsWfeatPct}{0.013}
\newcommand{\GeomNTrainItems}{6084}
\newcommand{\GeomNTrainVids}{44}
\newcommand{\GeomNValItems}{2093}
\newcommand{\GeomNValVids}{18}
\newcommand{\GeomNTestItems}{1404}
\newcommand{\GeomNTestVids}{16}
\newcommand{\GeomDmodel}{256}
\newcommand{\GeomNhead}{8}
\newcommand{\GeomNlayer}{4}
\newcommand{\GeomBatch}{64}
\newcommand{\GeomWd}{0.01}
\newcommand{\GeomWarmup}{300}
\newcommand{\GeomNval}{1024}
\newcommand{\GeomAudHz}{2}
\newcommand{\GeomPmin}{0.5}
\newcommand{\GeomPmax}{32}
\newcommand{\GeomMmax}{8}
\newcommand{\GeomTmin}{2}
\newcommand{\GeomTmax}{14}
\newcommand{\GeomOnsetP}{8}
\newcommand{\GeomOnsetThr}{1.4}
\newcommand{\GeomOnsetIsolate}{12}
\newcommand{\GeomLr}{3\times10^{-4}}
\newcommand{\GeomAuditQonly}{0.0723}

\newcommand{\GeomAuditSonly}{0.1074}
\newcommand{\GeomAuditSonlyConv}{0.0977}

\newcommand{\GeomAuditTshuf}{0.0664}

\newcommand{\GeomAuditSonlyCell}{mask}
\newcommand{\GeomAuditSonlyAcc}{0.3701}

\newcommand{\GeomHeadInversion}{-0.0118}

\newcommand{\GeomHeadInversionP}{=0.29}

\newcommand{\GeomCvLo}{0.522}
\newcommand{\GeomCvHi}{0.531}

\newcommand{\GeomCtlStreams}{2048}

\newcommand{\GeomMeanWCoarsen}{1.08}

\newcommand{\GeomLostMask}{7.1}

\newcommand{\GeomDepthCoarsenOne}{0.2552}
\newcommand{\GeomDepthBlindCoarsenOne}{0.3913}
\newcommand{\GeomDepthCoarsenTwo}{0.2256}
\newcommand{\GeomDepthBlindCoarsenTwo}{0.4753}
\newcommand{\GeomDepthCoarsenFour}{0.1787}
\newcommand{\GeomDepthBlindCoarsenFour}{0.4557}
\newcommand{\GeomDepthCoarsenEight}{0.1755}
\newcommand{\GeomDepthBlindCoarsenEight}{0.4551}
\newcommand{\GeomDepthEvictOne}{0.0661}
\newcommand{\GeomDepthBlindEvictOne}{0.4167}
\newcommand{\GeomDepthEvictTwo}{0.0498}
\newcommand{\GeomDepthBlindEvictTwo}{0.4176}
\newcommand{\GeomDepthEvictFour}{0.1084}
\newcommand{\GeomDepthBlindEvictFour}{0.4268}
\newcommand{\GeomDepthEvictEight}{0.1328}
\newcommand{\GeomDepthBlindEvictEight}{0.4528}
\newcommand{\GeomDepthNs}{1, 2, 4, 8}
\newcommand{\GeomDepthNseeds}{3}
\newcommand{\GeomDepthBlindGainCoarsen}{+0.0638}
\newcommand{\GeomDepthGapGainCoarsen}{-0.0798}
\newcommand{\GeomDepthBlindGainEvict}{+0.0361}
\newcommand{\GeomDepthGapGainEvict}{+0.0667}

\newcommand{\GeomDoseFirst}{+0.0732}

\newcommand{\GeomDoseFirstP}{=0.011}

\newcommand{\GeomDoseFirstKeep}{0.875}

\newcommand{\GeomDoseRhoMasked}{+0.095}

\newcommand{\GeomDosePlateauLo}{+0.0990}
\newcommand{\GeomDosePlateauHi}{+0.1316}

\newcommand{\GeomTransNruns}{100}
\newcommand{\GeomTransNitems}{1024}
\newcommand{\GeomTransNcells}{5}

\newcommand{\GeomTransBelowUnanWfeat}{3 of 6}

\newcommand{\GeomTransBelowUnanAperture}{0 of 6}

\newcommand{\GeomTransBelowUnanApertureRms}{2 of 6}

\newcommand{\GeomTransRankRho}{-0.80}

\newcommand{\GeomTransRankRhoMatched}{-0.80}

\newcommand{\GeomTransPoolVsWfeat}{-0.0344}

\newcommand{\GeomTransPoolVsWfeatDil}{+0.0236}

\newcommand{\GeomTransPoolVsWfeatInt}{-0.0510}

\newcommand{\GeomTransPoolVsAperture}{-0.0020}

\newcommand{\GeomTransPoolVsApertureDil}{+0.0135}

\newcommand{\GeomTransPoolPlAperture}{+0.0325}

\newcommand{\GeomTransNseedcells}{105}
\newcommand{\GeomTransNseedcellsPos}{105}

\newcommand{\GeomWshufNruns}{70}
\newcommand{\GeomWshufNblind}{30}
\newcommand{\GeomWshufNmodes}{8}

\newcommand{\GeomWshufFullGap}{+0.1632}

\newcommand{\GeomWshufSwapGap}{+0.0261}

\newcommand{\GeomWshufFullEffInversion}{+0.0934}

\newcommand{\GeomWshufValidEffInversion}{+0.1084}

\newcommand{\GeomWshufPadFracCoarsen}{25.1}

\newcommand{\GeomWshufPadFracEvict}{61.3}

\newcommand{\GeomWshufPairedQuant}{+0.0401}
\newcommand{\GeomWshufPairedQuantSd}{0.0187}

\newcommand{\GeomShamNruns}{20}

\newcommand{\GeomShamFloor}{-0.0018}
\newcommand{\GeomShamFloorSd}{0.0075}
\newcommand{\GeomShamFloorMax}{0.0156}

\newcommand{\GeomShamDelta}{+0.0624}
\newcommand{\GeomShamDeltaSd}{0.0454}

\newcommand{\GeomShamDeltaBlind}{+0.0425}
\newcommand{\GeomShamDeltaBlindSd}{0.0496}

\newcommand{\GeomShamBelowBlind}{-0.0199}

\newcommand{\GeomShamEffTiling}{+0.0000}

\newcommand{\GeomShamEffContent}{-0.0436}

\newcommand{\TmrTemporal}{16}

\newcommand{\TmrLayers}{28}
\newcommand{\TmrPairs}{64}
\newcommand{\TmrPeriodFast}{0.251}
\newcommand{\TmrPeriodSlow}{6.40}
\newcommand{\TmrAudioStepMs}{40}
\newcommand{\TmrAliasedAudio}{0}
\newcommand{\TmrAliasedFpsTwo}{10}
\newcommand{\TmrAliasedFpsOne}{13}
\newcommand{\TmrEnergyQ}{0.920}

\newcommand{\TmrMatchedHQ}{0.807}
\newcommand{\TmrMatchedWQ}{1.004}

\newcommand{\TmrEnergyK}{0.793}

\newcommand{\TmrMatchedHK}{0.714}
\newcommand{\TmrMatchedWK}{1.108}

\newcommand{\TmrSensN}{240}
\newcommand{\TmrSensBase}{0.6667}

\newcommand{\TmrSensShufD}{+0.0000}

\newcommand{\TmrSensRevD}{-0.0042}

\newcommand{\TmrSensConstD}{-0.0500}
\newcommand{\TmrSensSinc}{0.6625}

\newcommand{\TmrSensStretchD}{+0.0000}

\newcommand{\TmrRevN}{246}

\newcommand{\TmrRevBase}{0.5711}
\newcommand{\TmrRevBlind}{0.663}
\newcommand{\TmrRevSinc}{0.5650}

\newcommand{\TmrRevNyq}{0.5630}

\newcommand{\GateNTotal}{240}
\newcommand{\GateNSens}{59}

\newcommand{\GateLooConstN}{50}
\newcommand{\GateLooConst}{-0.1400}
\newcommand{\GateLooConstLo}{-0.2396}
\newcommand{\GateLooConstHi}{-0.0404}
\newcommand{\GateLooConstP}{=0.007}

\newcommand{\GateSensSinc}{-0.0169}
\newcommand{\GateSensSincLo}{-0.1304}
\newcommand{\GateSensSincHi}{+0.0965}

\newcommand{\PubNruns}{36}
\newcommand{\PubSteps}{40k}
\newcommand{\PubNcells}{5}
\newcommand{\PubNitems}{1024}

\newcommand{\PubAccCoarsen}{0.5379}

\newcommand{\PubGapTome}{+0.3113}

\newcommand{\PubAccTome}{0.4900}

\newcommand{\PubGapEvict}{+0.1084}

\newcommand{\PubTomeCoarsen}{+0.1018}

\newcommand{\ManuscriptPlainPairedLo}{+0.0393}
\newcommand{\ManuscriptPlainPairedHi}{+0.0808}
\newcommand{\ManuscriptRmsPairedLo}{+0.0603}
\newcommand{\ManuscriptRmsPairedHi}{+0.1130}

\newcommand{\StatAlpha}{0.05}

\newcommand{\StatNConfirm}{41}

\newcommand{\StatHolmInversion}{\le 0.004}

\newcommand{\GeomDoseFirstAdj}{0.264}

\newcommand{\StatHolmDidAdj}{0.591}

\newcommand{\LadTokMEightAdj}{0.012}
\newcommand{\SevLadTokMEightAdj}{0.784}

\newcommand{\RebuildTransferLo}{0.0155}
\newcommand{\RebuildTransferHi}{0.0494}
\newcommand{\RebuildRawMomentMax}{5.33\times10^{-15}}
\newcommand{\RebuildRmsTreeLoss}{0.2684}
\newcommand{\RebuildQueryBound}{2.46\times10^{-4}}
\newcommand{\RebuildQueryBoundLo}{2.29\times10^{-4}}
\newcommand{\RebuildQueryBoundHi}{2.63\times10^{-4}}
\newcommand{\RebuildEndpointError}{5.34\times10^{-16}}
\newcommand{\RebuildAnchorError}{1.03\times10^{-14}}

\newcommand{\GpuBootReps}{10{,}000}
\newcommand{\GpuBootSeed}{20260910}
\newcommand{\GpuBootNParticipants}{3}

\newcommand{\GpuBootGFourVideos}{3{,}817}
\newcommand{\GpuBootGFourItems}{4{,}000}
\newcommand{\GpuBootGTwoPolicyLo}{-0.0145}
\newcommand{\GpuBootGTwoPolicyHi}{+0.0153}
\newcommand{\GpuBootGTwoRateLo}{-0.0138}
\newcommand{\GpuBootGTwoRateHi}{+0.0172}
\newcommand{\GpuBootGTwoPlacementLo}{-0.0150}
\newcommand{\GpuBootGTwoPlacementHi}{+0.0168}
\newcommand{\GpuBootGThreePolicyLo}{-0.0125}
\newcommand{\GpuBootGThreePolicyHi}{+0.0153}
\newcommand{\GpuBootGFourSpatialLo}{-0.0228}
\newcommand{\GpuBootGFourSpatialHi}{-0.0067}
\newcommand{\GpuBootVidTopGTwoPolicyLo}{-0.0135}
\newcommand{\GpuBootVidTopGTwoPolicyHi}{+0.0133}
\newcommand{\GpuBootVidTopGTwoRateLo}{-0.0133}
\newcommand{\GpuBootVidTopGTwoRateHi}{+0.0153}
\newcommand{\GpuBootVidTopGTwoPlacementLo}{-0.0144}
\newcommand{\GpuBootVidTopGTwoPlacementHi}{+0.0135}
\newcommand{\GpuBootVidTopGThreePolicyLo}{-0.0113}
\newcommand{\GpuBootVidTopGThreePolicyHi}{+0.0131}

\newcommand{\EpicTestParticipants}{3}
\newcommand{\EpicTestParticipantList}{P01, P06, P26}
\newcommand{\EpicTestVideos}{16}

\newcommand{\GpuInvGOneItems}{2{,}048}
\newcommand{\GpuInvGTwoGThreeItems}{1{,}024}

\newcommand{\DenseRefAcc}{0.6515}
\newcommand{\DenseRefMediaErrors}{200}

\newcommand{\DenseRefFullMediaAcc}{0.6878}
\newcommand{\PpeRefAcc}{0.6695}
\newcommand{\PpeRefMediaErrors}{47}
\newcommand{\PpeRefAvailItems}{3{,}953}
\newcommand{\PpeRefAvailAcc}{0.6775}

\newcommand{\GpuGridArmsGTwo}{13}
\newcommand{\GpuGridArmsGThree}{8}

\newcommand{\GpuGridCellSeedAccs}{16{,}800}

\newcommand{\GpuGridAccMin}{0.0635}
\newcommand{\GpuGridAccMax}{0.1201}
\newcommand{\GpuGridCellsAboveTen}{407}

\newcommand{\GpuGridAnomalyLo}{0.1123}
\newcommand{\GpuGridAnomalyHi}{0.1201}
\newcommand{\GpuGridAnomalyCells}{80}
\newcommand{\GpuGridAnomalyPTwoSided}{1.2\times10^{-3}}
\newcommand{\GpuGridConcTrmsMean}{0.1055}
\newcommand{\GpuGridConcRest}{302}
\newcommand{\GpuGridConcTail}{25}
\newcommand{\GpuGridConcTailCkpts}{8}

\newcommand{\ExplBudgetArms}{5}
\newcommand{\ExplBudgetSeeds}{10}
\newcommand{\ExplBudgetCells}{80}
\newcommand{\ExplBudgetLr}{10^{-3}}
\newcommand{\ExplBudgetLnTwelve}{2.4849}
\newcommand{\ExplBudgetMaxLossDev}{0.0107}
\newcommand{\ExplBudgetGridMeanLo}{0.0848}
\newcommand{\ExplBudgetGridMeanHi}{0.0866}
\newcommand{\ExplBudgetChanceFactorLo}{1.0172}
\newcommand{\ExplBudgetChanceFactorHi}{1.0395}
\newcommand{\ExplBudgetContrasts}{8}
\newcommand{\ExplBudgetContrastMeanLo}{-0.0019}
\newcommand{\ExplBudgetContrastMeanHi}{-0.0006}
\newcommand{\ExplBudgetContrastCILo}{-0.0073}
\newcommand{\ExplBudgetContrastCIHi}{0.0044}
\newcommand{\ExplBudgetTestInvSha}{92043cda886e0ab1}
\newcommand{\ExplBudgetTrainLossCentre}{2.4950}
\newcommand{\ExplBudgetGridMeanCentre}{0.0866}
\newcommand{\ExplBudgetFactorCentre}{1.0395}
\newcommand{\ExplBudgetTrainLossStoredSupport}{2.4898}
\newcommand{\ExplBudgetGridMeanStoredSupport}{0.0862}
\newcommand{\ExplBudgetFactorStoredSupport}{1.0342}
\newcommand{\ExplBudgetTrainLossExactInterval}{2.4903}
\newcommand{\ExplBudgetGridMeanExactInterval}{0.0848}
\newcommand{\ExplBudgetFactorExactInterval}{1.0172}
\newcommand{\ExplBudgetTrainLossLearnedInterval}{2.4922}
\newcommand{\ExplBudgetGridMeanLearnedInterval}{0.0850}
\newcommand{\ExplBudgetFactorLearnedInterval}{1.0202}
\newcommand{\ExplBudgetTrainLossPpeRanked}{2.4956}
\newcommand{\ExplBudgetGridMeanPpeRanked}{0.0860}
\newcommand{\ExplBudgetFactorPpeRanked}{1.0324}

\title{Aperture: Merge-Consistent Rotary States\\ for Compressed Tokens}

\author{Yuhao DU$^{1,2}$ \quad Shunian CHEN$^{1}$
\\ {\normalfont $^1$The Chinese University of Hong Kong, Shenzhen}
\\ {\normalfont $^2$Shenzhen Loop Area Institute}
\\ {\normalfont\texttt{yuhaodu1@link.cuhk.edu.cn}}}

\iclrfinalcopy
\renewcommand{\headrulewidth}{0pt}
\begin{document}

\maketitle
\fancyhead{}
\suppressfloats[t]

\begin{abstract}
Token compression combines content from several positions, yet rotary position embeddings
usually assign the merged token one coordinate. We ask what positional information must
survive later merges. Aperture stores Fourier moments of the token's weighted support at
the model's rotary frequencies. We prove that these moments have minimal real dimension among continuous states
sufficient for the selected expected rotary interactions. Represented mass makes updates additive;
attention normalisation remains a separate readout choice. Uniform intervals give a centre
rotation times a sinc gain. We characterise when centres determine interval widths and
construct matched examples where they do not. Numerical checks verify the weighted-support
implementation. In trained temporal readers, compression transfer varies with gain
calibration and feature placement. In a prespecified native video question-answering comparison, stored
support reaches $65.63\%$ accuracy versus $67.12\%$ for the deployed merging rule. These
results separate exact positional preservation under compression from downstream benefit.

\end{abstract}

\section{Introduction}
\label{sec:intro}

A compressor changes the object to which a position belongs. A token that once represented
one frame or patch may, after merging, represent unequal exposures or disconnected regions
\citep{tome,prumerge,apollo}. Attaching a single coordinate preserves its centre but discards
how content is distributed around it.

The natural unit of positional state is therefore the \emph{weighted support} of the token:
the source positions together with the weights used to aggregate them.
Aperture averages their complex rotary phases at each model frequency.
\textbf{The positional state follows the same aggregation as the content.}
\figref{fig:support_principle} shows the result: different merge trees yield the same state
(panel a), which generally differs from the rotary code of the centre (panel b).
Uniform intervals give a rotary position embedding (RoPE) \citep{rope} at the interval centre multiplied by a sinc gain; arbitrary merges
retain the complex moments directly.

\begin{figure}[t]
\centering
\includegraphics[width=\linewidth]{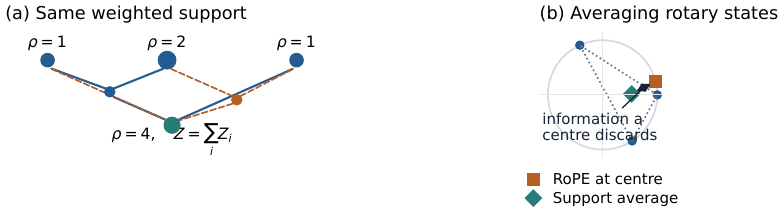}
\caption{\textbf{Position follows the represented support.}
A token merger combines content and positional moments with the same represented masses.
\textbf{(a)} Different merge trees produce the same state for the same weighted support.
\textbf{(b)} A centre summary is also independent of the merge tree, but generally differs from
averaging the original rotary states; the marked chord is the information a centre discards.}
\label{fig:support_principle}
\end{figure}

Integrated positional encoding averages Fourier features over regions
\citep{mipnerf,ipesr,exactnerf}, and Time Interval Encoding (TIE) already derives interval-integrated rotary features
\citep{tie}. Our question concerns the unnormalised state retained through subsequent
weighted merges: which affine summaries suffice for a fixed rotary bank?

\begin{samepage}
\begin{itemize}[leftmargin=1.3em,itemsep=2pt,topsep=3pt]
\item \textbf{A minimal positional state for aggregation.} We characterise the affine state, prove a dimension lower bound for continuous states,
and give additive updates carrying represented mass.
\item \textbf{A constructive account of missing extent.} We identify the centre-preserving width
family, characterise which anchors resolve it, and evaluate matched support queries.
\item \textbf{Distinct roles for storage and readout.} Gain calibration changes temporal contrasts;
native comparisons quantify predictive performance and measured request cost alongside
the representation's algebraic guarantee.
\end{itemize}
\end{samepage}

Storage and readout serve different purposes: additive moments preserve the state through
later merges, while normalising only at attention controls logit magnitude.

Centres can sometimes determine support. In a one-dimensional interval chain, one observed
boundary suffices to recover all widths. Without it, an alternating width perturbation
preserves even the absolute centres. We use this ambiguity to construct matched support
queries, then examine predictive benefit separately on trained readers.

On EPIC-Kitchens onset localisation, the exact interval readout (exact Aperture) exceeds a
scalar-width feature by $\GeomTransPoolPlAperture$ accuracy under held-out content-deleting
policies, a descriptive five-seed contrast; its difference from a centre-only clock is
$\GeomTransPoolVsAperture$, with an interval including zero. Native adaptation comparisons
remain unresolved, while a deployed spatial-merging rule leads stored support by $1.49$
percentage points on video question answering. The practical question is when a trained
reader uses the information that exact aggregation preserves.

\section{Rotary state under token aggregation}
\label{sec:principle}

\subsection{From a position to a support measure}

Let $\mu_i$ be a probability measure describing the source positions and nonnegative scalar
weights represented by token $x_i$. A point observation has measure $\delta_t$; a uniformly
pooled exposure has a uniform interval measure. Under a convex pooling rule,
$x=\sum_j\alpha_jx_j$ carries $\mu=\sum_j\alpha_j\mu_j$, where $\alpha_j\ge0$ and
$\sum_j\alpha_j=1$. The measure specifies aggregation provenance; a nonlinear frontend's
payload need not preserve every source signal within its support.
\figref{fig:aperture}a draws the object the state describes: a weighted support, of which a
centre-only code retains a single point.

\begin{center}
\small
\begin{tabular}{@{}ll@{}}
\toprule
Symbol & Meaning \\
\midrule
$\mu_i,\rho_i$ & Normalised support measure and represented mass of token $i$ \\
$\theta_m,M$ & Rotary frequency and number of complex rotary pairs \\
$z_i,Z_i$ & Normalised Fourier state and raw moments, $Z_i=\rho_i z_i$ \\
$c_i,w_i$ & Centre and width for a uniform interval support \\
\bottomrule
\end{tabular}
\end{center}

\begin{definition}[Aperture state]
\label{def:aperture}
For selected frequencies $\Theta=\{\theta_1,\ldots,\theta_M\}$, the rotary support state is
\begin{equation}
 z(\mu)=\bigl(\phi_{\theta_m}(\mu)\bigr)_{m=1}^M,\qquad
 \phi_\theta(\mu)=\int e^{i\theta t}\,d\mu(t).
 \label{eq:aperture}
\end{equation}
\end{definition}
It obeys
\begin{equation}
 z\!\left(\sum_j\alpha_j\mu_j\right)=\sum_j\alpha_jz(\mu_j).
 \label{eq:merge}
\end{equation}

This is the affine requirement: mixing support measures and mixing their states give the
same result. It lets a compressor update positional information from the summaries of its
children, using the same weights as for content, without revisiting their source positions.

\begin{samepage}
\begin{theorem}[Characterisation and dimension of rotary support state]
\label{thm:rep}
Let $\theta_1,\ldots,\theta_M$ be distinct positive frequencies and $D$ an interval with
nonempty interior; $\mathcal P(D)$ denotes its probability measures.
\textbf{(i)} Any weakly continuous affine state $S:\mathcal P(D)\to\mathbb C^M$ with
$S(\delta_t)=(e^{i\theta_m t})_{m=1}^M$ coincides with $z$; on finite mixtures,
affinity and point calibration suffice without continuity.
\textbf{(ii)} Any affine or weakly continuous state $T:\mathcal P(D)\to\mathbb R^d$
that determines the selected-frequency expected rotary interactions with arbitrary fixed
queries, keys and point probes has $d\ge2M$. The state $z$ attains this bound.
For affine states, sufficiency is exactly a factorisation: the state determines these
interactions if and only if $z$ is an affine function of it on its attained domain.
Any minimal affine sufficient state is equivalent to $z$ by an invertible affine
change of coordinates.
\end{theorem}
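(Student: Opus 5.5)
For (i) I will first handle finite mixtures. Write $\mu=\sum_j\alpha_j\delta_{t_j}$. Affinity gives $S(\mu)=\sum_j\alpha_jS(\delta_{t_j})$, and point calibration turns this into $\sum_j\alpha_j(e^{i\theta_mt_j})_m$. That is exactly $z(\mu)$ by \eqref{eq:merge}, so no continuity is used on this part.

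For a general $\mu\in\mathcal P(D)$ I will approximate by finitely supported measures. Empirical measures of quantile points are enough, or any sequence of finite mixtures converging weakly to $\mu$; such sequences exist because finitely supported measures are weakly dense in $\mathcal P(D)$. Weak continuity of $S$ then gives $S(\mu_n)\to S(\mu)$. Weak continuity of $z$ is automatic, since $t\mapsto e^{i\theta t}$ is bounded and continuous, so $z(\mu_n)\to z(\mu)$. The two limits must agree, which proves (i).

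For (ii) I first need to formalise what "determines the expected rotary interactions" means. A query–key interaction at frequency $\theta_m$ against a point probe at $s$ has the form $\mathrm{Re}\bigl(a\,e^{-i\theta_m s}\,\phi_{\theta_m}(\mu)\bigr)$ with $a\in\mathbb C$ arbitrary, coming from the fixed query and key. Ranging over all $a$, or just $a=1$ and $a=i$ with $s=0$, recovers $\mathrm{Re}$ and $\mathrm{Im}$ of every $\phi_{\theta_m}(\mu)$. So sufficiency means $T(\mu)=T(\nu)$ implies $z(\mu)=z(\nu)$; in other words, $z=F\circ T$ on the attained domain for some map $F$.

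The lower bound rests on the image $z(\mathcal P(D))$, which contains the convex hull of the curve $t\mapsto(e^{i\theta_mt})_m$ for $t\in D$. My key claim is that this hull has nonempty interior in $\mathbb C^M\cong\mathbb R^{2M}$. To show it, I will argue that the functions $1,\cos\theta_mt,\sin\theta_mt$ are linearly independent on any interval, because the frequencies are distinct and positive and these are real-analytic. Hence the curve lies in no real affine hyperplane. Its affine hull is therefore all of $\mathbb R^{2M}$, and a convex set whose affine hull is $\mathbb R^{2M}$ has nonempty interior.

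In the affine case, $z=F\circ T$ with $T$ affine forces $F$ to be affine on the affine span of $T$'s image. The argument is that for a mixture $\sum\alpha_j\mu_j$, $F(\sum\alpha_jT(\mu_j))=\sum\alpha_jF(T(\mu_j))$, and this extends from convex combinations to the affine hull. The image of $z$ spans $2M$ real dimensions, so $d\ge2M$. This same argument gives the factorisation "iff": conversely, if $z=F\circ T$ with $F$ affine, then $T$ plainly determines the interactions.

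In the weakly continuous case, $T$ induces an injection from $z(\mathcal P(D))$ into $\mathbb R^d$. I expect this to be the main obstacle, because an arbitrary injection can violate dimension (space-filling pathologies). The plan is to restrict to finite mixtures of $2M+1$ fixed points in general position, which I get from the nondegeneracy above. This gives a compact $2M$-simplex $\Delta$. The map $\alpha\mapsto\sum\alpha_j\delta_{t_j}$ is weakly continuous, and $z$ is injective on $\Delta$ by general position. The map $\alpha\mapsto T(\sum\alpha_j\delta_{t_j})$ is continuous and injective on $\Delta$: if two points of $\Delta$ had the same $T$-value they would have the same $z$-value, hence be equal. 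A continuous injection of a compact set is an embedding, so $\Delta$ embeds in $\mathbb R^d$. Invariance of domain then gives $d\ge2M$; more precisely, if $d<2M$, composing with the inclusion $\mathbb R^d\subset\mathbb R^{2M}$ would give an injection of an open set in $\mathbb R^{2M}$ whose image lies in a proper subspace, contradicting that the image must be open.

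Attainment is immediate, since $z$ itself is $2M$-dimensional and affine. For minimal affine sufficient $T$ with $d=2M$, the affine $F$ with $z=F\circ T$ maps the affine hull of $T(\mathcal P(D))$, which has dimension at most $2M$, onto $\mathbb R^{2M}$. It is therefore a bijective affine map between $2M$-dimensional spaces, hence invertible, and so $T=F^{-1}\circ z$.
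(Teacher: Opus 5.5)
Your proposal is correct and follows essentially the paper's route: point calibration on finite mixtures plus weak density for (i), and for (ii) recovery of the moments from point probes, $2M+1$ points with affinely independent features giving an injective simplex of point mixtures, and invariance of domain after composing with $\mathbb R^d\hookrightarrow\mathbb R^{2M}$. The differences are minor. You obtain the affine bound from the factorisation (an affine map cannot raise affine dimension) rather than from injectivity of $T$ on the open simplex. You leave implicit the positive/negative-part splitting that the paper uses to extend the convex-combination-preserving factor map affinely to the affine hull. Finally, the probe should sit at some $s\in D$ rather than at $s=0$.
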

\end{samepage}

For (i), point calibration and affinity fix finite mixtures; weak continuity extends the
identity. For (ii), point probes recover each moment's real and imaginary parts.
Independence of $1,\cos(\theta_m t),\sin(\theta_m t)$ gives $2M+1$ points whose
feature vectors span a $2M$-dimensional simplex. A sufficient state is injective on its
mixtures. Affine dimension, or invariance of domain for a continuous state, then gives
the lower bound. Among affine states, sufficiency carries affine relations from state
values to moments, giving the factorisation. On a fixed atomic set, the bound is its
calibrated features' affine rank (proof and domain variants, \secref{sec:app_rep}).

\begin{figure}[t]
\centering
\includegraphics[width=\linewidth]{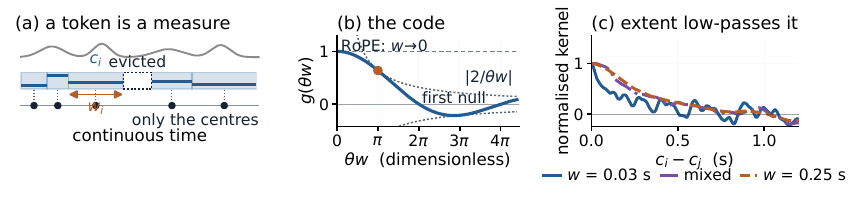}
\caption{\textbf{From token support to rotary gain.}
\textbf{(a)} Retention leaves intervals of different widths and gaps after eviction; centres
alone omit their extent (schematic).
\textbf{(b)} A uniform interval contributes the signed gain $\operatorname{sinc}(\theta w/2)$
with envelope $\min(1,2/(|\theta|w))$. The dot marks the hard-cutoff control's threshold
$\theta w=\pi$, where the exact gain is still $2/\pi$.
\textbf{(c)} Positional kernels $K(\Delta)=\sum_m g_m(w_i)g_m(w_j)\cos(\theta_m\Delta)$,
normalised by $K(0)$, for narrow, wide and mixed supports. Panels (b)--(c) use the model's bank.}
\label{fig:aperture}
\end{figure}

\subsection{Additive storage and attention readout}
\label{sec:props}

When a token represents mass $\rho_i>0$, write $\nu_i=\rho_i\mu_i$ and store
\begin{equation}
 (\rho_i,Z_i)=\left(\nu_i(\mathbb R),\int (e^{i\theta_m t})_{m=1}^M\,d\nu_i(t)\right),
 \qquad (\rho,Z)=\sum_i(\rho_i,Z_i),\qquad z=Z/\rho.
 \label{eq:massstate}
\end{equation}
This update is associative, costs $O(M)$ per pairwise merge, and uses no learned parameters.
Mass denotes represented weight, which need not equal support length. If the compressor supplies
normalised mixture weights instead, \eqref{eq:merge} uses those weights directly.
Mass makes summaries reusable: equal normalised states can represent different weights
and yield different states after merging with the same third token. Carrying mass preserves
this distinction (\secref{sec:app_rep}).

Identify each query/key pair with a complex number. Apply the state by
$\widetilde q_{i,m}=z_{i,m}q_{i,m}$ and $\widetilde k_{j,m}=z_{j,m}k_{j,m}$.
Their real inner product uses $\overline z_{i,m}z_{j,m}$ and equals the expected RoPE logit
for independent support draws, conditional on fixed aggregated $q_i,k_j$
(expected-kernel identity, \secref{sec:app_expected}). This is an identity for the positional
contribution to pre-softmax logits at fixed pooled content.

For a uniform interval with centre $c$ and width $w$,
\begin{equation}
 z_m=e^{i\theta_m c}\operatorname{sinc}(\theta_m w/2),\qquad
 \operatorname{sinc}(u)=\sin(u)/u,\quad \operatorname{sinc}(0)=1.
 \label{eq:sinc}
\end{equation}
In real coordinates this yields
\begin{equation}
 \widetilde q_i^\top\widetilde k_j
 =\sum_m g_m(w_i)g_m(w_j)q_i^{(m)\top}R_m(c_j-c_i)k_j^{(m)},
 \qquad g_m(w)=\operatorname{sinc}(\theta_mw/2).
 \label{eq:logit}
\end{equation}
Here $R_m(t)$ denotes the planar rotation through angle $\theta_m t$.
The signed gain has envelope $\min(1,2/(|\theta_m|w))$ and the point limit is RoPE.
\figref{fig:aperture}b-c draw this gain and its consequence for the induced kernel: the
equal-width kernels retain a maximum at zero displacement, while unequal widths can have
opposite-signed gains. \Secref{sec:app_limits_properties} proves the point limit,
translation covariance and envelope.
A self-pair follows the same independent-draw convention by construction: the diagonal
term carries $g_m(w_i)^2=|z_{i,m}|^2$ rather than the identical-draw value $R_m(0)=I_2$.
The interval formula also follows from repeated equal-width merging
(\secref{sec:app_equalmerge}). Point-sampled tokens instead use the corresponding discrete
Fourier average (\secref{sec:app_atoms}); unequal-weight mixtures require the general state
(\secref{sec:app_measure}). In several dimensions,
replace $\theta t$ by $\omega^\top x$; a uniform box contributes one sinc factor per axis.

\paragraph{Exact storage and terminal normalisation.}
Let $r(z)=(M^{-1}\sum_m|z_m|^2)^{1/2}$. The terminal map
$h_\eta(z)=z/\max(\eta,r(z))$, $\eta>0$, controls attention magnitude while retaining
exact $(\rho,Z)$ for subsequent merges; storing $h_\eta(z)$ at every intermediate merge
generally loses that property. The \emph{Aperture RMS} readout applies this map to the support
code, or to its gains in interval-based studies, without overwriting the raw moments.
Protocol-specific floors appear in \secref{sec:app_cpu_state}.
For common nonzero gains and unrestricted query/key projections, gains can be absorbed into
those projections (gauge invariance at uniform width, \secref{sec:app_gauge}); heterogeneous
supports and constrained adaptation are therefore central to the empirical evaluation.

\subsection{Auditing aggregation consistency}
\label{sec:cpu_state}

Do different merge trees preserve the same weighted-support state?
The implementation retains mass and unnormalised complex moments. We compare direct summation
with balanced, left-branching, right-branching and random merge trees on weighted point,
interval and rectangle supports. Independent Gauss--Legendre quadrature checks the exposure
formulas. Across 64 mixtures in one and two dimensions, the largest raw-moment discrepancy
is $\RebuildRawMomentMax$ in float64 (\secref{sec:app_cpu_state}).

\begin{table}[htbp]
\centering
\small
\caption{\textbf{Tree equality and affinity are different.} Float64 checks on four adjacent intervals of widths $(0.2,0.7,0.8,1.3)$, with mass proportional to width. Tree discrepancy recomputes each readout from the final associative summary. Affine defect compares the parent code with the weighted child codes; it is a numerical witness, not an accuracy score.}
\label{tab:support_algebra}
\begin{tabular}{lcc}
\toprule
Readout & Tree discrepancy & Affine defect \\
\midrule
Centre RoPE & $<10^{-12}$ & $1.347$ \\
\textbf{exact Aperture} & $<10^{-12}$ & $<10^{-12}$ \\
Aperture RMS & $<10^{-12}$ & $0.432$ \\
MN-sinc & $<10^{-12}$ & $1.831$ \\
Hard cutoff & $<10^{-12}$ & $0.577$ \\
\bottomrule
\end{tabular}
\end{table}

\tabref{tab:support_algebra} contrasts exact Aperture with centre RoPE, terminal
root-mean-square (RMS) normalisation, mean-normalised sinc (MN-sinc), and a hard cutoff.
Every readout is tree-independent when computed from the same final associative summary;
only exact Aperture here also equals the weighted average of child codes.
Normalising at each internal merge and discarding the norm instead produces discrepancy
$\RebuildRmsTreeLoss$. Thus tree equality alone does not certify affine aggregation.

The trained studies distinguish the interval readouts from \emph{stored support}, which
uses general support moments, and Positional Preservation Embedding (PPE), which uses
original member coordinates. The 3B/7B and native spatial paths form moments from retained
membership; incremental storage is checked here. The small EPIC reader uses known intervals.
Published operators and their experimental adaptations are
specified in \secref{sec:app_related} and \secref{sec:app_gpu}.

\section{Extent identifiability under centre observations}
\label{sec:redundancy}

\subsection{A centre-preserving width family}

For $N\ge2$ contiguous intervals with widths $w_j$ and centres $c_j$, adjacency gives
\begin{equation}
 (Dw)_j=\tfrac12(w_j+w_{j+1})=c_{j+1}-c_j.
 \label{eq:offset}
\end{equation}
The nullspace is spanned by $u=(1,-1,1,-1,\ldots)$. Hence $w'=w+\delta u$ has the same
centre offsets whenever
$-\min_{u_j=1}w_j<\delta<\min_{u_j=-1}w_j$.
If the left endpoint moves from $a$ to $a-\delta/2$, every absolute centre also remains fixed.
The resulting supports are positive, ordered and contiguous; their boundaries have changed.

\begin{proposition}[Boundary and parity determine identifiability]
\label{prop:red}
Given absolute centres and the left boundary $a$, the widths are recovered by
\begin{equation}
 w_1=2(c_1-a),\qquad w_{j+1}=2(c_{j+1}-c_j)-w_j.
 \label{eq:redundancy}
\end{equation}
Absolute centres alone leave the alternating mode. An observed total span removes that
mode for odd $N$ and leaves it for even $N$.
\end{proposition}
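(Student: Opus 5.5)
The plan is to treat the statement as a linear system with the contiguity relations as constraints, and to settle the three claims in order: recovery from the boundary, the surviving alternating mode, and the parity behaviour of the total span.

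\textbf{Recovery from the left boundary.} With intervals $[a_j,a_{j+1}]$, $a_1=a$, we have $w_j=a_{j+1}-a_j$ and $c_j=(a_j+a_{j+1})/2$.
\begin{itemize}
\item The first interval gives $c_1=a+w_1/2$, so $w_1=2(c_1-a)$.
\item The offset identity \eqref{eq:offset} gives $c_{j+1}-c_j=\tfrac12(w_j+w_{j+1})$, which rearranges to the recursion in \eqref{eq:redundancy}.
\item By induction on $j$, every width is determined uniquely, so the map from widths to $(a,c_1,\ldots,c_N)$ is injective.
\end{itemize}

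\textbf{Centres alone.} Write the centre map in terms of the widths and $a$: $c_j=a+\sum_{k<j}w_k+w_j/2$. This map is linear in $(a,w)$, from $\mathbb R^{N+1}$ to $\mathbb R^N$, so its kernel is nontrivial.
\begin{itemize}
\item By the injectivity just shown, fixing $a$ kills the kernel, so the kernel is exactly one-dimensional.
\item The construction preceding the proposition exhibits a kernel element: $w\mapsto w+\delta u$ together with $a\mapsto a-\delta/2$.
\item To verify this directly, note that $c_1=a+w_1/2$ shifts by $-\delta/2+\delta/2=0$, and all offsets are unchanged because $Du=0$.
\item For $\delta$ in the stated open interval, all widths stay positive. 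The set of admissible positive configurations sharing the centres is therefore a nontrivial segment, so centres do not determine widths.
\end{itemize}

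\textbf{Total span.} The span is $S=\sum_j w_j$, and along the mode it changes by $\delta\sum_j u_j$.
\begin{itemize}
\item For even $N$, $\sum_j u_j=0$. The whole segment shares centres and span, so the ambiguity remains.
\item For odd $N$, $\sum_j u_j=1$. Any two configurations with equal centres differ by some $\delta u$, because the kernel is one-dimensional. Equal span then forces $\delta=0$, which gives identifiability.
\end{itemize}
One could also phrase the odd case constructively: $a=(c_1+\cdots)$ solved from the span equation. The kernel argument, however, avoids having to write out that explicit formula.

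The main subtlety is interpretive rather than computational. "Total span" must mean the length $S$, not the endpoints. If the endpoints were given, the left boundary would already settle everything.
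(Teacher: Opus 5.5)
Your proposal is correct and follows essentially the paper's argument: the boundary recursion recovers the widths, the alternating mode with the compensating shift $a\mapsto a-\delta/2$ preserves every absolute centre, and the span changes by $\delta\sum_j u_j$, which is $\delta$ for odd $N$ and $0$ for even $N$. The only difference is cosmetic. You bound the fibre of the centre map to one dimension through the kernel of the linear map $(a,w)\mapsto c$, whereas the paper uses boundary coordinates and a triangular Jacobian with diagonal $1/2$; your version states the odd-$N$ uniqueness step a little more explicitly.
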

The proof is in \secref{sec:app_anchored}; \tabref{tab:nullity} tabulates the nullity under four observation models.
A beginning-of-sequence token or causal ordering supplies a boundary only when its numeric position is tied
to the observed chain; the theorem applies to an explicit geometric interface,
not to every transformer architecture by default.

\begin{minipage}{\linewidth}
\centering
\small
\captionof{table}{\textbf{Nullity of the width equations}
under four observation models (anchor audit of \secref{sec:app_cpu_queries}).}
\label{tab:nullity}
\begin{tabular}{@{}lcc@{}}
\toprule
Observation & $N{=}8$ & $N{=}9$ \\
\midrule
Relative offsets only & 1 & 1 \\
Absolute centres & 1 & 1 \\
Absolute centres, left boundary & 0 & 0 \\
Absolute centres, total span & 1 & 0 \\
\bottomrule
\end{tabular}
\end{minipage}

\label{sec:breakage}
\thmref{thm:reach}
(\secref{sec:app_relative}) gives the
identified width interval and the nullity after eviction; the reconstruction studies in
\tabref{tab:ladder} (\secref{sec:app_ladder}) distinguish optimal ambiguity from fitted-reader error; finite-span recovery is examined
in \secref{sec:app_span}.
In two dimensions, identical rectangle-centre sets can coexist with different extents
(planar centre ambiguity, \secref{sec:app_twod}), though the support-state construction itself applies in either dimension.

\subsection{Matched support queries}
\label{sec:ladder}
\label{sec:spanlaw}

Which observations resolve the ambiguity left by centres?
The alternating family provides paired examples with identical centres, payload, token order
and sequence length: \figref{fig:support_ambiguity} draws one such pair, whose two chains share
every interval centre, differ in every boundary, and place the marked query in the first
interval only in the lower chain. We query a Fourier integral over a selected token's support,
which changes
with its width. A reader observing only the matched inputs must return the same answer for both
members. Its squared-error lower bound is one quarter of their squared target difference.
Supplying width, endpoints, or the relevant support moment resolves this constructed ambiguity.
The query probes support metadata; the label of a real onset at a fixed event time is unchanged.

\begin{figure}[t]
\centering
\includegraphics[width=\linewidth]{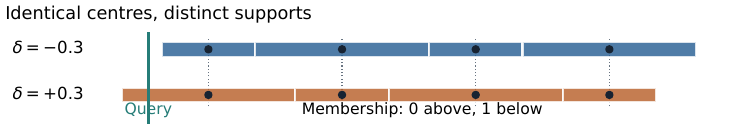}
\caption{\textbf{A width change invisible to centre observations.}
Two valid tilings have the same centres and different boundaries ($\delta=\pm0.3$ on the
alternating mode).
The marked query point belongs to the first interval only in the lower chain; content can be
identical in this constructed pair. Which observations break the ambiguity is algebraic:
\propref{prop:red} states the boundary and parity dependence, and
\tabref{tab:nullity} tabulates the nullity of the width equations for $N=8$ and $N=9$.}
\label{fig:support_ambiguity}
\end{figure}

Across 1,024 held-out base streams, the paired centre-only lower bound is
$\RebuildQueryBound$ mean squared error, with a base-stream bootstrap interval
$[\RebuildQueryBoundLo,\RebuildQueryBoundHi]$.
Width and endpoint decoders recover the queried integral to numerical precision; the relevant
Fourier coordinate supplies it directly, and an RMS state admits an exact decoder on the
chosen bounded width range (\secref{sec:app_cpu_queries}). The experiment identifies missing
information at the centre-only interface, without ranking these information-equivalent encodings.

\section{Predictive effects of support representations}
\label{sec:real}

\subsection{In-distribution accuracy and compression shift}
\label{sec:instrument}
\label{sec:transfer}

Does width information transfer differently when supplied through content or rotary interactions?
Our EPIC-Kitchens-100 task asks a causally masked reader when a queried appearance change occurs
\citep{epickitchens}. Each $\GeomWindow$\,s window contains frozen SigLIP\,2 video and
Whisper audio features \citep{siglip2,whisper}; the query supplies content identity and the
reader predicts one of $\GeomNbins$ onset bins. Training, validation and test participants
are disjoint; the fixed test population spans \EpicTestParticipants{} participants
(\EpicTestParticipantList) over \EpicTestVideos{} videos. Construction and checkpoint
selection are in
\secref{sec:app_onset}; statistical estimators are in \secref{sec:app_stats}.
Additional controls vary retention rules (\secref{sec:app_pub}) and reader depth and
attention window (\secref{sec:app_depth}).

\begin{figure}[t]
\centering
\includegraphics[width=\linewidth]{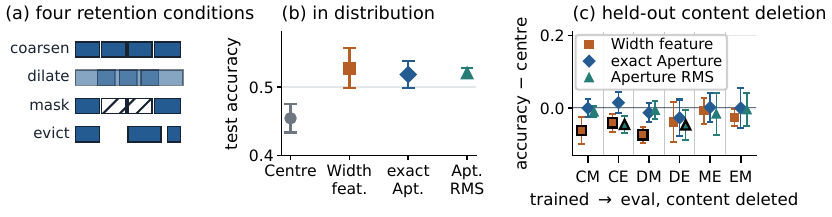}
\caption{\textbf{Retention conditions and transfer under content deletion.}
\textbf{(a)} The four conditions: coarsen (C) merges adjacent supports into a plain tiling; dilate (D)
widens them into overlap; mask (M) zeroes payloads in place (hatched); evict (E) deletes slots (gaps).
\textbf{(b)} Coarsen-trained, coarsen-tested accuracy per arm on the five retained checkpoints;
bars are 95\% $t$ intervals, and the $y$ axis is truncated to $[0.40,0.58]$.
\textbf{(c)} Accuracy minus the centre-only arm in the six train$\to$eval cells whose evaluation
deletes content (cell codes are the panel-a conditions, train$\to$eval),
paired within seed; bars are one seed standard deviation, and a black rim marks
cells below centre on every seed (width feature \GeomTransBelowUnanWfeat; Aperture RMS
\GeomTransBelowUnanApertureRms; exact Aperture \GeomTransBelowUnanAperture).
Full cells and sample sizes are in \secref{sec:app_trans}.}
\label{fig:epic}
\end{figure}

Width information can enter the content pathway as a scalar feature, or modulate query/key
interactions through the support gain. In distribution, both recover much of the improvement
over a centre-only clock (\figref{fig:epic}b). Across held-out content-deleting policies, the
feature arm differs from the clock by $\GeomTransPoolVsWfeat$, and exact Aperture by
$\GeomTransPoolVsAperture$. The direct Aperture-minus-feature contrast is
$\GeomTransPoolPlAperture$, with a five-seed $t$ interval
$[\RebuildTransferLo,\RebuildTransferHi]$ and an exact two-sided sign-flip $p=0.0625$.
We report this five-seed contrast descriptively.

\subsection{Interventions distinguish dependence from benefit}
\label{sec:predictions}

Does a reader depend on widths, or benefit from them? For a fixed model, width sensitivity is
\begin{equation}
 \Delta_w=\operatorname{acc}-\operatorname{acc}(\text{permuted widths}).
 \label{eq:gap}
\end{equation}
It measures dependence on metadata; accuracy differences between separately trained arms
measure predictive benefit. The centre-only arm has exactly zero sensitivity.
\figref{fig:epic}a draws the four conditions: coarsen keeps a plain tiling, dilate breaks it by
overlap, mask deletes content in place, and evict deletes slots.
On the four-policy study, dilation retains more width sensitivity than masking, with
$D-M=\GeomEffInversion$ for Aperture RMS and $\GeomPlainInversion$ for exact Aperture.
These are contrasts between interventions. Overlap, masking and eviction do not form a
fully crossed geometry/content design (\secref{sec:factorial}). Live-slot and sham-width
controls establish the scope of the permutation result (\secref{sec:app_wshuf}, \secref{sec:app_sham}).

Complete overlap-by-mask and gap-by-mask crossings on a continuous onset signal recompute
each token's integral under the physical-support protocol and hold it fixed under the
metadata protocol, keeping the event time and mask draw matched across
cells. These CPU controls separate support mismatch from content removal and expose the role
of a boundary-aware centre decoder (\secref{sec:app_crossed}); the trained
real-video crossing is reported in \secref{sec:gpu}.

\subsection{Sampling rate and gain calibration}
\label{sec:synthetic}
\label{sec:ratetask}
\label{sec:mech}
\label{sec:closedform}
\label{sec:synthlimits}

Can the same support readout transfer across sampling rates?
A continuous-time audiovisual task holds the event and lag label fixed while changing the
sampling rate. Each token is an analytic integral over its support, and training includes
random contiguous merges. At $\VseFarRate$\,fps, compared with $\VseTrainRate$\,fps in training,
Aperture RMS and the width-feature arm reach $\VseEvApertureRmsFar$ and
$\VseEvWallclockWfeatFar$, respectively. The mean gap is driven by variable feature-arm
performance; its paired test does not survive the full multiplicity correction.
A hard cutoff also transfers well, and energy normalisation benefits this rate-shift setting.
Together with the real policy comparison, these results make placement and calibration
empirical choices. The generator, full seed distributions and rate curves are in
\secref{sec:app_rate} and \figref{fig:rate}; gain-argument coverage is examined in
\secref{sec:app_coverage}.

\subsection{Controlled large-model comparisons}
\label{sec:scale}

Do differences between interval codes persist when their gain magnitude is matched?
Qwen2.5-Omni 3B/7B \citep{qwen25omni} experiments use frozen audiovisual towers
(\secref{sec:app_tmrope} diagnoses the 7B tower's temporal rotary configuration without
adaptation), low-rank adaptation (LoRA) and a shared
content-similarity readout aid. Training uses adjacent merge budgets
$b\in\{\ScaleTrained\}$; the primary held-out budget is $b=\ScaleFarBudget$.
\tabref{tab:scale_summary} reports $\ScaleNPaired$ matched seeds at each scale; training details and gain controls appear in
\secref{sec:app_scale} and \secref{sec:app_ladder_scale}.

\begin{table}[t]
\centering
\setlength{\tabcolsep}{4pt}
\caption{\textbf{Large-model contrasts at the primary held-out budget.} Mean paired difference and 95\% $t$ interval on 24 matched seeds per comparison. Intervals are unadjusted and in accuracy units; family-wise conclusions are stated in the text. Matched-RMS rescales exact Aperture to the temporal gain RMS of MN-sinc.}
\label{tab:scale_summary}
\small
\begin{tabular}{@{}lcc@{}}
\toprule
Paired accuracy contrast & 3B & 7B \\
\midrule
exact Aperture $-$ centre only & $+0.0259\;[-0.0001,+0.0518]$ & $+0.0018\;[-0.0214,+0.0249]$ \\
exact Aperture $-$ width feature & $+0.0114\;[-0.0134,+0.0362]$ & $+0.0256\;[+0.0004,+0.0508]$ \\
exact Aperture $-$ MN-sinc & $+0.0597\;[+0.0294,+0.0900]$ & $+0.0549\;[+0.0298,+0.0800]$ \\
Matched-RMS $-$ MN-sinc & $+0.0214\;[-0.0140,+0.0568]$ & $+0.0073\;[-0.0198,+0.0345]$ \\
\bottomrule
\end{tabular}
\end{table}

Matching gain RMS substantially reduces the mean Aperture--MN-sinc difference at both
scales (\tabref{tab:scale_summary}), making gain magnitude important to this comparison.
The centre-only intervals include zero at the primary budget. At exploratory
$b=\LadWideBudget$, exact Aperture's token-accuracy difference from centre-only is
$\LadTokMEight$ at 3B and $\SevLadTokMEight$ at 7B
(Holm-adjusted $p=\LadTokMEightAdj$ and $p=\SevLadTokMEightAdj$, respectively;
$\LadNSeeds$ paired seeds). This metric counts selection of the onset-bearing token
(\secref{sec:app_ladder_scale}). \Secref{sec:gpu} tests compression-aware adaptation and
native model behaviour under a frozen protocol.

\subsection{Native interface comparisons}
\label{sec:gpu}

Does preserving support improve a model's native predictions, and at what cost?
The protocol fixes the evaluation cells, seeds, checkpoint selection and five primary
contrasts before execution (\secref{sec:app_gpu}). Paired-seed intervals measure variation
across trained models; a crossed bootstrap also resamples participant--video--item clusters.
The five contrasts form a separate Holm family. \tabref{tab:gpu_pending} reports both
uncertainty analyses, and \figref{fig:gpu_eval}a shows the paired seed differences.
The EPIC bootstrap resamples \GpuBootNParticipants{} top-level participants; the systems
measurements use one A800-SXM4-80GB device class.

\begin{table}[t]
\centering
\small
\caption{\textbf{Native interface contrasts, in accuracy units.} Stored support minus the
named reference (centre-only unless specified). Onset uses ten paired seeds and 1,024
items; spatial question answering uses five seeds and 4,000 items. Seed $t$ intervals
condition on the test set; crossed bootstrap intervals resample seeds and population
clusters. Both 95\% intervals are unadjusted; Holm correction covers the five seed tests.}
\label{tab:gpu_pending}
\begin{tabular}{@{}lccc@{}}
\toprule
Comparison & Difference & 95\% seed interval & 95\% crossed bootstrap \\
\midrule
Qwen2.5-Omni-3B: unseen policy & $+0.0001$ & $[-0.0068,+0.0070]$ & $[\GpuBootGTwoPolicyLo,\GpuBootGTwoPolicyHi]$ \\
Qwen2.5-Omni-3B: unseen rate & $+0.0007$ & $[-0.0078,+0.0092]$ & $[\GpuBootGTwoRateLo,\GpuBootGTwoRateHi]$ \\
Qwen2.5-Omni-3B: width feature & $-0.0007$ & $[-0.0062,+0.0049]$ & $[\GpuBootGTwoPlacementLo,\GpuBootGTwoPlacementHi]$ \\
Qwen2.5-Omni-7B: unseen policy & $+0.0007$ & $[-0.0028,+0.0042]$ & $[\GpuBootGThreePolicyLo,\GpuBootGThreePolicyHi]$ \\
Qwen2.5-VL-7B: spatial / PPE & $-0.0149$ & $[-0.0175,-0.0123]$ & $[\GpuBootGFourSpatialLo,\GpuBootGFourSpatialHi]$ \\
\bottomrule
\end{tabular}
\end{table}

\begin{figure}[t]
\centering
\includegraphics[width=\linewidth]{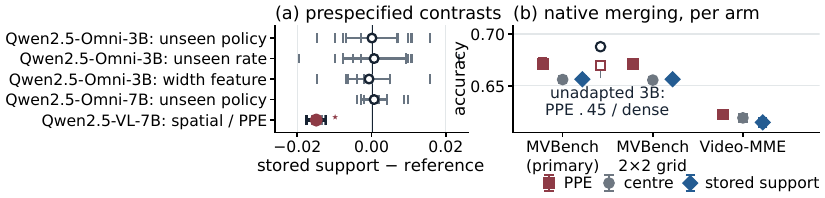}
\caption{\textbf{Native interface comparisons.}
\textbf{(a)} The five contrasts of the frozen manifest (stored-support accuracy minus the named
reference): effect, 95\% paired-seed $t$ interval, and per-seed differences as ticks. The filled
marker ($\star$) is the only contrast significant after Holm correction (adjusted $p=4.7\times10^{-4}$).
\textbf{(b)} Native merging per arm -- MVBench at the deployed selector (primary cell), MVBench
$2{\times}2$ grid, Video-MME -- as seed means with 95\% $t$ intervals over five shared seeds;
$y$ axis truncated to $[0.605,0.708]$. Hollow markers: unadapted 3B references (PPE at retention
$0.45$; dense, complete media), not adapted-7B comparators.}
\label{fig:gpu_eval}
\end{figure}

\paragraph{Real-video crossings.}
On EPIC-Kitchens with frozen features, each method and seed uses a separate model for each
crossing and payload protocol, evaluated in its four geometry$\times$content cells
(\tabref{tab:gpu_crossed_pending}). Masking half the payloads while preserving onset-bearing tokens raises accuracy by $0.116$
to $0.130$ in all 28 groups. This target-preserving intervention measures the effect of
removing surrounding content while retaining token timing. Supplied
support geometry moves accuracy by at most $0.006$ for the
stored-support reader; for the centre-only reader it vanishes under the metadata
protocol by construction, and under the physical protocol it moves only through the
recomputed payload. The full grid's largest geometry effect is $0.0069$, its
largest interaction $0.0081$.

\paragraph{Compression-aware adaptation at 3B and 7B.}
The 3B grid trains LoRA adapters at 8\,fps and evaluates across unseen policies, fractions
and frame rates (\figref{fig:gpu_null}). At the frozen
budget (3,000
steps, learning rate $10^{-4}$) all three prespecified intervals include zero
(\tabref{tab:gpu_pending}); the 7B confirmation gives the same unresolved ranking.
This grid uses a learned readout without the content-similarity aid of
\tabref{tab:scale_summary}.
Accuracies span $\GpuGridAccMin$--$\GpuGridAccMax$ against chance $1/12$.
An exploratory 3B comparison at ten times the learning rate also remains near chance.
The near-chance grid leaves the interface ranking unresolved; \secref{sec:app_gpu}
reports the full distributions and training-batch probe.

\paragraph{Native spatial merging.}
The deployed-interface comparison adapts Qwen2.5-VL-7B \citep{qwen25vl} on a declared
video-QA mixture, evaluating the complete MVBench \citep{mvbench} test split through the
original question/answer interface at one-quarter retention. Stored support
scores $0.6563$ against $0.6712$ for PPE under the released clustering interface, using
original member coordinates: $-0.0149$ (raw $p=9.4\times10^{-5}$, Holm-adjusted
$4.7\times10^{-4}$), all five seed differences
negative. The direction repeats under $2{\times}2$ grid merging ($0.6714$ vs $0.6563$) and
on Video-MME \citep{videomme} ($0.6224$ vs $0.6146$), while centre-only and stored support stay
within $0.005$ throughout ($0.6560$ vs $0.6563$ on MVBench; \figref{fig:gpu_eval}b).
The native rule's advantage persists across the tested selectors and benchmarks.

\paragraph{Request timing and memory.}
At one-quarter retention, classification requests including media decoding have centre-only
and stored-support medians of $5.19$\,s and $5.20$\,s, with identical $22.5$\,GiB peaks.
The other classification points also have similar medians and identical peaks
(\tabref{tab:classification_cost}). Generative timing starts after media preprocessing.
On the 256-item MVBench timing subset, PPE, centre-only and stored
support take $984$, $931$ and $1{,}334$\,ms, respectively; timing-subset accuracies are
$0.777$, $0.676$ and $0.692$. Charged positional buffers occupy $0.527$\,MiB for stored
support versus $4.237$\,MiB for PPE, beside a $320$\,MiB key--value cache. The timer covers
model feature extraction, compression, prefill and autoregressive decoding; variable output
lengths contribute to latency. Complex moments are computed from membership and not
separately metered (\secref{sec:app_gpu}).

\section{Related work}
\label{sec:related}

\paragraph{Fourier features and support.}
RoPE rotates queries and keys by relative displacement \citep{rope}.
Integrated positional encodings average Fourier features over spatial regions
\citep{mipnerf,ipesr,exactnerf}; finite Fourier features approximate kernels
\citep{randomfeatures}, and mean embeddings represent distributions by feature expectations
\citep{kernelmean,hilbertembed}. TIE integrates rotary features over intervals \citep{tie}.
We characterise the affine state preserved through weighted merges and establish
its dimension minimality among continuous sufficient states.

\paragraph{Position in compressed multimodal models.}
PPE assigns original member position IDs to rotary channel groups \citep{ppe}; Sync-RoPE
reallocates temporal and spatial channels for audiovisual alignment \citep{echoingpixels}.
Position interpolation, frequency scaling and multimodal coordinates make complementary
clock and axis choices \citep{pi,yarn,qwen2vl,videorope}; we distinguish them from the
MN-sinc and width-capped phase controls (\secref{sec:app_related}).

\paragraph{Compression and evaluation.}
Merging changes individual token supports; retention changes the set available to attention
\citep{tome,prumerge,apollo,h2o,snapkv,streamingllm}. Aperture supplies positional state to
such operations, not token selection. Native multimodal models and
streaming benchmarks test broader behaviour \citep{qwen25omni,streamingbench,ovobench,streamingeval}. Our comparisons isolate positional interfaces within fixed
compression policies and readouts.

\section{Conclusion and limitations}
\label{sec:discussion}

Aperture carries the moments sufficient for expected rotary interactions through weighted
token aggregation. Normalisation is a readout choice. The centre-preserving width
family identifies the ambiguity in point coordinates and which boundary observations resolve it.

The empirical findings separate the state's algebraic guarantee from its use by a trained
reader. Temporal transfer varies with placement and gain calibration. Folding interval gains costs
$\CostFoldOverRope$ rotary-step overhead (\secref{sec:app_cost}), while native request
cost includes membership processing and response length (\secref{sec:gpu}). Storage,
readout and deployment cost therefore require separate evaluation.

\paragraph{Limitations.}
The guarantee assumes a known support measure and shared nonnegative aggregation weights.
A finite frequency bank preserves selected interactions, not arbitrary source content;
learning reliable support metadata is a further problem. The temporal tests cover
\EpicTestParticipants{} EPIC test participants, and native question answering uses one
model interface. The near-chance onset adaptation leaves the ranking of successfully
trained interfaces unresolved. Extending the
evaluation to well-trained onset readers and other native interfaces would test where the
retained information improves prediction.

\FloatBarrier
\label{manuscript:bodyend}
\section*{Reproducibility statement}
The support assumptions and proofs appear in Appendix~\ref{sec:app_proofs}.
Appendix~\ref{sec:app_new} specifies the executable CPU audits and
Appendix~\ref{sec:app_gpu} the GPU protocol, selection rules and recorded outcomes. Appendix~\ref{sec:app_experiments} specifies the experimental protocols, estimators and
complete result grids. The typesetting source includes all numerical tables, figures and bibliography entries.

\section*{AI use statement}
Generative AI assisted code development, manuscript writing, mathematical and reference checks, the authors reviewed and verified all AI-assisted work.

\bibliography{references}
\bibliographystyle{iclr2027_conference}

\clearpage
\appendix
\section{Mathematical analysis}
\label{sec:app_proofs}

For a uniform interval, write $c$ for its centre and $w$ for its width.
Let $R_m(t)$ denote the planar rotation through angle $\theta_m t$.
With the convention $\tilde q_i^{(m)}=R_m(c_i)q_i^{(m)}$ and
$\tilde k_j^{(m)}=R_m(c_j)k_j^{(m)}$,
the unmodified channel logit is $q_i^{(m)\top}R_m(c_j-c_i)k_j^{(m)}$.

\subsection{Expected rotary interactions}
\label{sec:app_expected}

\begin{proposition}[Expected-kernel identity]
\label{prop:kernel}
Let $t_i\sim\mu_i$ and $t_j\sim\mu_j$ be independent. The expectation of
$R_m(t_j-t_i)$ is the real matrix representing multiplication by
$\overline{\phi_{\theta_m}(\mu_i)}\phi_{\theta_m}(\mu_j)$.
In particular, if both measures are uniform intervals with centres $c_i,c_j$
and widths $w_i,w_j$, then
\begin{equation}
  \mathbb{E}[R_m(t_j-t_i)]
  =g_m(w_i)g_m(w_j)R_m(c_j-c_i).
  \label{eq:expected}
\end{equation}
For fixed query and key vectors, the expected RoPE logit is therefore
\eqref{eq:logit}.
\end{proposition}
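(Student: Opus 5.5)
The plan is to work in complex coordinates throughout, identifying each rotary plane $\mathbb R^2$ with $\mathbb C$. Under this identification, $R_m(t)$ is the real matrix of multiplication by $e^{i\theta_m t}$. Since $t\mapsto R_m(t)$ is bounded and continuous, its expectation is defined entrywise, and expectation commutes with the linear map sending a complex number to its multiplication matrix. It therefore suffices to compute the scalar expectation $\mathbb E[e^{i\theta_m(t_j-t_i)}]$ and then return to real matrices.

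\textbf{Step 1 (factorisation).} Write $e^{i\theta_m(t_j-t_i)}=\overline{e^{i\theta_m t_i}}\,e^{i\theta_m t_j}$. Because $t_i$ and $t_j$ are independent, the joint law is the product measure $\mu_i\otimes\mu_j$. Fubini then applies, since the integrand is bounded by $1$, and gives
\[
\mathbb E\bigl[e^{i\theta_m(t_j-t_i)}\bigr]=\overline{\phi_{\theta_m}(\mu_i)}\,\phi_{\theta_m}(\mu_j),
\]
using $\overline{\int e^{i\theta t}d\mu}=\int e^{-i\theta t}d\mu$ from Definition~\ref{def:aperture}. This proves the general claim.

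\textbf{Step 2 (uniform intervals).} For the uniform measure on $[c-w/2,\,c+w/2]$, substitute $t=c+s$:
\[
\phi_\theta=e^{i\theta c}\,\frac{1}{w}\int_{-w/2}^{w/2}e^{i\theta s}\,ds=e^{i\theta c}\,\frac{\sin(\theta w/2)}{\theta w/2},
\]
which is \eqref{eq:sinc}. The point case $w=0$ is covered by $\operatorname{sinc}(0)=1$. Taking the product from Step 1, the gains $g_m(w_i)$ and $g_m(w_j)$ are real, so conjugation acts only on the phase. The product is therefore $g_m(w_i)g_m(w_j)e^{i\theta_m(c_j-c_i)}$. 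Converting back, real scalars act as scalar matrices, and this gives \eqref{eq:expected}.

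\textbf{Step 3 (logit).} For fixed $q_i^{(m)}$ and $k_j^{(m)}$, the channel logit $q_i^{(m)\top}R_m(t_j-t_i)k_j^{(m)}$ is linear in the matrix. Linearity of expectation lets the expectation pass inside, and substituting \eqref{eq:expected} and summing over $m$ yields \eqref{eq:logit}.

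I expect no serious obstacle. The only care needed is in the conventions: the conjugate must land on $\mu_i$, matching the appendix convention that the logit is $q^\top R_m(c_j-c_i)k$, and the correspondence between complex multiplication and the rotation matrix must use the same orientation as $R_m$.
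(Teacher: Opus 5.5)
Your proposal is correct and follows the paper's proof essentially step for step. You identify each rotary pair with $\mathbb{C}$, factor $\mathbb{E}[e^{i\theta_m(t_j-t_i)}]$ by independence into $\overline{\phi_{\theta_m}(\mu_i)}\phi_{\theta_m}(\mu_j)$, compute the uniform-interval transform $e^{i\theta c}\operatorname{sinc}(\theta w/2)$ by direct integration, and pass back to real matrices and through the bilinear logit by linearity; your added justifications (entrywise expectation commuting with the multiplication-matrix map, Fubini for the bounded integrand, realness of the gains) fill in details the paper leaves implicit.
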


\begin{proof}
Identify each channel pair with $\mathbb{C}$. Independence gives
\begin{equation}
  \mathbb{E}[e^{i\theta(t_j-t_i)}]
  =\overline{\mathbb{E}[e^{i\theta t_i}]}\,
    \mathbb{E}[e^{i\theta t_j}]
  =\overline{\phi_\theta(\mu_i)}\phi_\theta(\mu_j).
  \label{eq:prod}
\end{equation}
For a uniform interval, direct integration gives
$\phi_\theta(\mu)=e^{i\theta c}\operatorname{sinc}(\theta w/2)$.
Substitute this expression into \eqref{eq:prod} and take its real matrix
representation. Multiplication on the left by $q_i^{(m)\top}$ and on the right
by $k_j^{(m)}$, followed by summation over channels, proves the logit identity.
\end{proof}

For a general support symmetric about $c$, the centred transform is real:
$\phi_\theta(\mu)=e^{i\theta c}\int\cos(\theta(t-c))\,d\mu(t)$.
Symmetry alone does not imply the uniform-interval $\operatorname{sinc}$ gain.
An asymmetric support generally has a frequency-dependent phase in addition
to its magnitude. The product identity \eqref{eq:prod} remains valid in both cases.

The interval gains are signed. A channel interaction reverses relative to its
centre-only value when the two gains have opposite signs; two negative gains
have a positive product. Thus replacing the gain by its absolute value changes
some expected interactions. This algebraic difference does not imply an
accuracy difference: the sign and first-lobe ablations in
\secref{sec:app_ladder_scale} do not statistically distinguish the trained models.

The identity also has two probabilistic boundaries. It conditions on fixed,
already aggregated query and key vectors, and uses independent support draws.
It does not equate the expectation of softmax with softmax of the expected logits.
Even for a self-pair, independent draws from the same support are different
from assigning the same sampled position to both sides.

\subsection{Point limit, translation and attenuation}
\label{sec:app_limits_properties}

\begin{lemma}[Point-support limit]
\label{lem:pointlimit}
If $\mu$ converges weakly to $\delta_c$, then $\phi_\theta(\mu)\to e^{i\theta c}$.
For uniform intervals, the zero-width gain is exactly $\operatorname{sinc}(0)=1$.
\end{lemma}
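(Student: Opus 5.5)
The first claim follows from the definition of weak convergence. By Definition~\ref{def:aperture}, $\phi_\theta(\mu)=\int e^{i\theta t}\,d\mu(t)$. Split the integrand into its real and imaginary parts, $\cos(\theta t)$ and $\sin(\theta t)$. Both are bounded and continuous functions of $t$ on $\mathbb R$, or on the interval $D$ when the measures live there. Weak convergence $\mu\to\delta_c$ means $\int f\,d\mu\to\int f\,d\delta_c=f(c)$ for every bounded continuous $f$. Applying this to $\cos(\theta t)$ and to $\sin(\theta t)$ gives
\begin{equation*}
\phi_\theta(\mu)\to\cos(\theta c)+i\sin(\theta c)=e^{i\theta c}.
\end{equation*}
If the measures are complex-valued or a sequence/net formulation is needed, the same argument applies componentwise. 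No uniformity in $\theta$ is claimed, so fixing $\theta$ is enough.

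For the uniform-interval statement, use the closed form \eqref{eq:sinc}, $\phi_\theta(\mu)=e^{i\theta c}\operatorname{sinc}(\theta w/2)$. It is derived by direct integration in the proof of \propref{prop:kernel}. The convention $\operatorname{sinc}(0)=1$ makes the gain equal to $1$ at $w=0$. The degenerate interval of width zero is $\delta_c$, so this value agrees with $\phi_\theta(\delta_c)=e^{i\theta c}$.

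Two consistency checks link the two parts. First, $\operatorname{sinc}(u)=\sin u/u\to1$ as $u\to0$, so the gain is continuous at zero width and the convention is not arbitrary. Second, uniform intervals of width $w\to0$ with fixed centre $c$ converge weakly to $\delta_c$, since for bounded continuous $f$ the average of $f$ over $[c-w/2,c+w/2]$ tends to $f(c)$. The first part therefore also yields the limit $e^{i\theta c}\operatorname{sinc}(\theta w/2)\to e^{i\theta c}$.

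No step is a real obstacle. The only point needing care is to state the zero-width case as a definition, made by the $\operatorname{sinc}(0)=1$ convention, and to justify it as the continuous extension of the closed form rather than as a consequence of it.
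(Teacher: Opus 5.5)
Your proof is correct and matches the paper's argument: the paper also applies weak convergence to the bounded continuous test function $e^{i\theta t}$, and takes the zero-width case directly from the convention $\operatorname{sinc}(0)=1$. Your two consistency checks are sound but not needed for the lemma: continuity of $\operatorname{sinc}$ at zero, and weak convergence of shrinking uniform intervals to $\delta_c$.
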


\begin{proof}
Weak convergence $\mu\to\delta_c$ implies
$\int e^{i\theta t}\,d\mu(t)\to e^{i\theta c}$ because the test function is bounded
and continuous. For uniform intervals, $\operatorname{sinc}(0)=1$ gives the
exact zero-width limit.
\end{proof}

\begin{lemma}[Translation covariance]
\label{lem:transcov}
Let $(\tau_a\mu)(B):=\mu(B-a)$. Then
$\phi_\theta(\tau_a\mu)=e^{i\theta a}\phi_\theta(\mu)$, and the interaction of
jointly translated supports depends only on their relative displacement and shapes.
\end{lemma}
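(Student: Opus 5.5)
The identity $\phi_\theta(\tau_a\mu)=e^{i\theta a}\phi_\theta(\mu)$ follows from a change of variables in Definition~\ref{def:aperture}; the claim about interactions then follows from Proposition~\ref{prop:kernel}.

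For the first part, $\tau_a\mu$ is the pushforward of $\mu$ under the shift $t\mapsto t+a$. Indeed, $(\tau_a\mu)(B)=\mu(B-a)=\mu(\{t: t+a\in B\})$. The pushforward (change-of-variables) formula for integrals of bounded measurable functions then gives
\[
 \phi_\theta(\tau_a\mu)=\int e^{i\theta s}\,d(\tau_a\mu)(s)=\int e^{i\theta(t+a)}\,d\mu(t)=e^{i\theta a}\phi_\theta(\mu).
\]
Pulling the constant phase out of the integral is legitimate because the integrand is bounded and $\mu$ is a finite measure. The same computation applies verbatim to the unnormalised measures $\nu_i=\rho_i\mu_i$ of \eqref{eq:massstate}, since total mass is translation invariant.

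For the second part, I take the interaction at frequency $\theta_m$ to be the quantity identified in Proposition~\ref{prop:kernel}, namely $\overline{\phi_{\theta_m}(\mu_i)}\phi_{\theta_m}(\mu_j)$, or equivalently its real matrix $\mathbb E[R_m(t_j-t_i)]$. Translating both supports by the same $a$ and applying the first part gives
\[
 \overline{e^{i\theta_m a}\phi_{\theta_m}(\mu_i)}\,e^{i\theta_m a}\phi_{\theta_m}(\mu_j)=\overline{\phi_{\theta_m}(\mu_i)}\phi_{\theta_m}(\mu_j),
\]
because $|e^{i\theta_m a}|=1$. So the interaction is invariant under joint translation.

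To make "depends only on relative displacement and shapes" precise, I write each support as $\mu_k=\tau_{c_k}\mu_k^0$, where $c_k$ is any chosen reference point, such as the mean or the interval centre, and $\mu_k^0$ is the shape. The interaction then equals
\[
 e^{i\theta_m(c_j-c_i)}\,\overline{\phi_{\theta_m}(\mu_i^0)}\phi_{\theta_m}(\mu_j^0),
\]
a function of $c_j-c_i$ and the two shapes only. Its real matrix is $R_m(c_j-c_i)$ times the matrix of the shape factor. For uniform intervals this recovers \eqref{eq:expected}, and substituting into the fixed-$q,k$ logit gives \eqref{eq:logit}.

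There is no substantive obstacle. The only care needed is to state which object "interaction" denotes: the complex product, or the expected logit at fixed aggregated $q_i,k_j$. I would also note that shape is defined only up to the choice of reference point, and that changing the reference point merely reassigns a phase between the displacement term and the shape factor.
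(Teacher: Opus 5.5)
Your proposal is correct and follows essentially the same route as the paper: the shift identity by change of variables, then cancellation of the unit-modulus phases $e^{\pm i\theta a}$ in the conjugate product of Proposition~\ref{prop:kernel}. Your decomposition $\mu_k=\tau_{c_k}\mu_k^0$ makes explicit the ``relative displacement and shape'' claim that the paper's three-line proof leaves implicit.
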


\begin{proof}
For the translated measure $\tau_a\mu$,
$\phi_\theta(\tau_a\mu)=e^{i\theta a}\phi_\theta(\mu)$.
The two factors cancel in the conjugate product of jointly translated supports.
The interaction consequently depends on relative displacement and support
shape, rather than on the shared time origin.
\end{proof}

\begin{lemma}[Spectral envelope]
\label{lem:envelope}
The uniform-interval gain satisfies
$|\operatorname{sinc}(u)|\le\min(1,1/|u|)$ for $u=\theta w/2$, with zeros at
$\theta w=2k\pi$ for nonzero integers $k$.
\end{lemma}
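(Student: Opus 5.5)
The plan is to prove the two bounds separately, working directly with $u=\theta w/2$. Both claims are elementary facts about $\operatorname{sinc}(u)=\sin(u)/u$, with $\operatorname{sinc}(0)=1$, and the argument reduces to them.

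\textbf{The bound by 1.} For $u\neq0$ I would invoke the standard inequality $|\sin u|\le|u|$. It follows from $\sin u=\int_0^u\cos s\,ds$ together with $|\cos s|\le1$. Dividing by $|u|$ gives $|\operatorname{sinc}(u)|\le1$, and the case $u=0$ holds by the convention.

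\textbf{The bound by $1/|u|$.} For $u\neq0$, $|\sin u|\le1$ gives $|\operatorname{sinc}(u)|\le1/|u|$. Taking the smaller of the two bounds gives $\min(1,1/|u|)$. Substituting $u=\theta w/2$ turns this into the envelope $\min(1,2/(|\theta|w))$ quoted in the main text.

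\textbf{The zeros.} For $u\neq0$, $\operatorname{sinc}(u)=0$ exactly when $\sin u=0$, that is, when $u=k\pi$ with $k$ a nonzero integer. The point $u=0$ is excluded because $\operatorname{sinc}(0)=1$. Since $u=\theta w/2$, the condition $u=k\pi$ is the same as $\theta w=2k\pi$.

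\textbf{Degenerate cases.} I expect no genuine obstacle. The only care needed is to handle $u=0$ (equivalently $w=0$ or $\theta=0$) through the convention $\operatorname{sinc}(0)=1$. This also ensures the stated zeros are exactly those with $k\ne0$.
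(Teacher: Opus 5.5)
Your proposal is correct and matches the paper's proof: the inequalities $|\sin u|\le|u|$ and $|\sin u|\le 1$ give the envelope $\min(1,1/|u|)$, and the zeros of $\sin u$ at $u=k\pi$, $k\neq0$, become $\theta w=2k\pi$ after substituting $u=\theta w/2$. Your explicit treatment of $u=0$ through the convention $\operatorname{sinc}(0)=1$, and your integral justification of $|\sin u|\le|u|$, are small additions that the paper leaves implicit.
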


\begin{proof}
The inequalities $|\sin u|\le|u|$ and $|\sin u|\le1$ give
$|\operatorname{sinc}(u)|\le\min(1,1/|u|)$.
Substitute $u=\theta w/2$. Zeros occur at $\theta w=2k\pi$ for nonzero
integers $k$. The bound is an envelope; it does not make the gain monotone
or eliminate all frequencies above a cutoff.
\end{proof}

The hard-cutoff control sets gains to zero when $\theta w>\pi$.
At this threshold the exact gain is $2/\pi$, so the control is a distinct
spectral rule rather than a numerical approximation to the first sinc zero.

\subsection{A representation compatible with aggregation}
\label{sec:app_rep}

Let $\mathcal{A}$ be a collection of atomic support measures and let
$\mathcal{M}=\operatorname{conv}_{\mathrm{fin}}(\mathcal{A})$ contain their finite
convex mixtures. Atomic supports specify the temporal measurement model:
they may be points, exposure windows or other known pooling kernels.

\begin{proof}[Proof of \thmref{thm:rep}]
Identify $\mathbb C^M$ with $\mathbb R^{2M}$, put $n=2M$, and write
\[
 f(t)=(\cos\theta_1t,\sin\theta_1t,\ldots,
       \cos\theta_Mt,\sin\theta_Mt),
 \qquad F(\mu)=\int f(t)\,d\mu(t).
\]
For part (i), affinity and point calibration give
$S(\sum_j\alpha_j\delta_{t_j})=\sum_j\alpha_jf(t_j)=F(\mu)$
on finite point mixtures, without a continuity assumption.
Finite point measures are weakly dense in $\mathcal P(D)$, and each coordinate of
$f$ is bounded and continuous. Weak continuity of $S$ therefore extends the
identity to all of $\mathcal P(D)$.

For part (ii), fix any point probe at $a\in D$ and isolate one rotary channel.
Arbitrary real query and key directions determine the expected two-dimensional
rotation matrix against that probe. Its entries give the real and imaginary
parts of $\phi_{\theta_m}(\mu)$ up to a known invertible rotation and conjugation.
Consequently, sufficiency implies
\[
 T(\mu)=T(\eta)\quad\Longrightarrow\quad F(\mu)=F(\eta).
\]
Conversely, if $F$ is a function of $T$ on its attained domain,
\propref{prop:kernel} expresses each selected expected interaction of fixed
queries and keys as a function of the two $F$-values, and hence of the two
$T$-values. No continuity of this recovery function is needed.

The functions $1,\cos\theta_m t,\sin\theta_m t$ are linearly independent on
$D$: a finite exponential polynomial vanishing on an interval with nonempty
interior has every coefficient zero. It follows that there are
$t_0,\ldots,t_n\in D$ for which $f(t_0),\ldots,f(t_n)$ are affinely independent.
Let
\[
 U=\{p\in\mathbb R^n:p_j>0\ (1\le j\le n),\ \sum_{j=1}^n p_j<1\},
 \qquad
 \mu_p=\left(1-\sum_{j=1}^n p_j\right)\delta_{t_0}
             +\sum_{j=1}^n p_j\delta_{t_j}.
\]
The set $U$ is nonempty and open in $\mathbb R^n$.
The map $p\mapsto\mu_p$ is weakly continuous, since integration against any
bounded continuous test function is affine in $p$.
Moreover,
\[
 F(\mu_p)=f(t_0)+\sum_{j=1}^n p_j\bigl(f(t_j)-f(t_0)\bigr)
\]
is injective by affine independence. The sufficiency implication above makes
$p\mapsto T(\mu_p)$ injective as well.

If $T$ is affine, this is an injective affine map from the open set $U$ into
$\mathbb R^d$, so its linear part has rank $n$ and $d\ge n$.
If $T$ is weakly continuous, its restriction is instead a continuous injective
map $U\to\mathbb R^d$. Suppose $d<n$ and compose it with the coordinate
inclusion $\iota:\mathbb R^d\hookrightarrow\mathbb R^n$.
Invariance of domain states that a continuous injective map from an open subset
of $\mathbb R^n$ to $\mathbb R^n$ has open image \citep[Theorem~2B.3]{hatcher}.
Thus $\iota(T(\{\mu_p:p\in U\}))$ would be open in $\mathbb R^n$.
It lies in the proper linear subspace $\iota(\mathbb R^d)$, which has empty
interior, a contradiction. Hence $d\ge n=2M$ in either class.
The state $F$, or equivalently $z$, is affine, weakly continuous and sufficient,
so it attains the bound in both classes.

It remains to prove the affine factorisation. Assume $T$ is affine and
sufficient. Consider any finite affine relation
$\sum_i\lambda_iT(\mu_i)=0$ with $\sum_i\lambda_i=0$.
If its coefficients are not all zero, their positive and negative parts have
the same positive sum $s$. The probability measures
\[
 \mu_+=s^{-1}\sum_{\lambda_i>0}\lambda_i\mu_i,
 \qquad
 \mu_-=s^{-1}\sum_{\lambda_i<0}(-\lambda_i)\mu_i
\]
have equal $T$-values by affinity. Sufficiency gives equal $F$-values, so
$\sum_i\lambda_iF(\mu_i)=0$. The same conclusion holds for the zero relation.
Thus the rule $T(\mu)\mapsto F(\mu)$ respects every affine relation and extends
to an affine map on the affine hull of the attained $T$-values. This proves
necessity of the stated factorisation; its sufficiency follows from the
expected-kernel identity as above. At the minimum dimension $d=n$, the image
of $F$ has affine span $\mathbb R^n$ by the simplex construction, so the
factor map has rank $n$. Its domain therefore also has affine dimension $n$,
and the factor map is an invertible affine change of coordinates.
\end{proof}

This characterisation is the finite-feature affine extension underlying Fourier features
and kernel mean embeddings \citep{kernelmean,hilbertembed,randomfeatures}.
Its sufficiency is for the selected interactions, not for reconstructing arbitrary measures.
For convex mixtures of a restricted atomic family, with the same point probes, the lower
bound for affine or weakly continuous states is the affine rank of its calibrated feature vectors. A domain with $K$ fixed atoms has rank at most $K-1$. Fixed-width exposure atoms
can have zero response at particular frequencies, reducing the rank further. Duplicate
frequencies, opposite-frequency pairs and the constant zero-frequency probability moment
also do not supply independent coordinates.

More generally, let $\mathcal A$ contain calibrated atomic support measures, including
exposures. For $\mu=\sum_j\alpha_j\nu_j$ with $\nu_j\in\mathcal A$, affinity gives
$\phi_\theta(\mu)=\sum_j\alpha_j\phi_\theta(\nu_j)$, independent of the
mixture decomposition. Mixtures of one fixed nonzero-width exposure family are not weakly
dense in all probability measures; a zero-width limit varies the atomic model.

\paragraph{The role of represented mass.}
For a finite positive measure, storing $(\rho,Z)$ as in \eqref{eq:massstate} makes aggregation
additive. Normalised moments alone cannot implement every later additive merge without
external mass: two measures with equal normalised state $z$ and distinct masses $\rho_1,\rho_2$
produce $(\rho_i z+ap)/(\rho_i+a)$ after adding a point mass of mass $a>0$ and state $p\ne z$.
The two outputs differ. If mass and raw moments must be recoverable from a real-linear
state, independence of $1,\cos,\sin$ gives dimension $2M+1$ on the unrestricted atomic domain.
An external mass already supplied by the compressor counts toward this storage.

\paragraph{Tree equality and the affine identity.}
For identical final weighted support, any associative summary followed by a deterministic
readout is independent of the merge tree for a fixed leaf order; a summary whose tree
also permutes leaves would additionally need to be commutative. This includes a
mass-weighted centre followed by
RoPE and exact moments followed by terminal RMS normalisation. Affinity is the stronger,
different requirement that coding the parent agree with averaging the child codes.
The centre map fails this identity in general because
$e^{i\theta\sum_j\alpha_jc_j}\ne\sum_j\alpha_je^{i\theta c_j}$.
Repeatedly projecting intermediate complex states onto unit RMS can lose the discarded norm
and depend on the tree. Retaining $(\rho,Z)$ and normalising once avoids that loss.
When state, content, ordering, masks and all other inputs coincide, a fixed deterministic
reader has equal outputs. Different final partitions or intervening attention operations
need not meet those premises.

Table~\ref{tab:rep} checks affinity on equal-width merges and common-translation
covariance for the implemented alternatives. These are necessary-condition
tests on the chosen supports. A small residual in a finite test is not a
proof of the corresponding identity on arbitrary measures; a nonzero
counterexample suffices to show that a code violates it.

\begin{table}[htbp]
\centering
\setlength{\tabcolsep}{4pt}
\caption{\textbf{Numerical checks of support affinity and translation covariance.} Largest residual in float64 over sampled supports. A1 tests merging equal-width adjacent intervals; A2 tests a common translation. Values below $10^{-9}$ are numerical zero. A1 is a necessary special case of affinity, not a test over all measures. The point-support limit holds for all tested codes to $1.1\times10^{-16}$. MN-sinc and the width-capped phase control are defined in Appendix~\ref{sec:app_related}.}
\label{tab:rep}
\small
\begin{tabular}{lrrl}
\toprule
Code & A1: affinity & A2: translation & Violated \\
\midrule
Aperture & $1.1\times 10^{-14}$ & $4.9\times 10^{-13}$ & --- \\
Centre only & $2.0$ & $5.7\times 10^{-13}$ & A1 \\
Hard cutoff & $6.5\times 10^{-1}$ & $2.8\times 10^{-14}$ & A1 \\
Aperture RMS & $1.6$ & $5.3\times 10^{-13}$ & A1 \\
MN-sinc & $1.5\times 10^{2}$ & $5.3\times 10^{-13}$ & A1 \\
Width-capped phase & $1.5$ & $1.7$ & A1+A2 \\
\bottomrule
\end{tabular}
\end{table}

\subsection{General mixtures and positional state}
\label{sec:app_measure}

The merge identity follows directly from linearity of the integral:
\[
  \phi_\theta(\mu)=\sum_j\alpha_j\phi_\theta(\mu_j)
  \quad\text{when}\quad \mu=\sum_j\alpha_j\mu_j.
\]
Store $z_m=\phi_{\theta_m}(\mu)$ at each rotary frequency and update
$z_m\leftarrow\sum_j\alpha_j z_{j,m}$ at a merge.
This takes $O(M)$ operations for a fixed number of members and retains complex
phase as well as gain. Hierarchical averaging must use the represented masses;
an unweighted mean of unequal-mass groups is a different measure.

Equal-weight mixtures of adjacent intervals with unequal widths have a
nonuniform density and generally cannot be represented by a single uniform
interval. A $(c,w)$ summary therefore need not reproduce their Fourier code.
The measured residual of this summary is $\CovUneqAperture$, compared with
$\CovUneqEqCtrl$ for the equal-width control. Using the union centre
($\CovUneqApertureUnion$) or a variance-matched width
($\CovUneqApertureVar$) does not restore exactness.
Equation~\eqref{eq:merge} handles these mixtures without a uniformity assumption.
If the mixture weights are proportional to the widths of adjacent disjoint
uniform intervals, the mixture is uniform on their union and the compact
interval formula is exact.

\subsection{Point samples and finite-width atoms}
\label{sec:app_atoms}

For $n$ equally weighted point samples at pitch $\Delta$, centred at $c$,
the Fourier code is
\begin{equation}
  \phi_\theta(\mu)
  =e^{i\theta c}\,
    \frac{\sin(n\theta\Delta/2)}{n\sin(\theta\Delta/2)},
  \label{eq:dirichlet}
\end{equation}
with removable singularities evaluated from the finite sum.
For $n$ adjacent uniform atoms of width $\Delta$, the mixture instead fills
an interval of width $w=n\Delta$, giving
$e^{i\theta c}\operatorname{sinc}(\theta w/2)$.
The distinction is a sampling assumption: exposure duration need not equal
frame pitch in a physical camera, and the interval model represents the
pooling support used in the experiment.

At fixed $w$ with $\Delta=w/n\to0$, the point-sample expression converges to
the interval expression. Away from their singularities, their ratio is
$(\theta\Delta/2)/\sin(\theta\Delta/2)$, reaching $\AtomRatioPi$ at
$\theta\Delta=\pi$.
The numerical audit over $n\le\AtomNGrid$ and $\theta\Delta\le2\pi$ gives
interval-mixture residual $\AtomIntervalErr$, point-versus-interval discrepancy
$\AtomPointErr$, and finite-sum agreement with \eqref{eq:dirichlet} of
$\AtomDirichletId$.
These checks distinguish an exact interval code from a discrete-sampling approximation.

On the Qwen temporal frequency bank, $\AtomQwenPastNullTrained$ channels lie
beyond their first zero at the finest trained budget, increasing to
$\AtomQwenPastNullHeld$ at the held-out budget. The smallest held-out gain
magnitude is $\AtomQwenMinGainHeld$. These are properties of the chosen
frequency/width pairs, and explain why the signed-gain and magnitude controls
are relevant. They do not identify which channels the trained model uses.

\subsection{Uniqueness under repeated equal-width merging}
\label{sec:app_equalmerge}

\begin{proposition}[Equal-merge characterisation]
\label{prop:unique}
Fix $\theta>0$ and a positive width set $\mathcal{W}$ closed under halving.
Let
$\varphi_\theta(c,w)=e^{i\theta c}H(w)$,
where $H$ is continuous at zero and $H(0)=1$.
If averaging the codes of two adjacent equal-width intervals gives the code
of their union at every width in $\mathcal{W}$, then
$\varphi_\theta(c,w)=e^{i\theta c}\operatorname{sinc}(\theta w/2)$
for every $w\in\mathcal{W}$.
\end{proposition}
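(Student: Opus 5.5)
Translate the merge condition into a functional equation for $H$ and iterate it. Two adjacent intervals of width $w/2$ with union centre $c$ have centres $c\mp w/4$. Averaging their codes and matching the code of the union gives
\[
 e^{i\theta c}H(w)=\tfrac12\bigl(e^{i\theta(c-w/4)}+e^{i\theta(c+w/4)}\bigr)H(w/2)
 =e^{i\theta c}\cos(\theta w/4)\,H(w/2).
\]
This yields the doubling relation $H(w)=\cos(\theta w/4)H(w/2)$. Closure of $\mathcal W$ under halving makes it applicable to $w/2,w/4,\ldots$, all of which lie in $\mathcal W$. I read the hypothesis as requiring the merge identity whenever the halves lie in $\mathcal W$, and $w/2$ is always such a width.

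Iterate $n$ times to obtain
\[
 H(w)=\Bigl(\prod_{k=1}^{n}\cos(\theta w/2^{k+1})\Bigr)H(w/2^n).
\]
Then evaluate the finite cosine product by the telescoping identity $\sin(2x)=2\sin x\cos x$. With $u=\theta w/2$, this gives
\[
 \prod_{k=1}^n\cos(u/2^k)=\frac{\sin u}{2^n\sin(u/2^n)}.
\]
That identity is valid whenever $\sin(u/2^n)\neq0$, which holds for all large $n$ when $u\neq0$. The case $w>0$ forces $u\neq0$ since $\theta>0$.

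Now let $n\to\infty$. Since $w/2^n\to0$ and $H$ is continuous at zero with $H(0)=1$, we get $H(w/2^n)\to1$. Meanwhile $2^n\sin(u/2^n)\to u$, so the product converges to $\sin u/u=\operatorname{sinc}(\theta w/2)$, which is Viète's product. Hence $H(w)=\operatorname{sinc}(\theta w/2)$ and $\varphi_\theta(c,w)=e^{i\theta c}\operatorname{sinc}(\theta w/2)$ on $\mathcal W$.

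The main obstacle is handling continuity at zero carefully. The points $w/2^n$ are in $\mathcal W$, but $0$ need not be. I would therefore use continuity of $H$ as a function on $\mathcal W\cup\{0\}$, or on a neighbourhood of zero, applied along the sequence $w/2^n$. The sign conventions also need checking, so that the average of the two half-interval phases is exactly $\cos(\theta w/4)$ with no stray factor. A consistency check is that $\operatorname{sinc}(\theta w/2)$ does satisfy the doubling relation, which confirms the sign conventions.
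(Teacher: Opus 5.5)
Your proof is correct and follows the paper's argument: you derive $H(w)=\cos(\theta w/4)H(w/2)$ from the two half-width children centred at $c\mp w/4$, iterate it, use continuity at zero to get $H(w/2^n)\to 1$, and evaluate the cosine product with the telescoping sine double-angle identity (Vi\`ete's product). Your explicit finite form $\sin u/(2^n\sin(u/2^n))$ and your remarks on the domain of $H$ only make precise steps that the paper states briefly.
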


\begin{proof}
The two children of an interval of width $w$ have width $w/2$ and centres
$c-w/4$ and $c+w/4$. Merge consistency gives
\begin{equation}
  H(w)=\cos(\theta w/4)H(w/2).
  \label{eq:funeq}
\end{equation}
Iterating,
\[
  H(w)=\left[\prod_{j=1}^{n}\cos(\theta w/2^{j+1})\right]H(w/2^n).
\]
The last factor converges to one. Repeated application of the sine
double-angle identity gives
$\prod_{j=1}^{\infty}\cos(x/2^j)=\sin x/x$.
Taking $x=\theta w/2$ proves the result.
\end{proof}

The characterisation is stable on a finite ladder.
Write the merge defect at scale $x$ as $r(x)=H(x)-\cos(\theta x/4)H(x/2)$, and
suppose $|r(x)|\le\varepsilon$ at $x=w,w/2,\ldots,w/2^{n-1}$.
Iterating the recurrence as in the proof and collecting the defects, each
multiplied by a cosine product of magnitude at most one, gives
\begin{equation}
  \bigl|H(w)-\operatorname{sinc}(\theta w/2)\bigr|
  \le n\varepsilon+\bigl|H(w/2^n)-\operatorname{sinc}(\theta w/2^{n+1})\bigr|.
  \label{eq:funstable}
\end{equation}
A depth-$n$ ladder with rung defect at most $\varepsilon$ therefore certifies the
interval code up to $n\varepsilon$ plus the base calibration error. With zero merge defect, continuity at zero makes the
remaining term vanish as $n\to\infty$. With nonzero defects, this is a finite-depth bound.

The proposition identifies the combined complex multiplier $H$.
If a code is written as $e^{i\theta(c+s(w))}h(w)$, then
$H(w)=e^{i\theta s(w)}h(w)$; the separate functions $s$ and $h$ are not
uniquely identified. Choosing a signed real sinc gain with centroid phase is
a canonical implementation of the uniquely determined code.

A finite deployed width ladder has an alternative calibration.
Given the atomic value $H(w_0)$, equal-merge consistency determines
$H(2^kw_0)$ recursively without continuity at zero.
If $H(w_0)=\operatorname{sinc}(\theta w_0/2)$, every rung has the interval code.
Without either atomic calibration or a zero-width limit, the base value remains
free; merge consistency on a finite ladder alone cannot choose it.

For comparison, a hard cutoff violates \eqref{eq:funeq}; at
$\theta w/2=1$, its two sides are $1$ and $\cos(1/2)$.
The measured residual is $\CovNyquist$.
RMS normalisation also changes the recurrence, with residual
$\CovApertureRms$. These violations concern representation consistency,
not a ranking of trained-model accuracy.

For the mean-normalised control in \eqref{eq:mncontrol}, away from denominator clamping,
merge consistency would require
\begin{equation}
 \frac{C_{2w}}{C_w}
 =\frac{\cos(\theta_m w)}{\cos(\theta_m w/2)}
 \label{eq:rotecond}
\end{equation}
where the cancelled factors are nonzero.
The right side varies across channels, while the left side is a shared scalar.
The tested bank therefore violates the identity.
A half-width, mean-normalised gain, matching RoTE's interval convention, also generally
violates it: after cancelling the sinc recurrence, consistency requires an unchanged
normaliser between parent and child widths. The measured half-width-normalised residual
is $\RoteSmallHarmOne$ at $w=1$\,s in the small bank.
These counterexamples concern affine aggregation, not predictive performance.

\subsection{Uniform-width reparameterisation}
\label{sec:app_gauge}

\begin{proposition}[Gauge invariance at uniform width]
\label{prop:gauge}
Suppose every token has the same width $w$ and let
$\Gamma=\bigoplus_m g_m(w)I_2$.
The Aperture attention map with projections $(W_q,W_k)$ equals the centre-only
map with projections $(\Gamma W_q,\Gamma W_k)$.
If all gains are nonzero and the projection matrices are unrestricted,
the two hypothesis classes coincide.
\end{proposition}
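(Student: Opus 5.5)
The argument works channel by channel, starting from the logit identity \eqref{eq:logit}. With every width equal to $w$, each channel of the Aperture logit between tokens $i$ and $j$ reads
\[
g_m(w)^2\,q_i^{(m)\top}R_m(c_j-c_i)k_j^{(m)}.
\]
The key observation is that $g_m(w)I_2$ is a scalar multiple of the identity on the $m$-th plane. It therefore commutes with the rotation $R_m(t)$, and its transpose is itself. This gives
\[
g_m(w)^2\,q^{(m)\top}R_m(\Delta)k^{(m)}=(g_m(w)q^{(m)})^\top R_m(\Delta)(g_m(w)k^{(m)}).
\]
Stacking the channels, the Aperture logit built from $q_i=W_qx_i$ and $k_j=W_kx_j$ equals the centre-only RoPE logit built from $\Gamma W_qx_i$ and $\Gamma W_kx_j$. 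Here $\Gamma=\bigoplus_m g_m(w)I_2$ is block diagonal and acts on exactly the coordinates that RoPE rotates.

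Since all pre-softmax logits agree for every pair $(i,j)$, the attention map is the same. Softmax, any scaling, masking and the value pathway (which does not involve $\Gamma$) all act identically on both sides. This proves the first claim, and it holds for arbitrary gains, including zero or negative ones.

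For the coincidence of hypothesis classes, I would prove both inclusions. Every Aperture model with parameters $(W_q,W_k)$ is the centre-only model with parameters $(\Gamma W_q,\Gamma W_k)$, so it lies in the centre-only class whenever projections are unrestricted. Conversely, if all $g_m(w)\neq0$, then $\Gamma$ is invertible with $\Gamma^{-1}=\bigoplus_m g_m(w)^{-1}I_2$. A centre-only model with $(W_q',W_k')$ is then the Aperture model with $(\Gamma^{-1}W_q',\Gamma^{-1}W_k')$, by applying the first claim to these parameters.

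The only delicate point is bookkeeping rather than analysis. One must check that the identity is exact, not merely in expectation, at the level of the actual attention computation. This is fine because the implementation applies $z_{i,m}=e^{i\theta_mc_i}g_m(w)$ directly to $q$ and $k$, and the real gain factor commutes with the rotation. One must also note that the statement requires the same $w$ for all tokens, so that a single $\Gamma$ serves every query and key. With heterogeneous widths the factor would depend on the token index and could not be absorbed into a shared projection.

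If there are multiple heads or layers, the same $\Gamma$ is absorbed per head. Any restriction on the projections, such as low-rank adapters or frozen weights, breaks the second inclusion. That is why the hypothesis of unrestricted projections is stated explicitly.
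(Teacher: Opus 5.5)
Your proof is correct and is essentially the paper's argument. Each scalar gain block commutes with its planar rotation, so $\Gamma$ can be moved onto the projections and every logit is reproduced; when all gains are nonzero, $W\mapsto\Gamma W$ is a bijection on unrestricted projections, so the two hypothesis classes coincide. The paper writes the commutation globally as $\Gamma R(c)=R(c)\Gamma$, which shows directly that the rotated query and key vectors themselves coincide, not only their inner products.
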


\begin{proof}
Define $R(c)=\bigoplus_m R_m(c)$. Scalar gain blocks commute with their planar
rotations, so $\Gamma R(c)=R(c)\Gamma$.
Substituting this identity on the query and key sides reproduces every logit.
When $\Gamma$ is invertible, $W\mapsto\Gamma W$ is a bijection on unrestricted
projection matrices, proving equality of the representable function classes.
\end{proof}

The numerical folding residual is $\GaugeUniform$ at logit scale
$\GaugeScale$; using identical, unfolded projections instead gives difference
$\GaugePlain$.
A point-support readout query among finite-width content tokens violates
the common-gain premise (relative residual $\GaugeQueryRel$).
Heterogeneous content widths likewise cannot generally be absorbed into one
shared projection (residual $\GaugeHetRel$ in the tested configurations).
Zero gains make the map noninvertible.
Finally, frozen or constrained projections, initialisation and regularisation
need not be preserved by the parameter transformation.
Uniform-width reparameterisation is a function-class result under its stated conditions,
not a prediction that a fixed-rate benchmark must give equal accuracies.

\subsection{Anchored extent recovery}
\label{sec:app_anchored}

\begin{proof}[Proof of \propref{prop:red}]
Write the intervals as $[a_j,a_{j+1})$, with known $a_1=a$.
Since $c_j=(a_j+a_{j+1})/2$, the recursion
$a_{j+1}=2c_j-a_j$ recovers all boundaries and therefore all widths.
With only $a_1$ fixed, $(a_2,\ldots,a_{N+1})$ has $N$ free coordinates
subject to strict ordering. Its affine map to the centres has a triangular
Jacobian with diagonal $1/2$, hence is invertible.
A separately fixed right boundary removes one degree of freedom.
For unanchored centres, $w'=w+\delta u$ can preserve their absolute values by moving each
boundary $a_j$ to $a_j+(-1)^j\delta/2$, indexing $a_1$ as the left endpoint.
The total span changes by $\delta\sum_j u_j$, equal to $\delta$ for odd $N$ and zero
for even $N$. Hence a known span resolves the alternating degree of freedom exactly in
the odd case. Absolute timestamps alone supply no endpoint equation.
\end{proof}

\subsection{Identifiability from relative offsets}
\label{sec:app_relative}

\begin{theorem}[Width ambiguity under relative observations]
\label{thm:reach}
Let a reader observe only relative centre offsets of a one-dimensional
interval chain, without a boundary or total-span constraint.
\begin{enumerate}[leftmargin=1.6em,itemsep=2pt,topsep=3pt,label=(\roman*)]
  \item For $N\ge2$ adjacent intervals, $\operatorname{rank}D=N-1$ and
  $\ker D=\operatorname{span}\{(1,-1,\ldots)\}$.
  The compatible positive widths form $w+\delta u$ with
  $-m_+<\delta<m_-$, where $m_+=\min_{u_j=1}w_j$ and $m_-=\min_{u_j=-1}w_j$.
  \item In a contiguous observation window containing at least two tokens,
  define $m_+$ and $m_-$ as the minimum widths over its two parity classes.
  The identified interval for any target width has length $A=m_++m_-$.
  Every estimator has supremum absolute error at least $A/2$, and the
  midpoint of the identified interval attains this bound.
  \item If $J\ge2$ survivors are retained from a $2N$-interval chain,
  the relative-offset matrix $E$ has rank $J-1$ and nullity $2N-J+1$.
  No surviving width is uniquely determined by those offsets alone.
\end{enumerate}
\end{theorem}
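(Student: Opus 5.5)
For part (i) I will compute the rank of $D$ directly. $D$ is the $(N-1)\times N$ matrix with rows $\tfrac12(e_j+e_{j+1})$. It is upper bidiagonal with nonzero diagonal, so its leading $(N-1)\times(N-1)$ minor equals $2^{-(N-1)}\neq0$, and hence $\operatorname{rank}D=N-1$. The kernel is then one-dimensional. A vector with $x_j+x_{j+1}=0$ for all $j$ must alternate in sign with constant magnitude, so the kernel is spanned by $u$. Any width vector with the same relative offsets therefore has the form $w+\delta u$. Positivity requires $w_j+\delta>0$ on the even-indexed-sign class $u_j=1$ and $w_j-\delta>0$ on $u_j=-1$, which gives exactly $-m_+<\delta<m_-$. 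Conversely, every such $\delta$ yields a valid contiguous chain: fix the left endpoint arbitrarily and rebuild the boundaries by cumulative sums. The argument is the same as in the proof of \propref{prop:red}.

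For part (ii) I will restrict attention to the window. Its tokens are consecutive, so the relative offsets inside the window form a chain of the same type, and part (i) applies with $m_\pm$ taken over the window's parity classes. This assumes that observations outside the window do not constrain the widths inside it, which I take to be the intended reading of ``observation window''. A target width $w_k$ with $u_k=1$ then ranges over the open interval $(w_k-m_+,\,w_k+m_-)$, and with $u_k=-1$ over $(w_k-m_-,\,w_k+m_+)$. In either case the length is $A=m_++m_-$. The minimax step is the standard two-point argument. An estimator is a function of the observations, so it returns the same value $\hat w$ on all compatible configurations. For points $x,y$ in the identified interval, $\max(|\hat w-x|,|\hat w-y|)\ge|x-y|/2$, and letting $x,y$ approach the two ends shows the supremum error is at least $A/2$. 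The midpoint has supremum error exactly $A/2$, because the interval is open and the supremum is not attained. I will state this explicitly.

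For part (iii) I will model eviction as follows. The $J$ survivors are intervals $i_1<\dots<i_J$ of the original chain. Each consecutive survivor offset is $c_{i_{r+1}}-c_{i_r}=\tfrac12 w_{i_r}+\sum_{i_r<l<i_{r+1}}w_l+\tfrac12 w_{i_{r+1}}$, a linear functional of the full $2N$-vector of widths. $E$ is the $(J-1)\times 2N$ matrix of these functionals. Each row has a nonzero coefficient on the survivor $w_{i_{r+1}}$, and that coefficient does not appear in any earlier row. Ordering the columns by $i_2,\dots,i_J$ therefore gives a triangular $(J-1)\times(J-1)$ minor with diagonal $\tfrac12$. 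So the rank is $J-1$, and rank--nullity gives nullity $2N-J+1$. For non-identifiability, the null vector $u$ restricted to the original chain already works. It preserves every original offset, hence every survivor offset, and changes every width, including each surviving one. Part (i) makes the perturbation feasible for small $\delta$.

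The main subtlety is in (ii) and (iii): deciding what counts as ``observed'' and making sure the window's feasibility interval is exactly the full interval from (i), rather than being shrunk by constraints from widths outside the window. I will resolve this by treating the window as the entire observed chain.
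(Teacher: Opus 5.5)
Your proposal is correct and follows essentially the same route as the paper. The paper uses a triangular leading minor and the alternating null vector for (i), and restriction to the window with the two-point minimax argument for (ii); for (iii) it uses a triangular survivor-column minor (you take columns $i_2,\dots,i_J$ where the paper eliminates forward from column $j_1$) together with the full-chain alternating vector $u$, which $E$ annihilates because its rows are sums of rows of $D$. One wording slip: the class $u_j=1$ consists of the odd-indexed intervals, not an ``even-indexed-sign class'', though your inequalities are right.
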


\begin{proof}[Part (i)]
The first $N-1$ columns of $D$ form an upper triangular matrix with diagonal
$1/2$, so the row rank is $N-1$.
The alternating vector $u$ satisfies $Du=0$, proving the nullspace statement.
Strict positivity gives $-m_+<\delta<m_-$.
Adjacent indices have opposite parity, hence
\begin{equation}
  A=m_++m_-\le2\min_k(c_{k+1}-c_k).
  \label{eq:ambbound}
\end{equation}
For example, $(1,5,1,5)$ and $(5,1,5,1)$ have identical relative offsets;
the perturbation magnitude need not be bounded by the smallest original width.
\end{proof}

\begin{proof}[Part (ii)]
Apply part (i) to the observation window. External widths impose no additional
constraints under the specified observation model.
Every admissible value in the interval gives the same observations.
For two values separated by $A-\varepsilon$, a common estimate has error at
least $(A-\varepsilon)/2$ for one of them.
Letting $\varepsilon\downarrow0$ proves the supremum bound.
The midpoint of the identified interval deviates from every admissible value
by less than $A/2$, with supremum exactly $A/2$ over the open interval, so the
bound is tight.
A single-token window has no offset observation and instead has unbounded
positive width ambiguity.
\end{proof}

\begin{proof}[Part (iii)]
Let survivor indices be $j_1<\cdots<j_J$. Consecutive survivor offsets are
\begin{equation}
  c_{j_{r+1}}-c_{j_r}
  =\tfrac12w_{j_r}+\sum_{j_r<k<j_{r+1}}w_k+\tfrac12w_{j_{r+1}}.
  \label{eq:evictrow}
\end{equation}
These form the rows of $E$.
In a vanishing linear combination of rows, column $j_1$ forces the first
coefficient to zero. Successive survivor columns then force every coefficient
to zero, so the rows are independent and the stated rank and nullity follow.
Moreover, $E$ sums rows of the full-chain operator $D$ and therefore annihilates
its alternating vector $u$. Every survivor coordinate of $u$ is nonzero.
A sufficiently small positive or negative perturbation along $u$ preserves
all widths' positivity while changing every surviving width and leaving
the observed offsets fixed.
\end{proof}

Eviction increases the nullspace dimension, but nonidentifiability does not
mean every surviving width has unbounded uncertainty. With at least two
survivors, each surviving endpoint in \eqref{eq:evictrow} is bounded above by
twice an adjacent observed offset.
An absolute boundary, known total span, or additional content-derived geometry
changes the observation model and may reduce ambiguity.
The theorem places no lower bound on the benefit of an extent code for a
downstream label.

\paragraph{Finite-bank resolution at the rank level.}
A stored finite bank supplies such geometry, and the rank it adds is computable
channel by channel, complementing the discrete affine-rank count of
\secref{sec:app_rep}. For a uniform interval of centre $c$ and width $w$,
channel $m$ contributes $g_m(w)(\cos\theta_m c,\sin\theta_m c)$ to the
calibrated feature vector, with centre and width partial derivatives
$\theta_m g_m(w)(-\sin\theta_m c,\cos\theta_m c)$ and
$g_m'(w)(\cos\theta_m c,\sin\theta_m c)$. The two are orthogonal, with squared
norms $\theta_m^2 g_m(w)^2$ and $g_m'(w)^2$, so the bank's Jacobian in $(c,w)$
has rank two unless every channel gain vanishes or every channel gain is
stationary. An envelope zero at $\theta_mw=2k\pi$
(Lemma~\ref{lem:envelope}) removes channel $m$'s centre coordinate; a stationary
gain, where $\tan(\theta_mw/2)=\theta_mw/2$, including the zero-width limit,
removes its width coordinate. Whenever some channel gain is non-stationary at
some interval width, the alternating null vector of \thmref{thm:reach} changes
the stored codes at first order: the mode invisible to relative offsets is
visible to the bank. If every gain is stationary at every interval width, the
finite bank shares the offsets' first-order blindness. This is a linearised
rank computation, not a resolution theory: which width differences a finite
bank identifies at a given signal-to-noise ratio remains open.

\subsection{The scope of the one-dimensional result}
\label{sec:app_twod}

\begin{proposition}[Planar centre ambiguity]
\label{prop:twod}
Two axis-aligned rectangular tilings of $[0,3]^2$ can have identical centroid
multisets and different ordered dimension multisets, even when both are guillotine tilings.
\end{proposition}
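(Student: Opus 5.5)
The plan is to prove the statement by exhibiting an explicit pair of tilings and checking them by exact arithmetic. Since it is an existence claim about a fixed square, a single certified example suffices.

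\textbf{Why a purely one-dimensional construction fails.} The alternating mode of \propref{prop:red} keeps every centre fixed but moves the outer boundary. Here both sides of $[0,3]^2$ are pinned, so \propref{prop:red} applied to any pure strip decomposition (all cuts parallel) forces the strip widths. Different centre sets with equal extents are therefore impossible there. The ambiguity has to come from trading extent between the two axes. Pieces in different guillotine branches must share centroids while having transposed or otherwise different ordered dimensions $(w,h)$.

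\textbf{Main step: finding the pair.} I expect locating a valid pair to be the main obstacle. I will restrict all cut coordinates to a small lattice, first $\tfrac12\mathbb Z\cap[0,3]$. For each tiling I will enumerate guillotine trees recursively: choose a horizontal or vertical cut at a lattice coordinate strictly inside the current rectangle, then recurse on both sides up to a bounded number of pieces. Each tiling is recorded as the multiset of centroids $\bigl((x_0+x_1)/2,(y_0+y_1)/2\bigr)$ together with the multiset of ordered dimensions $(x_1-x_0,\,y_1-y_0)$. I will hash tilings by their centroid multiset and search each bucket for two members whose dimension multisets differ. The piece count is increased until a collision appears; the paper's threshold macro suggests about five pieces. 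If the half-integer lattice yields nothing, I will refine to $\tfrac14\mathbb Z$ or $\tfrac13\mathbb Z$.

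As a heuristic seed before exhaustive search, I will try pairs in which one branch cuts vertically and the other horizontally. The two branches are chosen so that a $1\times2$ piece in one tiling and a $2\times1$ piece in the other sit at centroids that the complementary pieces supply back, in the spirit of the alternating mode but with the compensation routed through the other axis.

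\textbf{Verifying the pair.} Once a pair is found, the proof consists of listing both tilings with their cut sequences, which certifies the guillotine property. I then verify four facts in rational arithmetic. First, each tiling's pieces have disjoint interiors. Second, the piece areas sum to $9$, which together with the first fact shows each tiling covers $[0,3]^2$. Third, the two centroid multisets are equal. Fourth, the two ordered-dimension multisets differ. All four checks are exact finite computations, so no numerical tolerance is involved.
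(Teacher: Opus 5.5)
\textbf{Gap: the witness is never produced.} Your overall strategy matches the paper's: exhibit one explicit pair and certify it by exact arithmetic. But the proposal never supplies the pair, and for an existence statement the witness is the entire proof. A lattice search over guillotine trees ``until a collision appears'' is a procedure, not an argument. Nothing in the proposal shows that it succeeds at any particular piece count or lattice. The paper's reported threshold is a hint, not a justification. Your four exact checks are sound, but they are conditional on an object you have not exhibited, so the step you call the main obstacle is left open. (Separately, your motivating sentence about strips has the roles reversed: you mean that \emph{identical} centre sets with \emph{different} extents are impossible there.)

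\textbf{The missing idea is a symmetry, not a search.} Suppose a tiling's centroid multiset is invariant under the reflection $(x,y)\mapsto(y,x)$. Then its transpose has the same centroids and transposed ordered dimensions, so the two dimension multisets differ as soon as the non-square pieces are not themselves arranged symmetrically.

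The paper takes $A$ to be the two columns $[0,1]\times[0,3]$ and $[2,3]\times[0,3]$ together with the unit boxes $[1,2]\times[k,k+1]$, $k=0,1,2$, and sets $B=A^{\top}$. Both have the cross-shaped centroid set $\{(1/2,3/2),(3/2,1/2),(3/2,3/2),(3/2,5/2),(5/2,3/2)\}$. Their ordered dimension multisets are $\{(1,1)^3,(1,3)^2\}$ and $\{(1,1)^3,(3,1)^2\}$, respectively. Full-length cuts at $x=1,2$ followed by cuts of the middle column certify that $A$ is guillotine; the transposed cuts certify $B$.

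Your heuristic seed (top-level vertical cuts in one tiling, horizontal in the other) points exactly here, although the exchanged pieces are $1\times3$ strips rather than $1\times2$ ones. This witness has five pieces with integer coordinates, so your search would find it. Once you write it down, your four checks are immediate and the proof is complete.
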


\begin{proof}
Let $A$ contain the two columns $[0,1]\times[0,3]$ and
$[2,3]\times[0,3]$, together with the three unit boxes
$[1,2]\times[k,k+1]$, $k=0,1,2$.
Let $B$ be its transpose.
Both tile the square and have centroid set
\[
  \{(1/2,3/2),(3/2,1/2),(3/2,3/2),(3/2,5/2),(5/2,3/2)\}.
\]
Their ordered dimension multisets, as $(x,y)$ extent pairs, are
$\{(1,1)^3,(1,3)^2\}$ and $\{(1,1)^3,(3,1)^2\}$, respectively.
Full-length cuts at $x=1,2$ for $A$, and at $y=1,2$ for $B$,
give guillotine decompositions.
\end{proof}

The witness extends to a two-parameter family on $[0,1]^2$.
For $a,c\in(0,1/2)$, divide $A$ into columns of widths $a,1-2a,a$
and split its middle column at heights $c,1-c$.
Divide $B$ into rows of heights $c,1-2c,c$ and split its middle row
at $x=a,1-a$.
Both centroid sets form the same five-point cross.
Anisotropic scaling transfers the construction to any rectangular frame.

The exact combinatorial search reports no collision for
$N\le\TwoDCleanUpto$ and a dimension-changing family at $N=\TwoDMinN$.
An independent coordinate-based certifier checks total area, disjoint interiors,
centroid equality and differing ordered dimensions, with $\TwoDCertified$ accepted
witnesses and $\TwoDCertFailed$ failures.
The family has codimension $\TwoDCodim$ within its fixed-type parameter space;
local full-rank checks within one type do not test cross-type ambiguity.

With the rectangular dissection type and centroid-to-box correspondence known,
a different recovery statement holds. Order the distinct cut coordinates
$0=x_0<\cdots<x_p$. Each interior coordinate is a right edge of some box $i$,
so $x_k=2c_i^x-x_{l_i}$ recovers it from a smaller indexed coordinate.
The same recursion applies in $y$. Thus labelled centres determine a fixed
dissection, without implying uniqueness across unlabelled types.

The relative ambiguity also depends on dimension.
In the enumerated planar types, $\TwoDRelTrivial$ of $\TwoDRelAxes$
coordinate-axis systems have a trivial unanchored kernel.
The universal alternating mode is therefore specific to a one-dimensional
chain. The support transform itself extends directly to higher dimensions as
$\phi_\omega(\mu)=\int e^{i\omega^\top x}\,d\mu(x)$; the chain-recovery theorem
does not make that extension.

\section{Support-state implementation and controlled interventions}
\label{sec:app_new}

\subsection{General-support audit}
\label{sec:app_cpu_state}

The state implementation stores one positive mass and one unnormalised complex moment per
frequency. Point, interval and rectangle constructors use their analytic Fourier transforms.
An additive merge sums masses and raw moments; a separate probability-mixture interface
requires nonnegative coefficients summing to one and combines normalised child states.
These interfaces distinguish represented-mass addition from externally specified convex pooling.
Optional atomic provenance is retained for auditing and is not part of the compact deployed state.

The float64 audit uses 32 random mixtures in one dimension and 32 in two dimensions,
with seven atoms and seven frequency vectors per mixture. Centres, positive masses and
box dimensions vary independently. Mixtures can overlap, be disconnected and have asymmetric
weights. The same atoms are combined by balanced, left-branching, right-branching and random
trees. Independent tensor-product Gauss--Legendre quadrature evaluates the atomic exposures;
points use direct complex exponentiation. Translation covariance and narrowing boxes to
points are also checked. Generator seeds, trial counts, frequencies, errors and source hashes are saved
with the machine-readable audit.

\begin{table}[htbp]
\centering
\small
\caption{\textbf{General-support implementation audit.} Maximum absolute raw-moment tree/quadrature errors over 32 mixtures per dimension, seven weighted atoms and seven frequencies per mixture. Interval-summary error is measured on normalised moments using one uniform box over the enclosing bounds.}
\label{tab:support_general}
\begin{tabular}{lcc}
\toprule
Support domain & Tree / quadrature error & Interval-summary error \\
\midrule
1D mixtures & $<10^{-12}$ & $0.764$ \\
2D mixtures & $<10^{-12}$ & $0.699$ \\
\bottomrule
\end{tabular}
\end{table}

The uniform-box comparison in \tabref{tab:support_general} uses the enclosing bounds of the entire mixture. It differs from
the true support in both density and connectivity; the displayed discrepancy is a witness
of summary misspecification. A separate asymmetric-support check finds a nonzero imaginary
part after subtracting the centre phase, confirming that a real gain alone cannot express
every support. Two distinct positive point mixtures are also constructed with identical
selected Fourier moments by using a null vector of the $[1;\cos;\sin]$ feature matrix.
This numerically illustrates the finite bank's noninjectivity.

\paragraph{Aggregation and normalisation.}
The adjacent-interval audit uses edges $(0,0.2,0.9,1.7,3)$ and angular frequencies
$(0.3,0.7,1.2,2.4,4.1)$. Mass equals interval width in this audit, making the parent measure
uniform on $[0,3]$. Final centre and endpoints are computed by associative summaries for every
readout. All codes therefore pass the tree-equality control; only the exact Fourier transform
in this example passes the separate affine check in \tabref{tab:support_algebra}.
The repeated-RMS experiment instead normalises each internal weighted mean and discards its
previous norm. It is a distinct lossy update rule and gives the nonzero tree discrepancy.

A three-layer deterministic attention reader, with hidden dimension 10 and no stochastic
operations, receives equal pooled content, masks and final token partitions across trees.
Exact moments, Aperture RMS and centre-summary controls all give equal outputs to roundoff.
Changing the final partition yields a nonzero discrepancy even with exact moments, because
the content tokens and softmax normalisation differ. These are implementation checks, with
no fitting, accuracy metric or significance test.

\paragraph{Readout floors.}
The RMS map $h_\eta$ averages squared magnitudes over the encoded frequency pairs.
The general support API uses $\eta=10^{-12}$ over its supplied bank; the native 3B/7B
onset code uses $10^{-6}$ over temporal pairs; the synthetic, small EPIC and assisted-scale
interval readers use $10^{-4}$ over their gain bank (temporal pairs only at scale).
These are denominator floors: the latter two implementations clamp the mean square before
its square root. They change the terminal readout, not the underlying measure.

\paragraph{Support misspecification.}
A separate 32-mixture audit changes observed atomic widths by a common factor $1+\epsilon$,
with $\epsilon\in\{-0.2,-0.1,0,0.1,0.2\}$. True support, atomic centres and positive masses
remain fixed. The mean maximum-channel errors are approximately
$0.0600,0.0300,0,0.0295,0.0580$, respectively. The stored state remains tree-consistent for
the supplied measure, including an incorrectly specified one. Exact aggregation therefore
preserves declared metadata; it does not correct errors in the atomic measurement model.
The per-atom errors also give a convex upper bound on normalised mixture error.

\subsection{Paired support queries and anchors}
\label{sec:app_cpu_queries}

The generator samples positive base widths, chooses an admissible alternating perturbation,
and forms both branches of each base stream. Their absolute centres and payloads coincide;
the left endpoint moves by the amount required to preserve adjacency. A query selects a token
and asks for its uniform-support expectation of $\cos t$, measuring time relative to the
first token centre. This target is a known support functional, not an event-time label.
Both branches stay in the same split. Train, validation and test sets contain
1,024, 256 and 1,024 independently generated base streams, respectively, with seeds
18401, 28603 and 39817.

For a pair of targets $y_-,y_+$ that share the observed input, squared loss is minimised by
the conditional mean $(y_-+y_+)/2$. Its loss is $(y_+-y_-)^2/4$ on either branch.
Averaging that value over test base streams gives the lower bound in the main text.
The 95\% interval resamples 1,024 whole base streams 2,000 times; it keeps both branches
and the queried token together. The analytic width and start/end decoders have maximum
error at most $\RebuildEndpointError$.

The exact Fourier input contains the queried frequency and therefore its answer directly.
For the RMS bank $(0.25,0.5,1,2)$, the positive width range is below $2\pi$.
After removing the known centre phase, the ratio of gains at frequencies 2 and 1 is
$\cos(w/2)$; normalisation cancels. Hence $w=2\arccos(g_2/g_1)$ recovers width in this
domain and supplies another exact decoder. All of these are information-access controls.
Small learned feature readouts are retained in the supplementary numerical archive, but
are not used as an encoding-performance comparison: one input includes the target coordinate
and the task does not compare attention-kernel placement.

The anchor audit solves the full linear system for chains of lengths 8 and 9.
Absolute centres without endpoints leave rank $N-1$. A numeric left boundary in their
coordinate system gives rank $N$; total span does so only for odd $N$.
The maximum reconstruction error in the full-rank cases is $\RebuildAnchorError$.
Order, token count and causal masks are held fixed and supply no extra numeric boundary.

\subsection{Orthogonal support and content interventions}
\label{sec:app_crossed}

The CPU onset instrument contains a single continuous step on $[0,8]$ seconds with known
pre- and post-step feature vectors, supplied to every decoder as the query. Onset time is
sampled independently of the token geometry. Each base stream defines a tiling and one
Bernoulli feature mask shared by all geometry variants. The mask leaves token slots and
metadata in place and does not protect the onset-bearing token.

Two separate $2\times2$ crossings compare tiling with overlapping supports, and tiling with
gapped supports, each under retained and masked content. Geometry keeps the centres and
slot count fixed. In the physical-support protocol, payloads are recomputed as exact integrals
over the new intervals. In the metadata-only protocol, original payloads are retained and
only supplied support changes. Both protocols hold the true event time fixed. All four cells
in a crossing share base-stream identity and mask draw.

For per-stream absolute error $e_{g,m}$, the geometry effect is
$[(e_{1,0}-e_{0,0})+(e_{1,1}-e_{0,1})]/2$, the masking effect is
$[(e_{0,1}-e_{0,0})+(e_{1,1}-e_{1,0})]/2$, and the interaction is
$e_{1,1}-e_{1,0}-e_{0,1}+e_{0,0}$. Uncertainty resamples base streams, preserving the full
crossing. In the physical protocol, changing support also changes the content measurement;
these effects refer to independently assigned operators, not identical payload after pooling.

A support-aware analytic decoder uses the supplied endpoints and known step-feature vectors.
The centre-only decoder uses their pre/post ordering and the known global boundaries to
bracket the event. A further anchored-tiling decoder reconstructs widths from all centres
before applying the same support-aware rule; on a true tiling it agrees with the interval
decoder. Applying that reconstruction to overlap or gaps tests a misspecified tiling assumption.
These diagnostics validate the crossed design and its observation model, not a trained
Aperture advantage; the trained real-video comparisons are reported in \secref{sec:gpu}.

\section{Native model evaluation protocol}
\label{sec:app_gpu}

The five primary contrasts in \tabref{tab:gpu_pending} were fixed in a hashed manifest before
execution, together with all cells, seeds, checkpoint-selection rules and result contracts; no
seed or hyperparameter was changed afterwards. The protocol separates five tasks: real EPIC
geometry/content crossings; 3B compression and sampling-rate transfer; 7B confirmation; native
temporal question answering and spatial token merging; and whole-model cost at the same
evaluated budgets. The records contain input and source hashes, checkpoint-selection criteria,
source membership, and per-item labels and predictions.
\tabref{tab:gpu_crossed_pending} reports the real-video crossings.

\paragraph{Real-video masking and geometry.}
Each method/seed/crossing/payload-protocol combination trains one balanced-mixture model
and evaluates its selected checkpoint in the four cells of that crossing. The mask retains
half the source slots and force-keeps those containing the true onset on the base tiling;
the same mask is shared across geometry cells. In contrast, the continuous-signal controls
of \secref{sec:app_crossed} do not protect the onset. The real-video result therefore
concerns removal of surrounding content with target-bearing evidence retained.
Source features are piecewise constant on $0.25$\,s frame bins. The gap operator shrinks
each interval to $0.15$\,s inside its original bin, so normalised physical re-pooling
returns the same payload. Its centre-only zero effect follows from this construction,
rather than demonstrating learned invariance to a new signal.

The reference cohorts have the following media coverage: \DenseRefMediaErrors{}
NTU RGB+D \citep{nturgbd} items of MVBench \citep{mvbench} were
unavailable to the dense 3B reference evaluation and were scored incorrect under the frozen
missing-media rule, depressing it to $\DenseRefAcc$; the full-media reference scores
$\DenseRefFullMediaAcc$, and
every adapted Qwen2.5-VL-7B comparison uses complete media with paired structure intact.
The PPE 3B reference at retention $0.45$ likewise contains \PpeRefMediaErrors{} media-error
items in a single task block, scored incorrect under the same rule: its reported \PpeRefAcc{}
corresponds to \PpeRefAvailAcc{} on the \PpeRefAvailItems{} available-media items. Every
systems record and the complete spatial panel were measured on one A800-SXM4-80GB device
class; all reported timing cohorts and the spatial accuracy cohort use this device class.
EchoingPixels' re-annotation mixture is not public, so its temporal-QA numbers use the
PPE-path common-data adaptation under a separately named protocol.

\begin{table}[htbp]
\centering
\small
\caption{\textbf{Real-video crossings.} Four-cell accuracy and paired geometry (G), mask (M) and interaction (I) effects with 95\% seed intervals over ten prespecified seeds and 2,048 fixed test items. Masks retain half the slots and protect onset-bearing evidence. TR/TM denote tiling with retained/masked content; GR/GM denote changed geometry. The compact display uses centre and stored-support readers; the full grid is exported separately.}
\label{tab:gpu_crossed_pending}
\begin{tabular}{@{}p{.23\linewidth}p{.38\linewidth}p{.32\linewidth}@{}}
\toprule
Crossing / payload & Cell accuracies & Paired effects \\
\midrule
\shortstack[l]{Overlap / metadata\\Centre} & \shortstack[l]{TR: 0.526 [0.502, 0.550]\\TM: 0.646 [0.628, 0.665]\\GR: 0.526 [0.502, 0.550]\\GM: 0.646 [0.628, 0.665]} & \shortstack[l]{G: 0.000 [0.000, 0.000]\\M: 0.121 [0.113, 0.128]\\I: 0.000 [0.000, 0.000]} \\
\shortstack[l]{Overlap / metadata\\Stored support} & \shortstack[l]{TR: 0.521 [0.507, 0.536]\\TM: 0.636 [0.622, 0.650]\\GR: 0.523 [0.508, 0.537]\\GM: 0.640 [0.623, 0.656]} & \shortstack[l]{G: 0.002 [0.000, 0.005]\\M: 0.116 [0.110, 0.122]\\I: 0.002 [-0.002, 0.005]} \\
\shortstack[l]{Gap / metadata\\Centre} & \shortstack[l]{TR: 0.527 [0.506, 0.548]\\TM: 0.651 [0.631, 0.670]\\GR: 0.527 [0.506, 0.548]\\GM: 0.651 [0.631, 0.670]} & \shortstack[l]{G: 0.000 [0.000, 0.000]\\M: 0.124 [0.117, 0.131]\\I: 0.000 [0.000, 0.000]} \\
\shortstack[l]{Gap / metadata\\Stored support} & \shortstack[l]{TR: 0.524 [0.507, 0.542]\\TM: 0.641 [0.626, 0.657]\\GR: 0.523 [0.505, 0.541]\\GM: 0.642 [0.625, 0.659]} & \shortstack[l]{G: 0.000 [-0.002, 0.001]\\M: 0.118 [0.112, 0.125]\\I: 0.002 [-0.001, 0.005]} \\
\shortstack[l]{Overlap / physical\\Centre} & \shortstack[l]{TR: 0.521 [0.506, 0.537]\\TM: 0.651 [0.635, 0.666]\\GR: 0.524 [0.510, 0.539]\\GM: 0.656 [0.642, 0.670]} & \shortstack[l]{G: 0.004 [0.001, 0.006]\\M: 0.130 [0.124, 0.136]\\I: 0.002 [-0.002, 0.006]} \\
\shortstack[l]{Overlap / physical\\Stored support} & \shortstack[l]{TR: 0.524 [0.514, 0.534]\\TM: 0.646 [0.630, 0.662]\\GR: 0.529 [0.519, 0.540]\\GM: 0.652 [0.636, 0.668]} & \shortstack[l]{G: 0.006 [0.003, 0.009]\\M: 0.122 [0.113, 0.131]\\I: 0.001 [-0.006, 0.007]} \\
\shortstack[l]{Gap / physical\\Centre} & \shortstack[l]{TR: 0.534 [0.516, 0.552]\\TM: 0.652 [0.636, 0.668]\\GR: 0.534 [0.516, 0.552]\\GM: 0.652 [0.636, 0.668]} & \shortstack[l]{G: 0.000 [0.000, 0.000]\\M: 0.118 [0.112, 0.123]\\I: 0.000 [0.000, 0.000]} \\
\shortstack[l]{Gap / physical\\Stored support} & \shortstack[l]{TR: 0.524 [0.507, 0.542]\\TM: 0.641 [0.626, 0.657]\\GR: 0.523 [0.505, 0.541]\\GM: 0.642 [0.625, 0.659]} & \shortstack[l]{G: 0.000 [-0.002, 0.001]\\M: 0.118 [0.112, 0.125]\\I: 0.002 [-0.001, 0.005]} \\
\bottomrule
\end{tabular}
\end{table}

\paragraph{Transfer cells and selection.}
Controlled adaptation trains at 8\,fps with random contiguous merging at retained fractions
1 and $1/2$. Evaluation covers 2, 4, 8 and 16\,fps and fractions
1, $1/2$, $1/4$, $1/8$, where source videos support those rates. The unseen policies are
adjacent similarity merging, noncontiguous similarity merging, age-dependent merging and
separate deletion stress tests. Rate transfer uses native frame extraction, not a renamed
subsample of cached embeddings. The primary policy cell uses noncontiguous merging at
$1/4$ retention and 8\,fps; the primary rate cell uses contiguous merging at $1/2$ retention
and 16\,fps. Every completed cell is reported; \tabref{tab:gpu_fullgrid} lists all
\GpuGridArmsGTwo{} 3B and \GpuGridArmsGThree{} 7B arms over the full ladder. The 7B arm
set was fixed before the 3B results were inspected, so the 7B grid is a prespecified
confirmation.

\begin{longtable}{@{}lcccc@{}}
\caption{\textbf{Complete G2/G3 adaptation grid.} Per-arm accuracy over the full prespecified
ladder (five policies $\times$ four retention fractions $\times$ four frame rates $= 80$ cells;
1{,}024 fixed test items per cell; ten seeds). \emph{Overall} averages all 800 cell-seed
accuracies; \emph{80-cell range} spans the per-cell seed means. The policy and rate columns are
the prespecified primary cells (\texttt{noncontiguous} keep $1/4$ @ 8\,fps;
\texttt{contiguous} keep $1/2$ @ 16\,fps) with 95\% seed $t$ intervals; for G3 only the policy
cell is a prespecified contrast. Twelve-way chance is 0.0833. The dagger marks the exact-Aperture seed-9 checkpoint
analysed with the full distributions in \figref{fig:gpu_null} and the accompanying text.}
\label{tab:gpu_fullgrid}\\
\toprule
Arm & Overall & 80-cell range & Policy cell & Rate cell \\
\midrule
\endfirsthead
\multicolumn{5}{l}{\emph{Table \ref{tab:gpu_fullgrid} (continued)}}\\
\toprule
Arm & Overall & 80-cell range & Policy cell & Rate cell \\
\midrule
\endhead
\endfoot
\bottomrule
\endlastfoot
\multicolumn{5}{@{}l}{\emph{G2: Qwen2.5-Omni-3B (13 arms)}}\\[2pt]
\texttt{centre} & 0.084 & 0.083--0.084 & 0.084 [0.081, 0.087] & 0.083 [0.080, 0.087] \\
\texttt{stored\_support} & 0.083 & 0.083--0.084 & 0.084 [0.079, 0.090] & 0.084 [0.078, 0.090] \\
\texttt{exact\_interval}$^{\dagger}$ & 0.088 & 0.087--0.088 & 0.088 [0.080, 0.096] & 0.088 [0.080, 0.096] \\
\texttt{learned\_interval} & 0.084 & 0.084--0.085 & 0.084 [0.082, 0.087] & 0.085 [0.083, 0.087] \\
\texttt{start\_end} & 0.083 & 0.083--0.084 & 0.083 [0.079, 0.088] & 0.083 [0.078, 0.088] \\
\texttt{terminal\_rms} & 0.086 & 0.085--0.087 & 0.085 [0.080, 0.091] & 0.087 [0.080, 0.093] \\
\texttt{width\_feature} & 0.085 & 0.085--0.085 & 0.085 [0.082, 0.088] & 0.085 [0.082, 0.088] \\
\texttt{quadrature} & 0.084 & 0.084--0.085 & 0.084 [0.080, 0.089] & 0.085 [0.081, 0.089] \\
\texttt{rote\_matched} & 0.084 & 0.081--0.089 & 0.087 [0.083, 0.091] & 0.085 [0.082, 0.088] \\
\texttt{ppe\_native} & 0.085 & 0.084--0.085 & 0.085 [0.079, 0.090] & 0.085 [0.079, 0.090] \\
\texttt{ppe\_ranked} & 0.085 & 0.085--0.085 & 0.085 [0.082, 0.088] & 0.085 [0.082, 0.088] \\
\texttt{sync\_rope\_native} & 0.086 & 0.085--0.086 & 0.085 [0.080, 0.089] & 0.086 [0.082, 0.090] \\
\texttt{fourier\_feature} & 0.087 & 0.086--0.088 & 0.087 [0.082, 0.091] & 0.088 [0.082, 0.094] \\
\midrule
\multicolumn{5}{@{}l}{\emph{G3: Qwen2.5-Omni-7B (8 arms)}}\\[2pt]
\texttt{centre} & 0.084 & 0.083--0.085 & 0.085 [0.081, 0.089] & 0.083 [0.079, 0.088] \\
\texttt{stored\_support} & 0.085 & 0.084--0.086 & 0.086 [0.082, 0.090] & 0.085 [0.082, 0.088] \\
\texttt{start\_end} & 0.086 & 0.085--0.087 & 0.086 [0.080, 0.091] & 0.086 [0.081, 0.091] \\
\texttt{terminal\_rms} & 0.087 & 0.086--0.088 & 0.088 [0.081, 0.094] & 0.087 [0.081, 0.094] \\
\texttt{width\_feature} & 0.084 & 0.083--0.084 & 0.084 [0.079, 0.089] & 0.084 [0.079, 0.089] \\
\texttt{ppe\_native} & 0.083 & 0.083--0.084 & 0.083 [0.079, 0.088] & 0.083 [0.079, 0.088] \\
\texttt{ppe\_ranked} & 0.087 & 0.086--0.088 & 0.086 [0.082, 0.090] & 0.087 [0.082, 0.092] \\
\texttt{sync\_rope\_native} & 0.084 & 0.083--0.085 & 0.085 [0.082, 0.088] & 0.084 [0.081, 0.087] \\
\end{longtable}

The primary 3B/7B controlled comparisons use ten fixed matched training seeds.
Training uses 3,000 steps, batch 8, AdamW with learning rate $10^{-4}$ and weight decay
$0.01$, 100 warmup steps, cosine decay, gradient clipping 1 and LoRA rank 32.
Validation at training conditions alone selects checkpoints every 500 steps.
A learned point-support readout token follows the query, audio and video sequence at time zero.
Its final hidden state is classified into twelve fixed one-second bins on $[2,14]$ seconds
within a 16-second window, preserving the label space across budgets. Unlike the scale study
of \secref{sec:app_scale}, this readout omits the content-similarity aid and uses ten seeds
over 1,024 items rather than 24 over 2,048, so the frozen-budget null and the scale study are
different measurements of the 3B/7B models. The existing token-index-to-centre-bin head is a separately
identified comparison. The native spatial comparison uses five adaptation seeds on the
complete MVBench \citep{mvbench} test split
and Qwen2.5-VL-7B's \citep{qwen25vl} original question/answer interface, with a $2{\times}2$
grid-merging
variant and Video-MME \citep{videomme} with subtitles as supplementary tasks and unadapted
3B dense/PPE
references for orientation.

\begin{figure}[t]
\centering
\includegraphics[width=\linewidth]{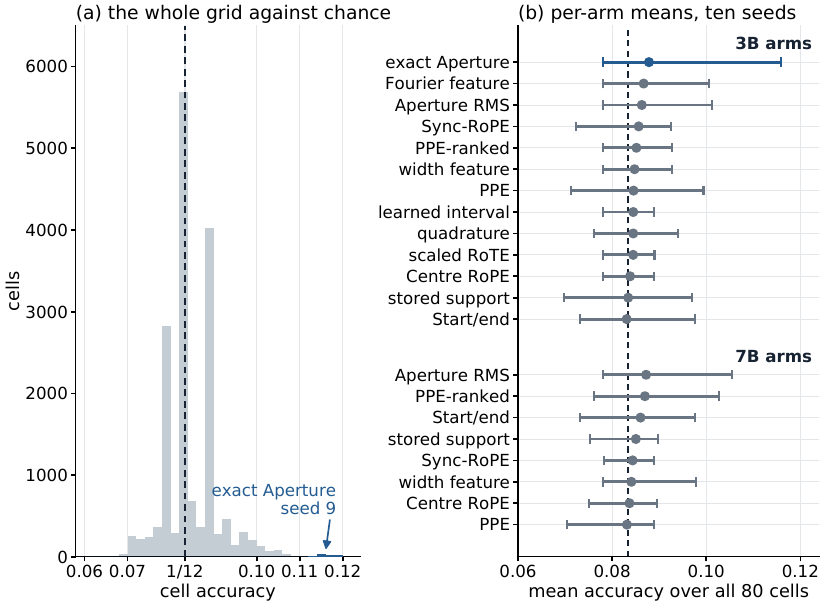}
\caption{\textbf{Onset adaptation remains near chance.}
\textbf{(a)} Accuracy of all 16,800 evaluation cells (13 3B arms and 8 7B arms, 10 seeds, 80
cells per checkpoint: five policies $\times$ four retained fractions $\times$ four frame rates)
against twelve-way chance (dashed line, $1/12$). The blue subset marks the exact-Aperture seed-9 checkpoint; the text reports its
above-chance accuracy and the unresolved prespecified contrasts.
\textbf{(b)} Per-arm means over all cells, with the range of the ten per-seed means as whiskers.
A training-batch probe at learning rate $10^{-3}$ memorises 16 items within 325 steps;
this verifies fitting that batch, not learning the full task.}
\label{fig:gpu_null}
\end{figure}

Across the full adaptation grid,
cell-seed accuracies span $\GpuGridAccMin$--$\GpuGridAccMax$ against twelve-way
chance, with $\GpuGridCellsAboveTen$ of $\GpuGridCellSeedAccs$ above $0.10$, concentrated
in a few checkpoints: one G3 \texttt{terminal\_rms} (Aperture RMS) checkpoint (seed 5) exceeds
$0.10$ in all $\GpuGridAnomalyCells$ of its cells (mean $\GpuGridConcTrmsMean$), and six
further checkpoints carry $\GpuGridConcRest$ more (the remaining $\GpuGridConcTail$
scattered across $\GpuGridConcTailCkpts$ others), most extremely one
\texttt{exact\_interval} (exact Aperture) checkpoint (seed 9) at $\GpuGridAnomalyLo$--$\GpuGridAnomalyHi$
in every cell (worst-cell binomial $p=\GpuGridAnomalyPTwoSided$, two-sided); the
prespecified contrasts remain null
(\tabref{tab:gpu_fullgrid}). 

\paragraph{Support construction and comparison names.}
The 3B/7B onset and native spatial implementations retain source membership and construct
Fourier moments from it. They evaluate the support readout; the compact incremental $(\rho,Z)$ update is
validated separately by the general-support implementation in \secref{sec:app_cpu_state}.
The onset paths average equal-mass uniform exposure atoms of width $1/f_{\mathrm{tok}}$,
where the visual tower produces $f_{\mathrm{tok}}=\mathrm{fps}/2$ tokens per second
(8\,fps gives 4 tokens/s and 0.25\,s atoms). The spatial path averages point phases at retained
member coordinates. Their measured buffers and latency describe these membership-based paths.
The small EPIC crossing study instead constructs a uniform-box support directly from each
token's centre and width.

The onset arm \texttt{exact\_interval} (exact Aperture) instead uses the uniform interval
between the extreme member endpoints; for noncontiguous membership this is a different
measure. \texttt{terminal\_rms} applies the Aperture RMS readout to the full support moments.
Centre-only rounds temporal position IDs, whereas these support readouts use continuous
phases in the temporal channels. Their comparisons therefore test complete positional
interfaces, including coordinate quantisation. The assisted-scale study uses the common
rounded centres specified in \secref{sec:app_scale}.

Feature arms supply scalar width, endpoints, a learned interval embedding or the same
Fourier moments to the content pathway. Sample-average RoPE and exposure quadrature provide
numerical references. The common-trunk RoTE adaptation (saved as \texttt{rote\_matched})
uses the model's positions-per-second value $P$ twice: the supplied bank is
$\omega_m=P\,\mathrm{inv\_freq}_m$, and the interval operator also scales time by $P$.
For interval centre $c$ and width $w$, the executed interval code is
\begin{equation}
 s_m=\operatorname{sinc}(P\omega_m w/2),\qquad
 z_m=e^{iP\omega_m c}\frac{s_m}{M_t^{-1}\sum_{\ell=1}^{M_t}s_\ell},
 \label{eq:gpu_scaled_rote}
\end{equation}
where $M_t$ counts temporal pairs. Its effective interval bank is $P\omega_m$, approximately
$P^2\mathrm{inv\_freq}_m$ up to dtype rounding. We call this arm \emph{scaled RoTE}; it does
not share the support arm's effective physical bank or reproduce TIE's default time scale.
The self-attention patch applies this interval code to both video queries and keys;
learned query/readout tokens retain point codes. CPU reference-operator checks validate
the primitive separately from this GPU parameterisation. PPE retains member coordinates and channel-group assignment;
its released selector and score-ranked variant are \texttt{ppe\_native} and
\texttt{ppe\_ranked}. \texttt{sync\_rope\_native} implements Sync-RoPE's temporal/spatial
allocation. Operator formulas and the distinct MN-sinc and width-capped controls are in
\secref{sec:app_related}.

\paragraph{Predictive and systems outcomes.}
All arms retain item IDs and the same physical labels. Paired-seed $t$ intervals describe model
variation on the fixed test population. A separate crossed bootstrap resamples model seeds and
participant--video--item clusters with shared draws across arms (\GpuBootReps{} resamples,
rng seed \GpuBootSeed); video is the top cluster when participant metadata is unavailable
(\GpuBootGFourVideos{} videos and \GpuBootGFourItems{} items for the spatial contrast).
The EPIC test cohort contains \EpicTestParticipants{} participants
(\EpicTestParticipantList) and \EpicTestVideos{} videos. Each G1 crossing uses
\GpuInvGOneItems{} test items, whereas each G2/G3 contrast uses \GpuInvGTwoGThreeItems{}.
The crossed intervals therefore offer only a limited sensitivity summary for participant
variation; \tabref{tab:gpu_pending} reports them beside the seed intervals.
A descriptive video-level sensitivity analysis, separate from the prespecified family, gives
$[\GpuBootVidTopGTwoPolicyLo,\GpuBootVidTopGTwoPolicyHi]$,
$[\GpuBootVidTopGTwoRateLo,\GpuBootVidTopGTwoRateHi]$,
$[\GpuBootVidTopGTwoPlacementLo,\GpuBootVidTopGTwoPlacementHi]$ and
$[\GpuBootVidTopGThreePolicyLo,\GpuBootVidTopGThreePolicyHi]$ for the four EPIC contrasts;
all four still contain zero. The five primary contrasts form a separate
Holm family from the $\StatNConfirm$-member family of \secref{sec:app_stats}. Missing cells and degenerate estimates remain explicit.

\paragraph{Exploratory budget probe.}
An exploratory learning-rate analysis uses the 3B grid protocol at learning
rate $\ExplBudgetLr$, the rate at which the training-batch probe memorises a 16-item train
batch within 325 steps, on \ExplBudgetArms{} arms (centre-only, stored support, exact
Aperture, learned interval and PPE-ranked) over the same \ExplBudgetSeeds{} seeds, cells
and frozen test inventory (sha256 \texttt{\ExplBudgetTestInvSha}\ldots). This exploratory comparison forms a separate analysis from the prespecified contrasts.
\tabref{tab:exploratory_budget} gives the per-arm endpoints. The mean final training
loss lies within $\ExplBudgetMaxLossDev$ of $\ln 12=\ExplBudgetLnTwelve$ for every arm,
\ExplBudgetCells{}-cell test means span $\ExplBudgetGridMeanLo$--$\ExplBudgetGridMeanHi$
($\ExplBudgetChanceFactorLo$--$\ExplBudgetChanceFactorHi$ of chance), and all
\ExplBudgetContrasts{} paired seed-level contrasts against centre-only on the two
prespecified cells contain zero (means $\ExplBudgetContrastMeanLo$ to
$\ExplBudgetContrastMeanHi$; intervals within $[\ExplBudgetContrastCILo,
\ExplBudgetContrastCIHi]$).

\begin{table}[tb]
\centering
\caption{\textbf{Exploratory budget probe: per-arm endpoints.} Means over
\ExplBudgetSeeds{} seeds at learning rate $\ExplBudgetLr$; the protocol is otherwise
identical to the frozen 3B grid; this analysis is outside the prespecified family. Final
train loss is the per-seed last logged value, averaged; $\ln 12=\ExplBudgetLnTwelve$ is
the twelve-way-chance loss. The test mean is over all \ExplBudgetCells{} cells of the
frozen test inventory.}
\label{tab:exploratory_budget}
\begin{tabular}{lccc}
\toprule
arm & final train loss & test mean & $\times$ chance \\
\midrule
centre-only (\texttt{centre}) & $\ExplBudgetTrainLossCentre$ & $\ExplBudgetGridMeanCentre$ & $\ExplBudgetFactorCentre$ \\
stored support (\texttt{stored\_support}) & $\ExplBudgetTrainLossStoredSupport$ & $\ExplBudgetGridMeanStoredSupport$ & $\ExplBudgetFactorStoredSupport$ \\
exact Aperture (\texttt{exact\_interval}) & $\ExplBudgetTrainLossExactInterval$ & $\ExplBudgetGridMeanExactInterval$ & $\ExplBudgetFactorExactInterval$ \\
learned interval (\texttt{learned\_interval}) & $\ExplBudgetTrainLossLearnedInterval$ & $\ExplBudgetGridMeanLearnedInterval$ & $\ExplBudgetFactorLearnedInterval$ \\
PPE-ranked (\texttt{ppe\_ranked}) & $\ExplBudgetTrainLossPpeRanked$ & $\ExplBudgetGridMeanPpeRanked$ & $\ExplBudgetFactorPpeRanked$ \\
\bottomrule
\end{tabular}
\end{table}

Classification latency includes media decoding, feature extraction, compression and forward
inference. Generative timing begins after media preprocessing and covers model feature
extraction, compression, prefill and autoregressive decoding. After 20 warmups, at least
100 synchronised examples provide median and p90 latency,
throughput, peak allocated/reserved GPU memory and actual KV bytes. Time to first token (TTFT) applies to generative
interfaces. Within each cohort, hardware, precision, batch, attention backend,
cache settings and output-length caps are matched; realised output lengths vary by arm. Support-state overhead is measured against the
entire model cache. Accuracy, latency and memory points are recorded together, including dominated
points. These measurements include response-length effects and do not isolate rotary-operation overhead.

Charged support-state bytes are the measured implementation's retained positional state per
request. At the classification points, the centre-only arm is charged the bound of the model's
three-row position-ID table ($48$\,KiB at 2,048 tokens) and stored support its membership
indices and complex moments ($4.5$--$17$\,KiB across the measured cells). On the generative
panel both arms are charged the retained member-coordinate table ($0.527$\,MiB), against
$4.237$\,MiB for PPE's expanded $3K$-row membership; the exact moments are not separately
metered there, so the marginal charged state of stored support over centre is zero.

\begin{table}[htbp]
\centering
\small
\caption{\textbf{Classification request cost on EPIC onset.} Descriptive pooled timings
for ten seeds and 1,280 requests per arm in each cell; observed peak allocated memory.
All models are Qwen2.5-Omni on A800 hardware. Charged state is a 48\,KiB position-ID
bound for centre-only and retained membership/moment buffers for stored support.
Medians and p90 are seconds, peak is GiB, and state is KiB.}
\label{tab:classification_cost}
\begin{tabular}{@{}llccc@{}}
\toprule
Model / retention / rate & Readout & Median / p90 & Peak & State \\
\midrule
3B / 0.5 / 16 fps & Centre-only & 6.61 / 6.87 & 24.08 & 48 \\
3B / 0.5 / 16 fps & Stored support & 6.60 / 6.88 & 24.08 & 17 \\
3B / 0.25 / 8 fps & Centre-only & 5.19 / 5.46 & 22.48 & 48 \\
3B / 0.25 / 8 fps & Stored support & 5.20 / 5.44 & 22.48 & 4.5 \\
7B / 0.25 / 8 fps & Centre-only & 5.19 / 5.42 & 34.71 & 48 \\
7B / 0.25 / 8 fps & Stored support & 5.17 / 5.42 & 34.71 & 4.5 \\
\bottomrule
\end{tabular}
\end{table}

\begin{table}[htbp]
\centering
\small
\caption{\textbf{Generative execution cost.} The spatial panel compares Centre, Stored support and native PPE at one-quarter retention. Timing begins after media preprocessing and covers model execution through autoregressive decoding. Accuracy uses the first 256 items of the fixed evaluation order, paired across arms. Latency is median/p90 in milliseconds, throughput is requests/s, GPU is observed peak allocated GiB, and KV/state use MiB. Raw traces also retain time to first token (TTFT), generated-token throughput and reserved memory. Timing and memory summaries are descriptive.}
\label{tab:gpu_systems_pending}
\begin{tabular}{@{}lccccc@{}}
\toprule
Method & Accuracy & Med./p90 & Req./s & GPU & KV/state \\
\midrule
Centre & 0.676 & 930.8/4144.2 & 0.46 & 25.51 & 320.41/0.527 \\
Stored support & 0.692 & 1333.8/8622.2 & 0.20 & 25.51 & 320.41/0.527 \\
PPE & 0.777 & 984.1/3194.9 & 0.30 & 25.51 & 320.41/4.237 \\
\bottomrule
\end{tabular}
\end{table}

\begin{table}[htbp]
\centering
\small
\caption{\textbf{Generated lengths in the MVBench timing subset.} Descriptive summaries
of five seeds on the same 256 items (1,280 requests per arm), with a 128-token cap.
Repeated item/seed requests are not treated as independent inferential replicates.}
\label{tab:output_lengths}
\begin{tabular}{@{}lccc@{}}
\toprule
Readout & Mean tokens & Median tokens & Requests at cap \\
\midrule
Centre-only & 4.36 & 2 & 8 \\
Stored support & 12.86 & 3 & 79 \\
PPE & 8.95 & 3 & 56 \\
\bottomrule
\end{tabular}
\end{table}

\tabref{tab:classification_cost} reports classification request times and memory.
For generation, \tabref{tab:output_lengths} shows why a common cap does not imply equal
work: stored support produces longer responses on average and reaches the cap more often
than centre-only. The latency differences in \tabref{tab:gpu_systems_pending} include this
behaviour and the membership-based readout computation.

\section{Experimental protocols and additional results}
\label{sec:app_experiments}

This appendix specifies the datasets, models, interventions and statistical estimators used in the
experiments. Tables include the full evaluation grids, with sample sizes stated separately where
arms have different seed budgets; they shorten exact Aperture to `Aperture' where its row pairs
with Aperture RMS. All reported model accuracies use held-out test data unless
explicitly identified as a diagnostic on a different split. Numerical results and protocol
constants are generated from the saved run records.

\subsection{Statistical analysis}
\label{sec:app_stats}

\paragraph{Replicates and uncertainty.}
A replicate is a trained model identified by its training condition and random seed.
Within-model width sensitivity compares the same checkpoint and test set with and without
width permutation. Cross-arm comparisons use matched seeds where available; cells obtained
by evaluating one checkpoint under several policies are averaged within checkpoint before
pooling. For transfer across policies, repeated policy evaluations with the same seed form
one seed group. Standard deviations describe seed variability, not uncertainty in the mean.
Unless stated otherwise, confidence intervals are $95\%$ and tests are two-sided.
The summaries for the studies in this appendix do not contain item-level outcomes needed
to estimate uncertainty over the test population, so their intervals are conditional on the
fixed test sets. The native evaluation separately retains item-level predictions and uses
the crossed bootstrap of \secref{sec:app_gpu}.

\paragraph{Four-policy contrasts.}
For the four conditions $C,D,M,E$ defined in \secref{sec:factorial}, we estimate the
two descriptive contrasts $L_1=(M+E-C-D)/2$ and $L_2=(D+E-C-M)/2$.
Each cell contributes equally. Their difference is $D-M$. Since overlap and gaps are
different geometry changes, $L_1,L_2$ are not causal main effects of a fully crossed design. The main analysis uses cell-wise seed variances with
Welch--Satterthwaite degrees of freedom. The $D-M$ comparison additionally uses
$\GeomNBoot$ percentile-bootstrap resamples of the trained runs within each of its two
conditions. Its interval is $[\GeomEffInversionBootLo,\GeomEffInversionBootHi]$ for
Aperture RMS and $[\GeomPlainInversionBootLo,\GeomPlainInversionBootHi]$ for
exact Aperture. Hedges' $g$ gives a standardised two-sample effect size.
These analyses estimate variability across model fits; neither test-set size nor repeated
evaluation of a checkpoint increases the number of training replicates.
\tabref{tab:factorial_intervals} collects the four-policy contrasts. As a sensitivity analysis, pairing the shared seed identities gives $D-M$ intervals of
$[\ManuscriptPlainPairedLo,\ManuscriptPlainPairedHi]$ for exact Aperture and
$[\ManuscriptRmsPairedLo,\ManuscriptRmsPairedHi]$ for Aperture RMS. Both retain the ordering.

\paragraph{Multiplicity and interpretation.}
The analysis retains a declared family of $\StatNConfirm$ directional effect comparisons,
including comparisons that are not emphasised in the main text. Holm correction is applied
at $\alpha=\StatAlpha$ to this full family. The family includes every declared effect contrast. The family is an analysis declaration, not a study-wide preregistration.
Three transfer hypotheses occupy noninferential slots with conservative $p=1$ in this
family. Transfer uncertainty uses five shared seed blocks and is descriptive
(\secref{sec:app_trans}); checkpoint-level tests do not enter the inferential analysis.
Unadjusted $p$-values in detailed tables are descriptive unless their family-wise status is
stated. In particular, the geometry/content ordering survives correction
($p\StatHolmInversion$), whereas the 3B placement swing does not
(adjusted $p=\StatHolmDidAdj$).
A nonsignificant difference is reported as unresolved; it is not evidence of equality.
Where an equivalence threshold is computed from the observed interval, it is a descriptive
bound, not a test against an independently chosen practical margin.

\begin{table}[htbp]
\centering
\setlength{\tabcolsep}{4pt}
\caption{\textbf{Uncertainty in four-policy contrasts.} The $L_1$ and $L_2$ intervals use cell-wise Welch--Satterthwaite variances; $D-M$ intervals use 20,000 independent seed resamples within each condition. All contrasts are in accuracy units, with 15 runs per cell.}
\label{tab:factorial_intervals}
\small
\begin{tabular}{@{}llcc@{}}
\toprule
Code & Contrast & Estimate & 95\% interval \\
\midrule
Aperture & $L_1$ & $-0.0677$ & $[-0.0808,-0.0546]$ \\
Aperture & $L_2$ & $-0.0077$ & $[-0.0208,+0.0054]$ \\
Aperture & $D-M$ & $+0.0600$ & $[+0.0391,+0.0798]$ \\
Aperture RMS & $L_1$ & $-0.0939$ & $[-0.1069,-0.0808]$ \\
Aperture RMS & $L_2$ & $-0.0072$ & $[-0.0203,+0.0058]$ \\
Aperture RMS & $D-M$ & $+0.0867$ & $[+0.0663,+0.1070]$ \\
\bottomrule
\end{tabular}
\end{table}

\begin{figure}[htbp]
\centering
\includegraphics[width=\linewidth]{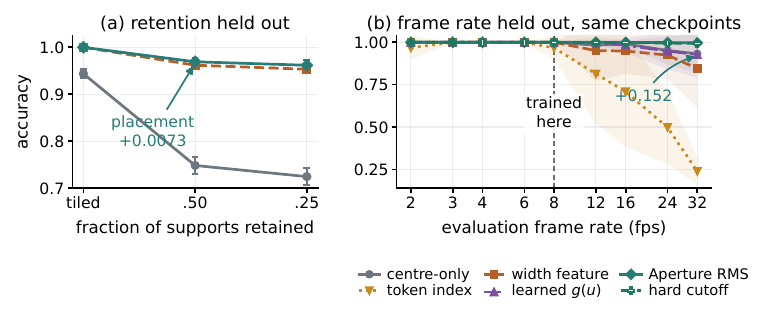}
\caption{\textbf{Sampling-rate and retention transfer.} Test accuracy of the same checkpoints
under token eviction and changes in video frame rate. Points show seed means and bars one
seed standard deviation; every arm has $n=15$ seeds except token index ($n=5$). Arm colours
match \figref{fig:epic}; token index denotes the per-modality index arm. Exact values and all arms are in \tabref{tab:v7} and \tabref{tab:v7full}.}
\label{fig:rate}
\end{figure}

\subsection{Synthetic sampling-rate transfer}
\label{sec:app_rate}

\paragraph{Continuous signals and tokenisation.}
Each $\VseDuration$\,s stream contains one video event and one audio event, with type vectors
in $\mathbb{R}^{\VseDsig}$ drawn from a fixed set of $\VseNtypes$ types.
Events have Gaussian width $\sigma=\VseSigma$\,s and signed lag in
$[-\VseLagMax,\VseLagMax]$\,s. The target is one of $\VseNbins$ lag bins of width
$\VseBinMs$\,ms. Each token is the exact mean of the continuous signal over its support,
computed from the Gaussian CDF. Audio is sampled at $\VseFpsA$\,fps; video is sampled at
$\VseTrainRate$\,fps in training and $\VseEvalFpsLo$--$\VseEvalFpsHi$\,fps in evaluation.
There are no distractor events or additive noise in this configuration.

\paragraph{Signal-level diagnostics.}
An exhaustive decoder estimates the audio event time and compares all candidate lag bins with
the retained video features. On $\CeilNcells$ cells with $\CeilNval$ streams each, its
accuracy is $\CeilRateLo$ at every evaluated rate and at least $\CeilEvictLo$ across the
retention cells. The forward model reproduces the generated features at the true event time
with relative residual $\CeilSelfCheck$. These checks show that rate transfer is not primarily
limited by loss of label identifiability in this noiseless generator. They do not establish
equal numerical difficulty: the smallest wrong-bin residual changes by
$\CeilRateMarginRatio\times$ across the rate ladder. Peak token norm varies by only
$\VseFlatness\times$, limiting input-magnitude change as an explanation of the observed
accuracy differences. The decoder is a diagnostic on generated signals, not a trained baseline.

\paragraph{Reader and training.}
The causal, full-prefix reader has $\VseNlayer$ layers, $\VseDmodel$ channels,
$\VseNhead$ attention heads and $\VseNparams$ parameters. Rotary periods range from
$\VsePmin$ to $\VsePmax$\,s. Training uses $\VseSteps$ steps, batch size $\VseBatch$,
AdamW with learning rate $\VseLr$, weight decay $\VseWd$ and $\VseWarmup$ warmup steps;
evaluation occurs every $\VseEvalEvery$ steps on $\VseNval$ streams.
All runs include random contiguous merges and therefore heterogeneous widths.
One regime trains on tilings and the other with retention fraction $\VseKeep$.
Each checkpoint is evaluated on a tiling, retention fractions $\VseKeep$ and
$\VseKeepLo$, and the rate ladder within its training regime.

\paragraph{Arm definitions and sample sizes.}
The centre-only clock uses physical time; the index arm uses per-modality token counts.
The feature arm appends scalar width to the content pathway. The hard-cutoff arm retains
channels satisfying $\theta_mw\le\pi$, and the learned-gain arm replaces the analytic
response to $u=\theta_mw/2$ with an MLP. These alternatives differ in both their functional
constraints and, for the learned arms, their parameterisation.
The grid contains $\VseNruns$ runs. Seed counts vary by arm and training regime and are
reported with every table block; paired contrasts use the intersection of available seeds.

\paragraph{Additional results.}
The tiling-trained, tiling-evaluated cell is near ceiling
($\VseSatLo$--$\VseSatHi$), limiting its ability to distinguish codes. The same evaluation
cell reached from eviction training spans $\VseEvTiledLo$--$\VseEvTiledHi$ and measures
transfer between geometries. At $\VseFarRate$\,fps under tiling training, the Aperture RMS,
feature-width and hard-cutoff arms obtain $\VseTiApertureRmsFar$,
$\VseTiWallclockWfeatFar$ and $\VseTiNyquistFar$, respectively; the paired
kernel-minus-feature contrast is $\VseMechTi$. Under eviction training the index arm
reaches $\VseEvIndexFar$, compared with $\VseIndexEvHalf$ under half retention at the
training rate. These results distinguish the need for a physical clock from the additional
choice of width representation.

\begin{table}[htbp]
\centering
\setlength{\tabcolsep}{4pt}
\caption{\textbf{Synthetic sampling-rate transfer.} Mean test accuracy (seed s.d.) from the same eviction-trained checkpoints. The training rate is 8 fps; 32 fps is held out. The last column reports the lowest seed accuracy. Chance is 0.125.}
\label{tab:v7}
\small
\begin{tabular}{@{}lrccc@{}}
\toprule
Code & $n$ & 8 fps & 32 fps & Minimum at 32 fps \\
\midrule
Token index & 5 & $0.967\;(0.068)$ & $0.238\;(0.077)$ & 0.174 \\
Interleaved index & 5 & $1.000\;(0.000)$ & $0.292\;(0.079)$ & 0.195 \\
Centre only & 15 & $1.000\;(0.002)$ & $0.932\;(0.117)$ & 0.677 \\
Width feature & 15 & $1.000\;(0.001)$ & $0.845\;(0.228)$ & 0.333 \\
Aperture & 15 & $1.000\;(0.002)$ & $0.951\;(0.092)$ & 0.662 \\
Aperture RMS & 15 & $1.000\;(0.000)$ & $0.997\;(0.006)$ & 0.981 \\
Hard cutoff & 15 & $1.000\;(0.001)$ & $0.994\;(0.009)$ & 0.969 \\
Hard cutoff RMS & 5 & $1.000\;(0.000)$ & $0.954\;(0.101)$ & 0.774 \\
Learned gain & 15 & $0.999\;(0.004)$ & $0.928\;(0.137)$ & 0.479 \\
Learned gain RMS & 5 & $1.000\;(0.000)$ & $0.989\;(0.014)$ & 0.966 \\
Absolute gain RMS & 5 & $1.000\;(0.000)$ & $0.999\;(0.003)$ & 0.994 \\
First lobe RMS & 5 & $1.000\;(0.000)$ & $1.000\;(0.000)$ & 1.000 \\
\bottomrule
\end{tabular}
\end{table}

\begin{table}[htbp]
\centering
\setlength{\tabcolsep}{4pt}
\caption{\textbf{Synthetic retention evaluation.} Mean test accuracy (seed s.d.) at 8 fps, grouped by training regime. Each row reports its own seed count. Training on tilings produces near-ceiling accuracy on the matching evaluation, which limits the sensitivity of that comparison.}
\label{tab:v7full}
\small
\begin{tabular}{@{}lrccc@{}}
\toprule
\multicolumn{5}{@{}l}{\textbf{Trained on tilings}} \\
Code & $n$ & Retain 1.00 & Retain 0.50 & Retain 0.25 \\
\midrule
Token index & 5 & $0.999\;(0.002)$ & $0.370\;(0.048)$ & $0.354\;(0.048)$ \\
Interleaved index & 5 & $0.999\;(0.002)$ & $0.347\;(0.065)$ & $0.339\;(0.068)$ \\
Centre only & 5 & $1.000\;(0.000)$ & $0.442\;(0.067)$ & $0.423\;(0.061)$ \\
Width feature & 5 & $1.000\;(0.000)$ & $0.442\;(0.062)$ & $0.422\;(0.064)$ \\
Aperture & 5 & $1.000\;(0.000)$ & $0.431\;(0.069)$ & $0.411\;(0.066)$ \\
Aperture RMS & 5 & $1.000\;(0.000)$ & $0.453\;(0.121)$ & $0.422\;(0.116)$ \\
Hard cutoff & 5 & $1.000\;(0.000)$ & $0.458\;(0.038)$ & $0.441\;(0.033)$ \\
Hard cutoff RMS & 5 & $1.000\;(0.000)$ & $0.477\;(0.090)$ & $0.466\;(0.088)$ \\
Learned gain & 5 & $1.000\;(0.000)$ & $0.540\;(0.079)$ & $0.517\;(0.074)$ \\
Learned gain RMS & 5 & $1.000\;(0.000)$ & $0.432\;(0.070)$ & $0.411\;(0.060)$ \\
Absolute gain RMS & 5 & $1.000\;(0.000)$ & $0.394\;(0.088)$ & $0.367\;(0.077)$ \\
First lobe RMS & 5 & $1.000\;(0.000)$ & $0.424\;(0.033)$ & $0.403\;(0.036)$ \\
\midrule
\multicolumn{5}{@{}l}{\textbf{Trained with eviction}} \\
Code & $n$ & Retain 1.00 & Retain 0.50 & Retain 0.25 \\
\midrule
Token index & 5 & $0.945\;(0.084)$ & $0.755\;(0.025)$ & $0.736\;(0.020)$ \\
Interleaved index & 5 & $0.935\;(0.017)$ & $0.756\;(0.019)$ & $0.742\;(0.017)$ \\
Centre only & 15 & $0.944\;(0.010)$ & $0.748\;(0.018)$ & $0.724\;(0.018)$ \\
Width feature & 15 & $1.000\;(0.000)$ & $0.962\;(0.008)$ & $0.953\;(0.009)$ \\
Aperture & 15 & $0.999\;(0.002)$ & $0.970\;(0.018)$ & $0.961\;(0.022)$ \\
Aperture RMS & 15 & $1.000\;(0.000)$ & $0.969\;(0.009)$ & $0.962\;(0.011)$ \\
Hard cutoff & 15 & $0.999\;(0.001)$ & $0.952\;(0.012)$ & $0.943\;(0.012)$ \\
Hard cutoff RMS & 5 & $0.999\;(0.001)$ & $0.951\;(0.017)$ & $0.942\;(0.023)$ \\
Learned gain & 15 & $1.000\;(0.000)$ & $0.968\;(0.009)$ & $0.960\;(0.010)$ \\
Learned gain RMS & 5 & $1.000\;(0.000)$ & $0.978\;(0.002)$ & $0.969\;(0.006)$ \\
Absolute gain RMS & 5 & $1.000\;(0.001)$ & $0.967\;(0.004)$ & $0.958\;(0.009)$ \\
First lobe RMS & 5 & $1.000\;(0.001)$ & $0.967\;(0.012)$ & $0.955\;(0.014)$ \\
\bottomrule
\end{tabular}
\end{table}

\subsection{Training coverage of the gain argument}
\label{sec:app_coverage}

This experiment compares single-rate training with augmentation over
\CovTrainRates\,fps under half retention. The augmented condition covers approximately
$86\%$ of the evaluated range in $\log u$, where $u=\theta w/2$.
All runs complete the training schedule with loss below $0.1$.
At $\CovFarRate$\,fps and $\CovNSeeds$ seeds per cell, the smallest paired
Wilcoxon signed-rank comparison across the six arm pairs gives $p\CovWorstP$.
The closed-form versus learned-gain comparison gives $p\CovLearnSingleP$ with single-rate
training and $p\CovLearnCoveredP$ with augmentation.
Seed distributions include both near-ceiling and low-accuracy outcomes: in the augmented
condition, exact Aperture spans $[\CovApertureFarLo,\CovApertureFarHi]$ and the learned gain
$[\CovLearnFarLo,\CovLearnFarHi]$. Medians and ranges at representative rates appear in
\tabref{tab:coverage}. The grid does not establish an ordering among the tested gain functions.

\begin{table}[htbp]
\centering
\setlength{\tabcolsep}{4pt}
\caption{\textbf{Training coverage and rate transfer.} Median test accuracy [minimum, maximum] across seeds. Single-rate training uses 8 fps; covered training samples 2, 4, 8, 16 and 32 fps. Both 64 and 128 fps are outside these training sets. The substantial ranges motivate the paired rank comparisons in the text.}
\label{tab:coverage}
\small
\begin{tabular}{@{}llrccc@{}}
\toprule
Training & Code & $n$ & 32 fps & 64 fps & 128 fps \\
\midrule
Single & Aperture & 15 & $0.979\;[0.147,1.000]$ & $0.885\;[0.152,1.000]$ & $0.750\;[0.131,1.000]$ \\
Single & Learned gain & 15 & $0.971\;[0.290,1.000]$ & $0.762\;[0.108,1.000]$ & $0.574\;[0.201,1.000]$ \\
Single & Width feature & 15 & $0.905\;[0.213,1.000]$ & $0.824\;[0.134,0.991]$ & $0.544\;[0.089,0.981]$ \\
\midrule
Covered & Aperture & 15 & $1.000\;[0.999,1.000]$ & $0.994\;[0.465,1.000]$ & $0.938\;[0.403,1.000]$ \\
Covered & Learned gain & 15 & $1.000\;[0.981,1.000]$ & $0.999\;[0.553,1.000]$ & $0.988\;[0.184,1.000]$ \\
Covered & Width feature & 15 & $1.000\;[0.997,1.000]$ & $1.000\;[0.967,1.000]$ & $0.909\;[0.641,1.000]$ \\
\bottomrule
\end{tabular}
\end{table}

\subsection{Large-model temporal localisation}
\label{sec:app_scale}

\paragraph{Models, data and readout.}
Qwen2.5-Omni-3B and -7B use their respective frozen audiovisual towers with LoRA adaptation
of the thinker. The trainable counts are $\ScaleTrainable$ and $\SevScaleTrainable$ million
parameters. Both scales use the participant-disjoint EPIC onset protocol, with
$\ScaleNBins$ bins and chance accuracy $\ScaleChance$.
The primary comparisons use $\ScaleNPaired$ matched seeds,
$\ScaleSeedList$, at each scale. The width-feature arm adds $\ScaleWfeatExtra$ parameters
($\ScaleWfeatExtraPct$ of the trainable count) through a zero-initialised projection.
All arms receive the same content-similarity term in the readout logits. It helps identify
the relevant content without supplying token width. These experiments therefore evaluate
positional encoding within an assisted localisation reader.
All models train for 3,000 steps with AdamW, batch size 8, learning rate $10^{-4}$,
weight decay 0.01, 100 warmup steps, cosine decay, gradient clipping at norm 1,
and LoRA rank 32. Validation uses 384 items every 500 steps; the checkpoint with
the highest mean bin accuracy across trained budgets is evaluated on 2,048 test items.
The readout receives the query, audio tokens and video tokens in that order under a causal
mask, so this evaluation does not impose chronological audiovisual streaming.

\paragraph{Budgets and evaluation metrics.}
A budget $b$ merges $b$ consecutive video tokens. Training samples
$b\in\{\ScaleTrained\}$, while $b=\ScaleFarBudget$ is the primary held-out budget.
At four video tokens per second and one-second label bins, its video-token width equals
one bin. Bin accuracy and token-localisation accuracy are consequently nearly identical
there: their measured ratio is $\ScaleBinTokRatioMFour$.
At $b=\ScaleSatBudget$, a video token spans multiple bins, so correctly identifying a token
alone does not specify the target bin. We report this wider-budget evaluation descriptively
and use token accuracy for the supplementary budget comparison. Token width by itself does
not prove an information-theoretic ceiling: aggregated content can also encode within-token
timing. The temporal-scramble check, measured at the coarsest trained budget, incurs
$\ScrambleCost$ accuracy loss; it is not a separate validation of every wider budget.

\paragraph{Baseline conventions.}
All arms use the same rounded temporal position IDs and rotary frequency bank.
MN-sinc's duration normaliser is computed over the temporal rotary pairs to which its gain is
applied; non-temporal pairs retain unit gain. The width-capped phase comparison uses the same rounded
centres as the other arms and a width-dependent temporal frequency cap.
The 7B model shares the 3B model's temporal rotary allocation, head dimension and position-ID
scale, so these positional transformations require no scale-specific retuning.

\paragraph{Primary comparisons.}
At $b=\ScaleFarBudget$, exact Aperture exceeds MN-sinc by $\ScaleVsRote$ at 3B and
$\SevScaleVsRote$ at 7B. Against the centre-only clock the differences are
$\ScaleVsBlind$ ($p\ScaleVsBlindP$) and $\SevScaleVsBlind$ ($p\SevScaleVsBlindP$).
These latter comparisons do not establish superiority. At 3B the change in the
Aperture-minus-feature contrast from the mean trained-budget anchor to the held-out budget is
$\ScaleDidWfeatSel$; its adjusted $p=\StatHolmDidAdj$ does not support a family-wise
significance claim. At 7B, the trained and held-out feature contrasts have the same sign
($\SevScaleShapeWfeatIn$ and $\SevScaleShapeWfeatFar$), so the 3B sign pattern does not
reproduce. The width-capped phase comparisons are also unresolved: $\ScaleSyncInDist$ in distribution
and $\ScaleSyncFar$ at the held-out budget, with $\ScaleSyncN$ matched seeds.

\paragraph{Token and bin metrics.}
The model predicts a video-token index. Bin accuracy is obtained by mapping the centre of
that predicted token into the fixed label grid; the reader has no separate within-token
regression head. At the primary held-out budget, Aperture's token contrast against MN-sinc
is $\ScaleTokRote$, closely matching its bin contrast.
At finer budgets, several token centres can map to one bin, whereas at wider budgets one
token spans multiple bins. Ratios of the aggregate bin and token accuracies are consequently
not conditional within-token accuracy and are not used to infer interpolation ability.

\begin{table}[htbp]
\centering
\setlength{\tabcolsep}{4pt}
\caption{\textbf{3B budget evaluation.} Bin accuracy, mean (seed s.d.); training budgets are $b=1,2$, the primary held-out budget is $b=4$, and $b=8$ is supplementary. MN-sinc and width-capped phase are controls defined in Appendix~\ref{sec:app_related}. Contrasts use matched seeds; row means use the stated $n$.}
\label{tab:scale}
\small
\begin{tabular}{@{}lrcccc@{}}
\toprule
Code & $n$ & $b=1$ & $b=2$ & $b=4$ & $b=8$ \\
\midrule
Centre only & 24 & $0.669\;(0.077)$ & $0.739\;(0.059)$ & $0.708\;(0.045)$ & $0.310\;(0.025)$ \\
Width feature & 24 & $0.695\;(0.062)$ & $0.762\;(0.050)$ & $0.722\;(0.038)$ & $0.317\;(0.020)$ \\
Aperture & 48 & $0.687\;(0.067)$ & $0.753\;(0.056)$ & $0.731\;(0.044)$ & $0.346\;(0.013)$ \\
MN-sinc & 24 & $0.677\;(0.080)$ & $0.746\;(0.070)$ & $0.674\;(0.055)$ & $0.317\;(0.029)$ \\
Width-capped phase & 24 & $0.696\;(0.083)$ & $0.757\;(0.069)$ & $0.733\;(0.053)$ & $0.302\;(0.023)$ \\
Aperture RMS & 24 & $0.668\;(0.082)$ & $0.738\;(0.068)$ & $0.714\;(0.052)$ & $0.324\;(0.020)$ \\
Hard cutoff & 24 & $0.678\;(0.074)$ & $0.746\;(0.062)$ & $0.727\;(0.052)$ & $0.338\;(0.020)$ \\
Aperture, matched RMS & 24 & $0.682\;(0.075)$ & $0.747\;(0.064)$ & $0.695\;(0.042)$ & $0.321\;(0.019)$ \\
Absolute gain & 48 & $0.676\;(0.079)$ & $0.743\;(0.069)$ & $0.722\;(0.054)$ & $0.344\;(0.015)$ \\
First lobe & 24 & $0.693\;(0.063)$ & $0.758\;(0.054)$ & $0.735\;(0.040)$ & $0.348\;(0.021)$ \\
\bottomrule
\end{tabular}
\end{table}

\begin{table}[htbp]
\centering
\setlength{\tabcolsep}{4pt}
\caption{\textbf{7B budget evaluation.} Bin accuracy, mean (seed s.d.); training budgets are $b=1,2$, the primary held-out budget is $b=4$, and $b=8$ is supplementary. MN-sinc and width-capped phase are controls defined in Appendix~\ref{sec:app_related}. Contrasts use matched seeds; row means use the stated $n$.}
\label{tab:scale_s7}
\small
\begin{tabular}{@{}lrcccc@{}}
\toprule
Code & $n$ & $b=1$ & $b=2$ & $b=4$ & $b=8$ \\
\midrule
Centre only & 24 & $0.732\;(0.047)$ & $0.794\;(0.038)$ & $0.754\;(0.028)$ & $0.315\;(0.024)$ \\
Width feature & 24 & $0.697\;(0.071)$ & $0.761\;(0.060)$ & $0.730\;(0.046)$ & $0.320\;(0.033)$ \\
Aperture & 24 & $0.709\;(0.073)$ & $0.777\;(0.062)$ & $0.756\;(0.048)$ & $0.356\;(0.015)$ \\
MN-sinc & 24 & $0.711\;(0.048)$ & $0.771\;(0.046)$ & $0.701\;(0.042)$ & $0.337\;(0.022)$ \\
Aperture RMS & 24 & $0.720\;(0.065)$ & $0.780\;(0.058)$ & $0.750\;(0.043)$ & $0.321\;(0.026)$ \\
Aperture, matched RMS & 24 & $0.714\;(0.061)$ & $0.775\;(0.053)$ & $0.708\;(0.039)$ & $0.337\;(0.026)$ \\
\bottomrule
\end{tabular}
\end{table}

\subsection{Budget and gain controls at scale}
\label{sec:app_ladder_scale}

\paragraph{Budget ladder.}
The four budgets in \tabref{tab:ladder_scale} vary the ratio of video to audio widths from
$\LadNarrowRatio\times$ to $\LadWideRatio\times$. Each modality has a common width at a
given budget, so this is a two-width comparison, not a test of arbitrary ragged token supports.
At $b=\LadWideBudget$, the token-accuracy contrast against the clock is $\LadTokMEight$
at 3B (adjusted $p=\LadTokMEightAdj$) and $\SevLadTokMEight$ at 7B
(adjusted $p=\SevLadTokMEightAdj$). The endpoint comparison is exploratory.
A per-seed slope against $\log_2b$ gives $\LadSlopeTok$ at 3B and
$\SevLadSlopeTok$ at 7B; neither survives the reported correction.
The changing budgets also alter sequence length, averaging and distance from training,
so this ladder does not isolate a causal effect of width ratio.

\begin{table}[htbp]
\centering
\setlength{\tabcolsep}{4pt}
\caption{\textbf{Supplementary budget comparison.} Mean paired token-accuracy difference from the centre-only clock on 24 matched seeds. Aperture RMS normalises temporal gain energy and is a separate control from matching MN-sinc. Budgets 1 and 2 are used in training, 4 is the primary held-out budget, and 8 is an exploratory endpoint. Adjusted tests are reported in Appendix~\ref{sec:app_ladder_scale}.}
\label{tab:ladder_scale}
\small
\begin{tabular}{@{}lcccc@{}}
\toprule
Budget & $b{=}1$ & $b{=}2$ & $b{=}4$ & $b{=}8$ \\
\midrule
Aperture $-$ clock, $3$B & $+0.0047$ & $+0.0096$ & $+0.0259$ & $+0.0376$ \\
Aperture $-$ clock, $7$B & $-0.0218$ & $-0.0176$ & $+0.0021$ & $+0.0197$ \\
\addlinespace[2pt]
Aperture RMS $-$ clock, $3$B & $-0.0026$ & $-0.0021$ & $+0.0063$ & $+0.0064$ \\
\bottomrule
\end{tabular}
\end{table}

\paragraph{Gain magnitude.}
At the widest budget, the Aperture RMS arm differs from the clock by
$\LadCtlRms$ ($p\LadCtlRmsP$), while the hard cutoff gives
$\LadCtlNyq$ ($p\LadCtlNyqP$). The latter raw comparison does not survive Holm correction.
These controls motivate a direct match to MN-sinc's magnitude: \texttt{aperture\_infl}
(\tabref{tab:scale_summary}'s Matched-RMS; `Aperture, matched RMS' in \tabref{tab:scale} and
\tabref{tab:scale_s7}) rescales the Aperture gain to MN-sinc's RMS over the temporal channels at each budget.
At the primary held-out budget, the matched contrasts against MN-sinc are
$\ScaleInflVsRote$ at 3B ($p\ScaleInflVsRoteP$, $n=\ScaleInflN$) and
$\SevScaleInflVsRote$ at 7B ($p\SevScaleInflVsRoteP$, $n=\SevScaleInflN$).
Both are smaller than the unmatched contrasts and neither is statistically resolved.
Magnitude is thus a substantial contributor to the measured MN-sinc comparison.
Matching total gain energy does not match the complete gain vector or the resulting logits,
so the controls do not identify every component of the attention mechanism.

\paragraph{Signs and sidelobes.}
Two further variants replace $\operatorname{sinc}(u)$ with its absolute value or retain
only its first lobe. Absolute value preserves every channel magnitude exactly; first-lobe
truncation slightly changes total gain energy as well as removing sidelobes.
At the primary held-out budget, their contrasts against exact Aperture are
$\ClampAbsVsAp$ ($n=\ClampAbsN$) and $\ClampMainVsAp$ ($n=\ClampMainN$), with intervals
$[\ClampAbsCiLo,\ClampAbsCiHi]$ and $[\ClampMainCiLo,\ClampMainCiHi]$.
Neither equivalence test resolves equivalence at the declared margin
$\pm\ClampAbsMargin$. Both modified gains violate exact merge consistency, with
residuals $\ClampCovAbs$ and $\ClampCovMain$, respectively. Their empirical performance
therefore illustrates why the representation theorem and an accuracy ranking are distinct
claims; it does not establish that signs or sidelobes are dispensable.

\subsection{Computational cost}
\label{sec:app_cost}

We measure the rotary operation at \CostShape, using medians over $\CostRounds$ interleaved
rounds. A separate gain application changes its runtime from $\CostRopeUs$ to
$\CostRopeGainUs\,\mu$s, an incremental overhead of $\CostGainOverRope$ relative to RoPE.
The incremental cost is $\CostGainOverAttn$ of a causal SDPA call at the same shape.
These are component measurements on the recorded hardware, not end-to-end throughput estimates.

Since the gain is shared within a rotary pair, it can be folded into the rotary tables
$C=\cos(\theta c)$ and $S=\sin(\theta c)$, with all products applied elementwise:
\begin{equation}
 (qC+\operatorname{rot}(q)S)g
 =q(Cg)+\operatorname{rot}(q)(Sg).
 \label{eq:fold}
\end{equation}
Applying the multiplication to the $[B,1,L,D]$ tables instead of the
$[B,H,L,D]$ activations avoids repeating it across heads.
The implementation agrees with the separate application to relative residual
$\CostFoldErr$ in double precision. Its measured overhead is $\CostFoldOverRope$
when rotary tables are rebuilt within each call, $\CostFoldAmortised$ when reused across
layers, and $\CostFoldShared$ relative to tables shared across the batch.
Caching gains for a finite set of widths gives overhead $\CostFoldHoistOverRope$
when combined with folding. Benefits depend on table reuse and fusion in the host implementation.

For arbitrary support mixtures, storing the complex code requires $\CostPhiBytes$ bytes
per token, $\CostPhiOverKv$ of a single-layer KV entry at the measured shape.
These bytes count Fourier coordinates only. The additive implementation also needs a mass
scalar: 4 bytes in float32 or 8 in float64, unless the compressor already stores it.
The positional state is stored once per token, while KV entries are stored per layer.
The synthetic, small-reader interval and assisted-scale studies apply interval gains
separately. The native 3B/7B and spatial support paths instead replace rotary cosine/sine
tables with the real/imaginary moments, as does the small EPIC reader's stored-support arm.
The folded interval operation implements the same algebra as separate gain application,
subject to floating-point reassociation.

\subsection{Geometric reconstruction diagnostics}
\label{sec:app_ladder}

\tabref{tab:ladder} compares anchored recursion with a linear estimator using relative offsets
on generated segmentations. The estimator is fitted on half the streams and evaluated on the
remaining half, with no stream shared across partitions. The recursion reports a median of
per-stream maximum errors; the linear probe reports the median token error on its held-out streams.
Conditions with too few survivors for a probe are omitted, with valid counts in the table.
The anchored recursion is exact on valid tilings, whereas applying it after gaps or overlaps
need not be. Its error is a diagnostic of that recursion, not a lower bound on all readers.
The relative-offset estimator provides a distribution-dependent test of local recovery,
with a different observation model and error metric.

With the left boundary set to zero, unrolling the recursion gives $w=A_Nc$, where
\begin{equation}
 (A_N)_{1\cdot}=2e_1,\qquad
 (A_N)_{(j+1)\cdot}=2e_{j+1}-2e_j-(A_N)_{j\cdot}.
 \label{eq:recover}
\end{equation}
Its row norms satisfy $\|(A_N)_{j\cdot}\|_1=2(2j-1)$, while the row sums alternate
between $2$ and $-2$. The map therefore depends on an absolute anchor.
Measured conditioning grows approximately quadratically with $N$
(exponent $\RedCondExp$). This concerns sensitivity of the exact inverse and does not
by itself bound the error of an estimator under a distribution of widths.
\tabref{tab:ident} separately evaluates the nullspace dimensions and identified-set
widths in \thmref{thm:reach}.

\begin{table}[htbp]
\centering
\setlength{\tabcolsep}{4pt}
\caption{\textbf{Geometry and width reconstruction.} (a) Width errors normalised by the original mean width. From 2,048 random 32-part streams, conditions retain the stated valid-stream counts. Recursion reports the median maximum error per stream using absolute centres. Linear probes report median token error from relative offsets within $L$ hops, using disjoint stream partitions. (b) Jacobian ranks within fixed families of two-dimensional tilings; full rank gives local identification within that family. (c) Conditioning of the anchored recovery operator. These quantities describe different estimators and information sets.}
\label{tab:ladder}
\small
\begin{tabular}{@{}llccc@{}}
\toprule
\multicolumn{5}{@{}l}{\textbf{(a) Width reconstruction error / $\bar{w}$}}\\
Policy & Streams & Recursion & Linear, $L=1$ & Linear, $L=16$ \\
\midrule
Contiguous tiling & 2048 & $1.78\times 10^{-14}$ & $0.33$ & $0.25$ \\
Unanchored window & 2048 & $0.63$ & $0.34$ & $0.27$ \\
Overlap, $25\%$ & 2048 & $0.98$ & $0.42$ & $0.31$ \\
Overlap, $50\%$ & 2048 & $1.95$ & $0.50$ & $0.37$ \\
Retain $75\%$ & 2048 & $7.10$ & $0.42$ & $0.42$ \\
Retain $50\%$ & 2044 & $12.56$ & $0.53$ & $0.53$ \\
Retain $25\%$ & 1424 & $17.32$ & $0.60$ & $0.59$ \\
\midrule
\multicolumn{5}{@{}l}{(b) Local rank within a fixed tiling family}\\
Guillotine tiling & Boxes & DOF & Full rank & $\sigma_{\min}/\sigma_{\max}$ \\
\midrule
Guillotine & 2 & 1 & 256/256 & $1$ \\
Guillotine & 4 & 3 & 256/256 & $0.215$ \\
Guillotine & 8 & 7 & 256/256 & $0.0315$ \\
Guillotine & 16 & 15 & 256/256 & $0.00556$ \\
Guillotine & 24 & 23 & 256/256 & $0.00159$ \\
Pinwheel & 5 & 4 & 256/256 & $0.707$ \\
\midrule
\multicolumn{5}{@{}l}{(c) conditioning of the recovery operator $w=A_N c$}\\
$N$ & & $\kappa(A_N)$ & $\|A_N\|_2$ & $\max_j\|(A_N)_{j\cdot}\|_1$ \\
\midrule
2 & & $5.828$ & $4.828$ & $6$ \\
8 & & $103.1$ & $20.31$ & $30$ \\
32 & & $1659$ & $81.47$ & $126$ \\
128 & & $2.66\times 10^{4}$ & $325.9$ & $510$ \\
\bottomrule
\end{tabular}
\end{table}

\begin{table}[htbp]
\centering
\setlength{\tabcolsep}{4pt}
\caption{\textbf{Identified sets from relative centre offsets.} $D$ maps the widths of an $N$-part tiling to adjacent offsets; $E$ maps a $2N$-part tiling to offsets among its retained tokens. The reported nullity of $E$ averages 194, 254, 256, 256 and 256 valid half-retention draws for increasing $N$. Identifiable is the fraction of surviving widths fixed by those offsets. Floors are mean identified-set half-diameters, normalised by $\bar w$. Dashes denote spans covered by the full-span column.}
\label{tab:ident}
\small
\begin{tabular}{@{}lccccccc@{}}
\toprule
 & \multicolumn{3}{c}{kernel of the width$\,\to\,$offset map} & \multicolumn{4}{c}{minimax floor / $\bar{w}$} \\
\cmidrule(lr){2-4}\cmidrule(lr){5-8}
$N$ & $\dim\ker D$ & $\dim\ker E$ & Identifiable & $L{=}1$ & $L{=}4$ & $L{=}16$ & Full span \\
\midrule
4 & 1 & 3.9 & 0.000 & 0.880 & --- & --- & 0.480 \\
8 & 1 & 8.4 & 0.000 & 0.813 & 0.318 & --- & 0.246 \\
16 & 1 & 16.5 & 0.000 & 0.781 & 0.269 & --- & 0.121 \\
32 & 1 & 32.6 & 0.000 & 0.767 & 0.249 & 0.085 & 0.062 \\
64 & 1 & 64.4 & 0.000 & 0.759 & 0.237 & 0.073 & 0.030 \\
\bottomrule
\end{tabular}
\end{table}

\subsection{Linear recovery over a finite observation span}
\label{sec:app_span}

\tabref{tab:span} gives the full $(N,L)$ grid for width regression from relative offsets.
At $N=\SpanN$, held-out $R^2$ increases by $\SpanTilingGain$ to $\SpanTilingMax$ on
tilings as span grows, and changes by $\SpanEvictGain$ under eviction.
The linear reader's residual variance on tilings is $\SpanTilingMiss$.
A separate reconstruction using the alternating-mode structure leaves residual variance
$\SpanTilingMissIdent$ on the same ensemble. This shows that much of the linear reader's
error comes from its estimator class; it cannot be equated with information unavailable to
all relative readers. Both quantities are distribution-dependent averages, whereas
\thmref{thm:reach} gives a worst-case identified-set bound.

At fixed span, recovery improves with $N$ in these generated tilings. For example,
$R^2$ at $L=1$ is $\SpanTilingLOne$ at $N=\SpanN$ and $\SpanTilingMinOne$ at
$N=\SpanNmin$. This behaviour is compatible with shrinking parity-wise minima under the
sampled width distribution and does not follow from the condition number of the anchored inverse.

\begin{table}[htbp]
\centering
\setlength{\tabcolsep}{4pt}
\caption{\textbf{Linear width prediction from nearby centre offsets.} Held-out $R^2$ on the segmentation ensemble, with offset span $L$ in token hops. Eviction starts from $2N$ parts and retains half on average. $L_{\rm sat}$ is the smallest tested span within 0.01 of the maximum $R^2$. Dashes denote spans at or beyond the stream length. These are fitted prediction scores, distinct from the worst-case floors in Table~\ref{tab:ident}.}
\label{tab:span}
\small
\begin{tabular}{@{}llccccccc@{}}
\toprule
Regime & $N$ & $L{=}1$ & $L{=}2$ & $L{=}4$ & $L{=}8$ & $L{=}16$ & $R^2_{\max}$ & $L_{\mathrm{sat}}$ \\
\midrule
Tiling & 4 & 0.518 & 0.554 & --- & --- & --- & 0.554 & 2 \\
 & 8 & 0.580 & 0.660 & 0.676 & --- & --- & 0.676 & 4 \\
 & 16 & 0.618 & 0.726 & 0.758 & 0.759 & --- & 0.759 & 4 \\
 & 32 & 0.648 & 0.764 & 0.812 & 0.815 & 0.813 & 0.815 & 4 \\
 & 64 & 0.656 & 0.783 & 0.849 & 0.860 & 0.860 & 0.860 & 8 \\
\midrule
Eviction & 4 & 0.120 & 0.131 & --- & --- & --- & 0.131 & 2 \\
 & 8 & 0.100 & 0.098 & 0.103 & --- & --- & 0.103 & 1 \\
 & 16 & 0.116 & 0.116 & 0.115 & 0.114 & --- & 0.116 & 1 \\
 & 32 & 0.122 & 0.123 & 0.122 & 0.122 & 0.121 & 0.123 & 1 \\
 & 64 & 0.130 & 0.130 & 0.130 & 0.129 & 0.128 & 0.130 & 1 \\
\bottomrule
\end{tabular}
\end{table}

\subsection{EPIC-Kitchens onset localisation}
\label{sec:app_onset}

\paragraph{Four retention-policy interventions.}
\label{sec:factorial}
Write $C,D,M,E$ for width sensitivity under \textsc{Coarsen}, \textsc{Dilate},
\textsc{Mask} and \textsc{Evict}. Coarsening yields a tiling, dilation produces overlap,
masking zeros selected payloads while preserving supports, and eviction removes selected slots.
Masking and eviction remove the same $\GeomLostMask$ tokens per stream on average, with
onset-bearing evidence protected. The distinction between overlap and gaps prevents a causal
factorial interpretation. The direct $D-M$ comparison is $\GeomEffInversion$ for Aperture RMS
and $\GeomPlainInversion$ for exact Aperture; both retain the ordering with matched-seed analysis.
The primary permutation includes padding slots. Live-only permutations test this perturbation
choice separately. Width sensitivity is a within-model quantity and does not rank model accuracy.

\begin{table}[t]
\centering
\setlength{\tabcolsep}{4pt}
\caption{\textbf{Width sensitivity in the geometry/content design.} Test accuracy minus width-permuted accuracy, mean (seed s.d.); $n=15$ per condition and code. The primary audit includes padding slots. The live-slot control gives the same ordering (Appendix~\ref{sec:app_wshuf}).}
\label{tab:geom}
\small
\begin{tabular}{@{}lllcc@{}}
\toprule
Policy & Support & Content & Aperture & Aperture RMS \\
\midrule
Coarsen & Tiling & Retained & $+0.1696\;(0.0220)$ & $+0.2049\;(0.0233)$ \\
Dilate & Overlap & Retained & $+0.1833\;(0.0260)$ & $+0.2132\;(0.0188)$ \\
Mask & Tiling & Masked & $+0.1232\;(0.0322)$ & $+0.1265\;(0.0378)$ \\
Evict & Gaps & Removed & $+0.0942\;(0.0188)$ & $+0.1038\;(0.0123)$ \\
\bottomrule
\end{tabular}
\end{table}

\paragraph{Features and task.}
Each $\GeomWindow$\,s window contains $\GeomJv$ SigLIP\,2 video features of dimension
$\GeomDvid$, extracted at $\GeomVidHz$\,fps, and $\GeomJa$ pooled Whisper audio features
of dimension $\GeomDaud$ at $\GeomAudHz$\,Hz. Video slots are pooled into contiguous tokens
of at most $\GeomMmax$ slots before applying a retention intervention. Audio is unchanged.
The query concatenates the means of $\GeomOnsetP$ uncompressed features before and after
a detected change, giving dimension $\GeomDquery$. The label is one of $\GeomNbins$ bins
of width $\GeomBinS$\,s over $\GeomTmin$--$\GeomTmax$\,s, with chance accuracy
$\GeomChance$ and observed majority-class accuracy $\GeomMajority$.
This query deliberately provides content identity so that the reader must localise that
content in time. It is a controlled feature-derived task, not natural-language event recognition.

\paragraph{Label construction and retention.}
Changes are detected on uncompressed features using a matched step filter with threshold
$\GeomOnsetThr$ and isolation $\GeomOnsetIsolate$ frames. A target bin is sampled first and
the window is positioned around the change, controlling the label prior.
Times are relative to the window start. The average coarsened token is
$\GeomMeanWCoarsen$\,s wide, exceeding the bin width.
The onset-bearing token is retained in the deletion conditions; deleting neighbours changes
context while preserving the principal onset evidence. Masking preserves token locations and
widths but zeros selected features; eviction removes the selected slots. Dilation creates
overlapping supports while retaining content. These interventions do not cover every behaviour
of a learned compressor.

\paragraph{Geometric availability diagnostic.}
The availability column in \tabref{tab:pub} compares three fixed retrieval decoders.
After selecting a token by query/content similarity, they use its centre alone, interpolate
within its true width, or interpolate using a width reconstructed from the anchored centres.
Availability is true-width decoder accuracy minus the larger accuracy of the other two.
It measures this specified decoder contrast, not an information bound over all centre-only readers.

\paragraph{Splits and training.}
The participant-disjoint partitions contain $\GeomNTrainItems$ items from
$\GeomNTrainVids$ videos for training, $\GeomNValItems$ from $\GeomNValVids$ for validation,
and $\GeomNTestItems$ from $\GeomNTestVids$ for testing; the test partition spans
\EpicTestParticipants{} participants (\EpicTestParticipantList).
The reader has $\GeomNlayer$ layers, $\GeomDmodel$ channels, $\GeomNhead$ heads and
$\GeomNparams$ parameters, with rotary periods from $\GeomPmin$ to $\GeomPmax$\,s.
Training uses $\GeomSteps$ steps, batch size $\GeomBatch$, AdamW at learning rate
$\GeomLr$, weight decay $\GeomWd$ and $\GeomWarmup$ warmup steps.
Checkpoints are selected by validation performance; $\GeomNval$ items are used per
validation evaluation. The feature arm adds $\GeomNparamsWfeat$ parameters
($\GeomNparamsWfeatPct\%$).

\paragraph{Degeneracy and optimisation checks.}
Each run is evaluated with the query removed, the stream removed and the time channel
shuffled. The largest query-only, stream-only and time-shuffled accuracies are
$\GeomAuditQonly$, $\GeomAuditSonly$ and $\GeomAuditTshuf$, compared with
$\GeomAccCoarsen$ for the full coarsened model. The stream-only maximum occurs on a
$\GeomAuditSonlyCell$ run whose full accuracy is $\GeomAuditSonlyAcc$.
A sensitivity analysis retains runs whose best validation checkpoint occurs after
$\GeomConvFrac$ of training. Its stream-only maximum is $\GeomAuditSonlyConv$.
This is a heuristic check on optimisation, not a criterion for excluding runs from the
primary result. All completed seeds remain in the main analysis.

\paragraph{Difficulty and width heterogeneity.}
Centre-only accuracies vary across conditions: $\GeomBaseCoarsen$, $\GeomBaseDilate$,
$\GeomBaseMask$ and $\GeomBaseEvict$. Their $D-M$ contrast is
$\GeomHeadInversion$ ($p\GeomHeadInversionP$), so difficulty cannot be assumed constant.
An accuracy-adjusted regression gives coefficient $\GeomAncContent$; this remains descriptive
because accuracy is itself an intervention outcome. On $\GeomCtlStreams$ streams, width coefficients of variation range
from $\GeomCvLo$ to $\GeomCvHi$. The onset token is protected from deletion, but retrieval
hit rates and optimisation variability still differ across cells.
These diagnostics support the observed ordering without identifying a unique mechanism.

\paragraph{Exact Aperture and Aperture RMS.}
\label{sec:app_plain}
Both codes are evaluated on the same complete four-condition design.
Exact Aperture's $L_1$ and $L_2$ contrasts are $\GeomPlainContent$ and
$\GeomPlainTiling$, with ordering contrast $\GeomPlainInversion$ and Hedges'
$g=\GeomPlainInversionG$. The Aperture RMS ordering is $\GeomEffInversion$
with $g=\GeomEffInversionG$. The ordering therefore holds for both representations;
normalisation changes its size. \tabref{tab:geom} reports the primary estimates and
\tabref{tab:codes} supplies the full arm comparison.

\paragraph{Content dose.}
Masking doses preserve the tiling and widths while removing increasing amounts of content.
The keep-$1$ anchor is the coarsened condition. The first deletion, at keep
$\GeomDoseFirstKeep$, changes width sensitivity by $\GeomDoseFirst$
(raw $p\GeomDoseFirstP$, adjusted $p=\GeomDoseFirstAdj$).
The masked levels lie in $[\GeomDosePlateauLo,\GeomDosePlateauHi]$, with within-mask
Spearman correlation $\GeomDoseRhoMasked$. Thus the observed profile is better described
as a first-step change followed by a plateau than as a resolved monotone dose response.
All levels and their seed spreads are reported in \tabref{tab:dose}.

\begin{table}[htbp]
\centering
\setlength{\tabcolsep}{4pt}
\caption{\textbf{Positional-code comparison within each retention policy.} Test accuracy and within-model width sensitivity $\Delta_w$, mean (seed s.d.). Each row reports its own completed-run count. The RMS aperture and width-feature accuracy contrast is averaged over the four policies in the text.}
\label{tab:codes}
\small
\begin{tabular}{@{}lrcc@{}}
\toprule
Code & $n$ & Accuracy & $\Delta_w$ \\
\midrule
\multicolumn{4}{@{}l}{Coarsen} \\
Token index & 15 & 0.4434\,(0.0210) & +0.0000\,(0.0000) \\
Centre only & 15 & 0.4643\,(0.0166) & +0.0000\,(0.0000) \\
Width feature & 15 & 0.5214\,(0.0242) & +0.2626\,(0.0298) \\
Hard cutoff RMS & 5 & 0.5213\,(0.0309) & +0.1678\,(0.0305) \\
Aperture & 15 & 0.5174\,(0.0210) & +0.1696\,(0.0220) \\
Aperture RMS & 15 & 0.5238\,(0.0123) & +0.2049\,(0.0233) \\
\midrule
\multicolumn{4}{@{}l}{Dilate} \\
Token index & 10 & 0.4308\,(0.0143) & +0.0000\,(0.0000) \\
Centre only & 15 & 0.4508\,(0.0291) & +0.0000\,(0.0000) \\
Width feature & 15 & 0.5020\,(0.0195) & +0.2044\,(0.0313) \\
Aperture & 15 & 0.5099\,(0.0234) & +0.1833\,(0.0260) \\
Aperture RMS & 15 & 0.5118\,(0.0146) & +0.2132\,(0.0188) \\
\midrule
\multicolumn{4}{@{}l}{Mask} \\
Token index & 15 & 0.4048\,(0.0348) & +0.0000\,(0.0000) \\
Centre only & 15 & 0.4627\,(0.0311) & +0.0000\,(0.0000) \\
Width feature & 15 & 0.4999\,(0.0485) & +0.1881\,(0.0396) \\
Hard cutoff RMS & 5 & 0.4566\,(0.0477) & +0.1059\,(0.0304) \\
Aperture & 15 & 0.4951\,(0.0366) & +0.1232\,(0.0322) \\
Aperture RMS & 15 & 0.4932\,(0.0647) & +0.1265\,(0.0378) \\
\midrule
\multicolumn{4}{@{}l}{Evict} \\
Token index & 10 & 0.4729\,(0.0128) & +0.0000\,(0.0000) \\
Centre only & 15 & 0.4081\,(0.0251) & +0.0000\,(0.0000) \\
Width feature & 15 & 0.4367\,(0.0277) & +0.1430\,(0.0212) \\
Aperture & 15 & 0.4393\,(0.0253) & +0.0942\,(0.0188) \\
Aperture RMS & 15 & 0.4404\,(0.0250) & +0.1038\,(0.0123) \\
\bottomrule
\end{tabular}
\end{table}

\begin{table}[htbp]
\centering
\setlength{\tabcolsep}{4pt}
\caption{\textbf{Sensitivity across content-retention levels.} Mean test accuracy for the centre-only and Aperture RMS readers; width sensitivity is mean (seed s.d.). Retained fractions below one use masking, with unchanged support geometry. There are 15 seeds at fractions 1 and 0.5, and five at each other level.}
\label{tab:dose}
\small
\begin{tabular}{@{}lcccc@{}}
\toprule
Retained & Masked tokens & Centre only & Aperture RMS & $\Delta_w$ \\
\midrule
1 & 0.00 & 0.4643 & 0.5238 & +0.2049\,(0.0233) \\
0.875 & 1.34 & 0.4168 & 0.4748 & +0.1316\,(0.0394) \\
0.75 & 3.33 & 0.4137 & 0.4250 & +0.0990\,(0.0245) \\
0.5 & 7.14 & 0.4627 & 0.4932 & +0.1265\,(0.0378) \\
0.25 & 10.58 & 0.3750 & 0.3969 & +0.1033\,(0.0282) \\
\bottomrule
\end{tabular}
\end{table}

\subsection{Transfer between retention policies}
\label{sec:app_trans}

We reload $\GeomTransNruns$ checkpoints and evaluate each on $\GeomTransNcells$ cells
with $\GeomTransNitems$ test items, requiring its original evaluation to reproduce.
Tables~\ref{tab:transfer}, \ref{tab:transferfull}, \ref{tab:transfer_mask} and
\ref{tab:transfer_evict} report transfer from coarsen, dilate, mask and evict training,
respectively, with checkpoint weights held fixed.
Width sensitivity remains positive in $\GeomTransNseedcellsPos$ of
$\GeomTransNseedcells$ checkpoint/cell pairs, but this does not rank their accuracy.
Across the four width-aware arms, the rank correlation between retained sensitivity and
accuracy relative to the clock is $\GeomTransRankRho$; restricting to training policies
shared by all four arms gives $\GeomTransRankRhoMatched$.
With four arms, both are descriptive summaries.

\paragraph{Seed groups and transfer comparisons.}
Evaluation cells are first averaged within checkpoint. The four training policies use the
same five seeds, so their contrasts are then averaged within seed before inference.
The resulting independent units are five seed groups, each containing all four policies.
The pooled mean is unchanged by this grouping. For exact Aperture minus the width-feature
arm it is $\GeomTransPoolPlAperture$, with 95\% $t$ interval
$[\RebuildTransferLo,\RebuildTransferHi]$ and unadjusted $p=0.00604$.
An exhaustive two-sided sign-flip test over the five seed contrasts gives $p=0.0625$.
This contrast is reported descriptively; five-seed distributional inference is sensitive
to its assumptions and does not establish a corrected placement advantage.

The width-feature-minus-clock interval is $[-0.0484,-0.0205]$, exact Aperture minus clock
is $[-0.0152,+0.0112]$, and Aperture RMS minus clock is $[-0.0405,+0.0047]$.
These seed-level intervals condition on the fixed test population. The hard-cutoff arm has
fewer training policies; its individual cells remain available without treating missing
policy/seed combinations as observations. The complete numerical archive preserves
checkpoint counts and seed membership as separate fields.

\paragraph{Content-deleting versus overlapping supports.}
On held-out dilation, both the feature and exact Aperture arms have positive mean contrasts
against the clock ($\GeomTransPoolVsWfeatDil$ and $\GeomTransPoolVsApertureDil$).
The feature arm's difference between the deletion and dilation cell sets is
$\GeomTransPoolVsWfeatInt$, reported as a descriptive contrast. Splitting checkpoints by whether their training policy
preserves or removes content is exploratory: it suggests a larger feature-arm deficit after
content-preserving training, without showing that width fabricates content or that
attenuation prevents this behaviour. Absolute transfer accuracies are lower
than in-distribution accuracies, and the reported advantages are relative comparisons among
these transferred models.

\begin{table}[htbp]
\centering
\setlength{\tabcolsep}{3.5pt}
\caption{\textbf{Transfer from coarsen training.} Each row uses five saved checkpoints evaluated on 1,024 test items. $\dagger$ marks the training policy. Width sensitivity is a within-model mean; accuracy differences are paired by seed. Raw denotes unmerged tokens, for which the width-permutation contrast is not reported.}
\label{tab:transfer}
\small
\begin{tabular}{@{}lccccc@{}}
\toprule
Code & Coarsen & Dilate & Mask & Evict & Raw \\
\midrule
\multicolumn{6}{@{}l}{\textbf{(a) Width sensitivity $\Delta_w$}} \\
Token index & $+0.000$$^\dagger$ & $+0.000$ & $+0.000$ & $+0.000$ & --- \\
Centre only & $+0.000$$^\dagger$ & $+0.000$ & $+0.000$ & $+0.000$ & --- \\
Width feature & $+0.266$$^\dagger$ & $+0.009$ & $+0.054$ & $+0.070$ & --- \\
Hard cutoff RMS & $+0.168$$^\dagger$ & $+0.011$ & $+0.060$ & $+0.040$ & --- \\
Aperture & $+0.162$$^\dagger$ & $-0.008$ & $+0.024$ & $+0.042$ & --- \\
Aperture RMS & $+0.203$$^\dagger$ & $+0.033$ & $+0.053$ & $+0.031$ & --- \\
\midrule
\multicolumn{6}{@{}l}{\textbf{(b) Accuracy minus centre only, mean (seed s.d.)}} \\
Token index & $-0.000\;(0.032)$$^\dagger$ & $+0.228\;(0.045)$ & $-0.024\;(0.023)$ & $+0.020\;(0.016)$ & $-0.225\;(0.047)$ \\
Width feature & $+0.073\;(0.009)$$^\dagger$ & $+0.018\;(0.023)$ & $-0.061\;(0.036)$ & $-0.040\;(0.025)$ & $+0.026\;(0.018)$ \\
Hard cutoff RMS & $+0.067\;(0.016)$$^\dagger$ & $+0.035\;(0.015)$ & $+0.025\;(0.016)$ & $-0.014\;(0.028)$ & $+0.017\;(0.026)$ \\
Aperture & $+0.064\;(0.028)$$^\dagger$ & $+0.034\;(0.073)$ & $+0.001\;(0.026)$ & $+0.015\;(0.029)$ & $+0.017\;(0.006)$ \\
Aperture RMS & $+0.066\;(0.013)$$^\dagger$ & $+0.047\;(0.062)$ & $-0.010\;(0.015)$ & $-0.044\;(0.024)$ & $+0.036\;(0.028)$ \\
\bottomrule
\end{tabular}
\end{table}

\begin{table}[htbp]
\centering
\setlength{\tabcolsep}{3.5pt}
\caption{\textbf{Transfer from dilate training.} Each row uses five saved checkpoints evaluated on 1,024 test items. $\dagger$ marks the training policy. Width sensitivity is a within-model mean; accuracy differences are paired by seed. Raw denotes unmerged tokens, for which the width-permutation contrast is not reported.}
\label{tab:transferfull}
\small
\begin{tabular}{@{}lccccc@{}}
\toprule
Code & Coarsen & Dilate & Mask & Evict & Raw \\
\midrule
\multicolumn{6}{@{}l}{\textbf{(a) Width sensitivity $\Delta_w$}} \\
Centre only & $+0.000$ & $+0.000$$^\dagger$ & $+0.000$ & $+0.000$ & --- \\
Width feature & $+0.273$ & $+0.206$$^\dagger$ & $+0.058$ & $+0.082$ & --- \\
Aperture & $+0.148$ & $+0.168$$^\dagger$ & $+0.041$ & $+0.034$ & --- \\
Aperture RMS & $+0.201$ & $+0.217$$^\dagger$ & $+0.049$ & $+0.038$ & --- \\
\midrule
\multicolumn{6}{@{}l}{\textbf{(b) Accuracy minus centre only, mean (seed s.d.)}} \\
Width feature & $+0.054\;(0.029)$ & $+0.040\;(0.036)$$^\dagger$ & $-0.073\;(0.021)$ & $-0.038\;(0.055)$ & $+0.078\;(0.052)$ \\
Aperture & $+0.028\;(0.061)$ & $+0.043\;(0.051)$$^\dagger$ & $-0.012\;(0.026)$ & $-0.026\;(0.050)$ & $+0.047\;(0.047)$ \\
Aperture RMS & $+0.069\;(0.035)$ & $+0.054\;(0.049)$$^\dagger$ & $-0.005\;(0.025)$ & $-0.046\;(0.040)$ & $+0.060\;(0.055)$ \\
\bottomrule
\end{tabular}
\end{table}

\begin{table}[htbp]
\centering
\setlength{\tabcolsep}{3.5pt}
\caption{\textbf{Transfer from mask training.} Each row uses five saved checkpoints evaluated on 1,024 test items. $\dagger$ marks the training policy. Width sensitivity is a within-model mean; accuracy differences are paired by seed. Raw denotes unmerged tokens, for which the width-permutation contrast is not reported.}
\label{tab:transfer_mask}
\small
\begin{tabular}{@{}lccccc@{}}
\toprule
Code & Coarsen & Dilate & Mask & Evict & Raw \\
\midrule
\multicolumn{6}{@{}l}{\textbf{(a) Width sensitivity $\Delta_w$}} \\
Token index & $+0.000$ & $+0.000$ & $+0.000$$^\dagger$ & $+0.000$ & --- \\
Centre only & $+0.000$ & $+0.000$ & $+0.000$$^\dagger$ & $+0.000$ & --- \\
Width feature & $+0.166$ & $+0.010$ & $+0.184$$^\dagger$ & $+0.101$ & --- \\
Hard cutoff RMS & $+0.085$ & $-0.004$ & $+0.106$$^\dagger$ & $+0.046$ & --- \\
Aperture & $+0.118$ & $-0.005$ & $+0.121$$^\dagger$ & $+0.052$ & --- \\
Aperture RMS & $+0.111$ & $+0.001$ & $+0.123$$^\dagger$ & $+0.060$ & --- \\
\midrule
\multicolumn{6}{@{}l}{\textbf{(b) Accuracy minus centre only, mean (seed s.d.)}} \\
Token index & $-0.040\;(0.032)$ & $+0.279\;(0.052)$ & $-0.037\;(0.020)$$^\dagger$ & $-0.068\;(0.040)$ & $-0.254\;(0.027)$ \\
Width feature & $+0.044\;(0.045)$ & $+0.037\;(0.102)$ & $+0.039\;(0.056)$$^\dagger$ & $-0.007\;(0.035)$ & $-0.000\;(0.055)$ \\
Hard cutoff RMS & $+0.001\;(0.052)$ & $+0.036\;(0.057)$ & $+0.013\;(0.047)$$^\dagger$ & $-0.045\;(0.054)$ & $-0.028\;(0.014)$ \\
Aperture & $+0.059\;(0.034)$ & $+0.008\;(0.066)$ & $+0.060\;(0.049)$$^\dagger$ & $+0.003\;(0.040)$ & $+0.009\;(0.050)$ \\
Aperture RMS & $+0.018\;(0.079)$ & $+0.046\;(0.076)$ & $+0.033\;(0.084)$$^\dagger$ & $-0.015\;(0.057)$ & $-0.020\;(0.065)$ \\
\bottomrule
\end{tabular}
\end{table}

\begin{table}[htbp]
\centering
\setlength{\tabcolsep}{3.5pt}
\caption{\textbf{Transfer from evict training.} Each row uses five saved checkpoints evaluated on 1,024 test items. $\dagger$ marks the training policy. Width sensitivity is a within-model mean; accuracy differences are paired by seed. Raw denotes unmerged tokens, for which the width-permutation contrast is not reported.}
\label{tab:transfer_evict}
\small
\begin{tabular}{@{}lccccc@{}}
\toprule
Code & Coarsen & Dilate & Mask & Evict & Raw \\
\midrule
\multicolumn{6}{@{}l}{\textbf{(a) Width sensitivity $\Delta_w$}} \\
Centre only & $+0.000$ & $+0.000$ & $+0.000$ & $+0.000$$^\dagger$ & --- \\
Width feature & $+0.164$ & $+0.003$ & $+0.067$ & $+0.148$$^\dagger$ & --- \\
Aperture & $+0.113$ & $-0.010$ & $+0.047$ & $+0.088$$^\dagger$ & --- \\
Aperture RMS & $+0.125$ & $-0.006$ & $+0.069$ & $+0.109$$^\dagger$ & --- \\
\midrule
\multicolumn{6}{@{}l}{\textbf{(b) Accuracy minus centre only, mean (seed s.d.)}} \\
Width feature & $+0.057\;(0.036)$ & $+0.015\;(0.032)$ & $-0.025\;(0.023)$ & $+0.029\;(0.028)$$^\dagger$ & $+0.050\;(0.046)$ \\
Aperture & $+0.056\;(0.056)$ & $-0.002\;(0.033)$ & $+0.001\;(0.055)$ & $+0.023\;(0.046)$$^\dagger$ & $+0.048\;(0.074)$ \\
Aperture RMS & $+0.047\;(0.037)$ & $+0.003\;(0.018)$ & $-0.004\;(0.046)$ & $+0.017\;(0.020)$$^\dagger$ & $+0.045\;(0.044)$ \\
\bottomrule
\end{tabular}
\end{table}

\subsection{Width-permutation controls}
\label{sec:app_wshuf}

The primary width audit permutes the padded width tensor, including padding slots.
A live token can therefore receive zero width: the measured fractions are
$\GeomWshufPadFracCoarsen\%$ in coarsening, dilation and masking, and
$\GeomWshufPadFracEvict\%$ under eviction. A second audit permutes live slots only.
Across the same checkpoints, its ordering contrast is $\GeomWshufValidEffInversion$,
compared with $\GeomWshufFullEffInversion$ for the primary audit; the ordering persists.
This definition is relevant because width permutation is an intervention on the model input,
not direct removal of width information from a separately trained model.

We evaluate $\GeomWshufNruns$ width-aware checkpoints under $\GeomWshufNmodes$
perturbations, including one transposition, partial permutations and permutations within
width quartiles. The $\GeomWshufNblind$ centre-only controls have exactly zero width
contrast in every mode. A transposition gives mean loss $\GeomWshufSwapGap$, compared with
$\GeomWshufFullGap$ for a full permutation. Quartile-restricted permutations move more
slots than the matched partial permutation but incur less loss; the paired difference is
$\GeomWshufPairedQuant\pm\GeomWshufPairedQuantSd$.
This pattern in \tabref{tab:wshuf} demonstrates sensitivity to the magnitude of width errors. It does not, by
itself, distinguish useful width dependence from graded sensitivity to inputs outside the
training distribution.

\begin{table}[htbp]
\centering
\setlength{\tabcolsep}{4pt}
\caption{\textbf{Width-permutation controls on fixed checkpoints.} Aperture RMS, five checkpoints per policy; perturbation magnitudes average all 20 checkpoints. Moved is the fraction of live slots with a changed width; RMS width change is in seconds, and relative gain change is dimensionless. Policy columns report Aperture RMS width sensitivity on five seeds per policy. All-slot permutation includes padding; other modes operate on live slots. Extent-blind controls have zero sensitivity in every mode.}
\label{tab:wshuf}
\small
\begin{tabular}{@{}lccccccc@{}}
\toprule
Permutation & Moved & $\lVert\Delta w\rVert_{\mathrm{rms}}$ & $\lVert\Delta g\rVert/\lVert g\rVert$ & Coarsen & Dilate & Mask & Evict \\
\midrule
\multicolumn{8}{@{}l}{(a) Padding and live-slot controls} \\
All slots & 0.855 & 1.036 & 0.325 & +0.2033 & +0.2168 & +0.1234 & +0.1092 \\
Live only & 0.809 & 0.890 & 0.266 & +0.1746 & +0.1975 & +0.0891 & +0.0670 \\
\midrule
\multicolumn{8}{@{}l}{(b) Fraction of live slots permuted} \\
One swap & 0.146 & 0.368 & 0.047 & +0.0330 & +0.0451 & +0.0105 & +0.0156 \\
$25\%$ & 0.149 & 0.381 & 0.049 & +0.0424 & +0.0480 & +0.0244 & +0.0178 \\
$50\%$ & 0.368 & 0.600 & 0.121 & +0.0898 & +0.0881 & +0.0543 & +0.0279 \\
$75\%$ & 0.583 & 0.754 & 0.191 & +0.1348 & +0.1576 & +0.0785 & +0.0395 \\
$100\%$ & 0.806 & 0.886 & 0.264 & +0.1709 & +0.1918 & +0.0865 & +0.0711 \\
\midrule
\multicolumn{8}{@{}l}{(c) Restricted width changes} \\
Within quartiles & 0.400 & 0.269 & 0.071 & +0.0350 & +0.0436 & +0.0143 & +0.0070 \\
\bottomrule
\end{tabular}
\end{table}

\subsection{Training with uninformative widths}
\label{sec:app_sham}

The sham-width control permutes live widths independently at every training step and
evaluation batch. It preserves the observed width distribution while breaking its assignment
to tokens. The $\GeomShamNruns$ runs match the corresponding real-width runs in architecture,
training schedule, policy and seed.
Their mean audit gap is $\GeomShamFloor\pm\GeomShamFloorSd$, with maximum absolute gap
$\GeomShamFloorMax$. This measures the audit's behaviour for models trained with uninformative
width assignments; it is not a universal bound on distribution-shift sensitivity in other models.

Real-width minus sham-width accuracy is $\GeomShamDelta\pm\GeomShamDeltaSd$.
Applying the four-policy contrasts to these paired differences gives $L_1=$
$\GeomShamEffContent$ and $L_2=\GeomShamEffTiling$, with the same ordering as
the width audit but less precise estimates. The sham arm also performs below the centre-only
clock by $\GeomShamBelowBlind$. Accordingly, the real-minus-sham advantage should not be
read as the gain from adding width to a model without it; the corresponding real-minus-clock
contrast is $\GeomShamDeltaBlind\pm\GeomShamDeltaBlindSd$.
The comparisons in \tabref{tab:sham} have complementary control structures and do not independently identify
a causal mechanism.

\begin{table}[htbp]
\centering
\setlength{\tabcolsep}{4pt}
\caption{\textbf{Training with informative and permuted widths.} Top: Aperture RMS models with real widths or freshly permuted live widths, five matched seeds per condition. The last column counts seeds with a positive accuracy difference. Bottom: four-policy contrasts and raw $p$-values; the real-width sensitivity uses the full 15-seed grid, while sham sensitivity and accuracy differences use five seeds per cell. These comparisons assess width dependence but do not universally bound sensitivity to out-of-distribution inputs.}
\label{tab:sham}
\small
\begin{tabular}{@{}lcccccc@{}}
\toprule
\multicolumn{7}{@{}l}{(a) Per-condition means} \\
Cell & Sham gap & Real gap & Sham acc.\ & Real acc.\ & $\Delta$ acc.\ & Positive \\
\midrule
Coarsen & $-0.0012$ & $+0.2033$ & 0.4398 & 0.5199 & $+0.0801$ & 5/5 \\
Dilate & $-0.0061$ & $+0.2168$ & 0.4197 & 0.5080 & $+0.0883$ & 5/5 \\
Mask & $-0.0031$ & $+0.1234$ & 0.4318 & 0.4766 & $+0.0447$ & 3/5 \\
Evict & $+0.0031$ & $+0.1092$ & 0.4004 & 0.4369 & $+0.0365$ & 5/5 \\
\midrule
\multicolumn{7}{@{}l}{(b) Four-policy contrasts and raw $p$-values} \\
Contrast & \multicolumn{2}{c}{sham gap} & \multicolumn{2}{c}{real gap (\tabref{tab:geom})} & \multicolumn{2}{c}{$\Delta$ acc.} \\
$L_2$ & $+0.0007$ & $0.837$ & $-0.0072$ & $0.268$ & $+0.0000$ & $1$ \\
$L_1$ & $+0.0036$ & $0.286$ & $-0.0939$ & $<10^{-4}$ & $-0.0436$ & $0.056$ \\
$D-M$ & $-0.0029$ & $0.58$ & $+0.0867$ & $<10^{-4}$ & $+0.0436$ & $0.265$ \\
\bottomrule
\end{tabular}
\end{table}

\subsection{Similarity merging and attention-based retention}
\label{sec:app_pub}

We compare adjacency-based merging with similarity-based merging inspired by ToMe
\citep{tome}, and random eviction with an attention-based retention rule inspired by H$_2$O
\citep{h2o}, at matched budgets. This evaluates retention rules within the onset reader,
not the complete published systems. The grid contains $\PubNruns$ runs trained for
$\PubSteps$ steps, evaluated on $\PubNitems$ test items across $\PubNcells$ cells.
Similarity merging yields width sensitivity $\PubGapTome$, compared with
$\PubGapEvict$ for random eviction. The similarity-minus-adjacency contrast is
$\PubTomeCoarsen$, even though similarity merging has lower accuracy
($\PubAccTome$ versus $\PubAccCoarsen$).
The attention-based policy has the degeneracy reported in \tabref{tab:pub}; its results
are descriptive and excluded from the clean policy ordering.
These controls extend the observed ordering to alternative retention rules, without
implying that all such rules preserve tiling or exact interval support.

\begin{table}[htbp]
\centering
\setlength{\tabcolsep}{4pt}
\caption{\textbf{Retention-policy controls.} (a) Aperture RMS, five seeds per policy; the centre-only accuracy uses one seed. Availability is the geometric width advantage defined in Appendix~\ref{sec:app_onset}; reconstruction error is the median anchored-recursion error at the answer token. No-query accuracy diagnoses content-selection shortcuts. (b) Mean Aperture RMS test accuracy under policy transfer; underlining marks the training condition. Similarity merging and salience eviction are ToMe- and H2O-inspired controls, respectively. Salience-based selection permits shortcuts, limiting comparisons of absolute accuracy.}
\label{tab:pub}
\small
\begin{tabular}{lrrrrrrr}
\toprule
\multicolumn{8}{l}{\bfseries (a) Matched-policy evaluation}\\
Policy & Avail. &  $\Delta_w$ & S.d. & Accuracy & Clock & Error & No query\\
\midrule
Coarsen & 0.0000 & +0.2096 & 0.0093 & 0.5379 & 0.4463 & 0.00\,s & 0.036\\
Similarity merge & 0.0000 & +0.3113 & 0.0134 & 0.4900 & 0.4141 & 0.00\,s & 0.046\\
Evict & 0.2363 & +0.1084 & 0.0053 & 0.4646 & 0.4209 & 1.75\,s & 0.054\\
Salience evict & 0.2227 & +0.2473 & 0.0077 & 0.6480 & 0.5703 & 0.00\,s & 0.214\,\\
\midrule
\multicolumn{8}{l}{\bfseries (b) Test accuracy under policy transfer}\\
Policy & Coarsen & Similarity & Evict & Salience & Raw & \multicolumn{2}{l}{}\\
\midrule
Coarsen & \underline{0.5379} & 0.0605 & 0.2375 & 0.2455 & 0.5820 & \multicolumn{2}{l}{}\\
Similarity merge & 0.4197 & \underline{0.4900} & 0.2330 & 0.2609 & 0.5059 & \multicolumn{2}{l}{}\\
Evict & 0.4834 & 0.0480 & \underline{0.4646} & 0.4723 & 0.3609 & \multicolumn{2}{l}{}\\
Salience evict & 0.2242 & 0.0324 & 0.1627 & \underline{0.6480} & 0.2508 & \multicolumn{2}{l}{}\\
\bottomrule
\end{tabular}
\end{table}

\subsection{Depth and attention-window controls}
\label{sec:app_depth}

The depth grid contains $\GeomNrunsDepth$ runs at
$d\in\{\GeomDepthNs\}$ layers, under coarsening and half eviction, with at least
$\GeomDepthNseeds$ seeds per cell. Full-prefix causal attention is used at every layer,
so depth varies composition rather than the set of input positions potentially observable.
Across the endpoints in \tabref{tab:depth}, centre-only accuracy changes by $\GeomDepthBlindGainCoarsen$ on
coarsening and $\GeomDepthBlindGainEvict$ on eviction; width sensitivity changes by
$\GeomDepthGapGainCoarsen$ and $\GeomDepthGapGainEvict$, respectively.
These are trained-model results, distinct from the geometric width-recovery theorem.

\begin{table}[htbp]
\centering
\setlength{\tabcolsep}{4pt}
\caption{\textbf{Depth with full-prefix attention.} Mean test accuracy or within-model width sensitivity; at least $\GeomDepthNseeds$ seeds per cell. Width sensitivity is measured on Aperture RMS.}
\label{tab:depth}
\small
\begin{tabular}{@{}llcccc@{}}
\toprule
Policy & Quantity & 1 layer & 2 layers & 4 layers & 8 layers \\
\midrule
Coarsen & Clock accuracy & $\GeomDepthBlindCoarsenOne$ & $\GeomDepthBlindCoarsenTwo$ & $\GeomDepthBlindCoarsenFour$ & $\GeomDepthBlindCoarsenEight$ \\
Coarsen & Width sensitivity & $\GeomDepthCoarsenOne$ & $\GeomDepthCoarsenTwo$ & $\GeomDepthCoarsenFour$ & $\GeomDepthCoarsenEight$ \\
Evict & Clock accuracy & $\GeomDepthBlindEvictOne$ & $\GeomDepthBlindEvictTwo$ & $\GeomDepthBlindEvictFour$ & $\GeomDepthBlindEvictEight$ \\
Evict & Width sensitivity & $\GeomDepthEvictOne$ & $\GeomDepthEvictTwo$ & $\GeomDepthEvictFour$ & $\GeomDepthEvictEight$ \\
\bottomrule
\end{tabular}
\end{table}

\paragraph{Attention-window sweep.}
\label{sec:app_span_window}
A separate four-layer grid restricts attention to windows of $W\in\{4,8,16,32\}$ tokens.
The model's token-index receptive field grows with $W$; pooled and interleaved tokens have
unequal temporal extents, so this is not an exact conversion to a fixed number of seconds.
In \tabref{tab:span_window}, the coarsening width contrasts span $\SpanGapCoarsenRange$ and the eviction contrasts
$\SpanGapEvictRange$. Neither sequence provides a clear monotone reduction of width
sensitivity as the attention window grows. This complements the depth experiment but is
not a direct test of a theorem about estimation from relative geometry alone.

\begin{table}[htbp]
\centering
\setlength{\tabcolsep}{4pt}
\caption{\textbf{Attention-window sweep.} Mean test accuracy (seed s.d.) [number of runs]. Four layers each attend within a window of $W$ sequence positions. Variable token supports and interleaved modalities prevent a fixed conversion from this window to elapsed time. Width sensitivities in the text use only matching test audits.}
\label{tab:span_window}
\small
\begin{tabular}{@{}llcccc@{}}
\toprule
Policy & Code & $W=4$ & $W=8$ & $W=16$ & $W=32$ \\
\midrule
Coarsen & Aperture RMS & $0.505\;(0.016)$ [5] & $0.501\;(0.040)$ [5] & $0.512\;(0.012)$ [5] & $0.524\;(0.025)$ [5] \\
Coarsen & Centre only & $0.453\;(0.014)$ [5] & $0.471\;(0.045)$ [5] & $0.440\;(0.009)$ [5] & $0.460\;(0.023)$ [5] \\
Evict & Aperture RMS & $0.427\;(0.020)$ [5] & $0.405\;(0.023)$ [5] & $0.435\;(0.020)$ [5] & $0.460\;(0.031)$ [5] \\
Evict & Centre only & $0.375\;(0.031)$ [5] & $0.392\;(0.036)$ [5] & $0.430\;(0.017)$ [5] & $0.415\;(0.018)$ [5] \\
\bottomrule
\end{tabular}
\end{table}

\section{Temporal-position diagnostics in a pretrained model}
\label{sec:app_tmrope}

We examine Qwen2.5-Omni-7B's temporal rotary configuration and evaluate Aperture gains
without fine-tuning. These diagnostics concern the pretrained model and the tested items;
they are separate from the adapted onset readers in \secref{sec:app_scale}.

\paragraph{Sampling and rotary frequencies.}
The model assigns integer temporal position IDs at $\TmrAudioStepMs$\,ms resolution,
with $\TmrTemporal$ of $\TmrPairs$ rotary pairs allocated to time.
Their periods range from $\TmrPeriodFast$ to $\TmrPeriodSlow$\,s.
For sampling interval $T$, frequencies satisfying $\theta T>\pi$ exceed the scalar
Nyquist threshold. The recorded configuration has $\TmrAliasedAudio$ such temporal
channels for audio, $\TmrAliasedFpsTwo$ for video at $2$\,fps, and
$\TmrAliasedFpsOne$ at $1$\,fps.
This channel-wise check characterises sampled phases; it does not prove that the complete
multichannel positional representation is ambiguous or that a task must lose accuracy.

\paragraph{Projection-energy diagnostic.}
Across $\TmrLayers$ thinker layers, the above-threshold temporal pairs have
$\TmrEnergyQ$ and $\TmrEnergyK$ times the mean projection energy of the remaining
pairs in $W_q$ and $W_k$. Matched index splits of the spatial-height section give
$\TmrMatchedHQ$/$\TmrMatchedHK$, and spatial width gives
$\TmrMatchedWQ$/$\TmrMatchedWK$.
The temporal ratios lie within these spatial reference ranges.
Thus the weight norms do not show distinctive suppression of these temporal pairs.
Weight energy alone does not measure their activation-level contribution to predictions.

\paragraph{Gains applied without adaptation.}
We apply the interval gain to the temporal rotary pairs using each token's support width.
On $\TmrRevN$ forward/reversed video pairs, unmodified accuracy is $\TmrRevBase$,
Aperture accuracy is $\TmrRevSinc$, and hard-cutoff accuracy is $\TmrRevNyq$.
On $\TmrSensN$ multiple-choice items, the corresponding unmodified and Aperture
accuracies are $\TmrSensBase$ and $\TmrSensSinc$.
The patch does not improve either measured aggregate.
Constant temporal IDs change accuracy by $\TmrSensConstD$; shuffled, reversed and
doubled IDs give $\TmrSensShufD$, $\TmrSensRevD$ and $\TmrSensStretchD$.
These different interventions probe different aspects of temporal dependence.
A small aggregate accuracy difference can coexist with changed predictions on individual items.

\paragraph{Sensitivity-selected items.}
Of $\GateNTotal$ items, $\GateNSens$ change prediction under at least one of four
perturbations: constant, shuffled or reversed temporal IDs, or zeroed temporal channels.
For each perturbation, a leave-one-perturbation-out analysis selects items using the other
three, avoiding direct selection on the tested perturbation.
The constant-ID accuracy contrast on this subset is $\GateLooConst$ with interval
$[\GateLooConstLo,\GateLooConstHi]$ ($n=\GateLooConstN$).
Its raw $p\GateLooConstP$ does not survive the declared Holm correction.
Selection using related perturbations can still enrich sensitivity, so this is a conditional
diagnostic rather than a population-level estimate.
On the sensitivity-selected set, the Aperture contrast is $\GateSensSinc$, with interval
$[\GateSensSincLo,\GateSensSincHi]$; it does not resolve a benefit.
Fixed frame rate does not imply the common-width, unrestricted-projection conditions of \propref{prop:gauge},
so the reparameterisation result does not explain this empirical outcome.

The forward/reverse test additionally finds identical answers on $\TmrRevBlind$ of pairs.
These diagnostics motivate evaluating temporal representations with both prediction-level
perturbations and accuracy comparisons under the tokenisation changes of interest.
They support no general claim that the tested benchmarks are insensitive to time.

\section{Relation to existing positional representations}
\label{sec:app_related}

\paragraph{Integrated Fourier features.}
Averaging a Fourier feature over a region is an established construction.
Integrated positional encoding in mip-NeRF uses a Gaussian approximation to a spatial
footprint \citep{mipnerf}; related approaches apply integration to super-resolution
\citep{ipesr} and evaluate volumetric encodings more precisely \citep{exactnerf}.
Kernel mean embeddings provide a general framework for representing distributions through
feature expectations \citep{kernelmean,hilbertembed}.
The support code used here is a finite Fourier feature expectation. Unlike a full
characteristic function or a characteristic-kernel embedding, a finite rotary bank is
not generally injective over all support measures.
Our analysis concerns its compatibility with token aggregation and the observations
available to an attention reader.

\paragraph{Interval-conditioned attention.}
TIE derives interval-integrated rotary features for video-generation conditioning
\citep{tie}. With temporal scale $\gamma$ (default 4), the encoded interval radius is
$r=\gamma(b-a)/2$ and centre $\gamma(a+b)/2$. Its uniform-kernel RoTE gain is
$\operatorname{sinc}(\theta_mr)/C_r$, where $C_r$ is the mean sinc gain over the
encoded pairs. Thus for physical width $w$, its argument is $\theta_m\gamma w/2$.
The native method rotates point queries and interval keys in video-generation cross-attention.
A symmetric interval-query application is a separately defined adaptation.
Our equal-merge characterisation instead asks which calibrated code remains affine under
aggregation. A nonconstant duration normaliser generally breaks that property.
Normalised readout remains compatible with exact merging when the unnormalised moments,
or enough information to recover them, are retained.
These objectives address different operations: normalising interval-conditioned attention
and preserving a representation through merges.

\paragraph{The mean-normalised control evaluated here.}
The saved arm labelled \texttt{rote} uses
\begin{equation}
 G_m^{\mathrm{MN}}(w)=
 \frac{\operatorname{sinc}(\theta_mw)}
 {\operatorname{sgn}_{+}(C_w)\max(|C_w|,10^{-3})},\qquad
 C_w=\frac{1}{n_t}\sum_{\ell=1}^{n_t}\operatorname{sinc}(\theta_\ell w),
 \label{eq:mncontrol}
\end{equation}
where $\operatorname{sgn}_{+}(0)=1$ and $n_t$ counts the encoded temporal pairs.
We call this arm \emph{MN-sinc}. After matching the effective physical-frequency bank
$\gamma\theta$, its argument uses full width in place of RoTE's half width. It is a
mean-normalised sinc control, not a
reproduction of RoTE on matched intervals. The magnitude-matched Aperture arm uses
MN-sinc's gain RMS as its target. The empirical comparisons apply to these explicitly
defined controls and do not establish superiority over published RoTE.

\paragraph{Frequency adaptation.}
EchoingPixels introduces Sync-RoPE, reallocating low-frequency channels to the temporal
axis and retaining a high-frequency temporal section \citep{echoingpixels}.
The control evaluated in this repository instead sets
$\theta_{m,i}^{\mathrm{cap}}=\min(\theta_m,\pi/w_i)$ independently for each token.
We call it \emph{width-capped phase}; its saved arm identifier is \texttt{syncrope}.
It tests a related frequency-control idea, not the published channel-allocation mechanism.
When two tokens receive different capped frequencies, a common shift adds different
phases, so the usual common-frequency translation cancellation need not hold.
This property of the control should not be attributed to published Sync-RoPE.

\paragraph{Feature placement and compression.}
PPE partitions each coordinate axis's rotary section into groups and assigns different
original member position IDs to those groups \citep{ppe}. It requires source membership
and retained coordinates. Scalar or sinusoidal width features do not reproduce this operator.
Our width-feature arm is a generic content-pathway baseline, independent of PPE.
Merging and retention methods such as ToMe, PruMerge, H$_2$O and SnapKV change the set
and supports of tokens available to attention \citep{tome,prumerge,h2o,snapkv}.
We use these operations to study positional state; no new compression policy is proposed.
Physical-time representations \citep{funtime,time2vec}, context rescaling
\citep{pi,yarn}, and multimodal axis allocation \citep{qwen2vl,videorope} address
complementary choices of clock, frequency and coordinate system.

\paragraph{Reference-operator validation.}
We implement the published RoTE interval multiplier, PPE's assignment for supplied selected IDs,
Sync-RoPE's $[T,H,W,T]$ allocation, and the Gaussian Fourier expectation underlying integrated
positional encoding. CPU comparisons with fixed official source revisions give maximum RoTE
state discrepancy below $5\times10^{-16}$; PPE and Sync-RoPE query/key rotations agree
coordinatewise in their tested configurations. These checks validate positional primitives,
not full downstream reproductions. Native token selection, input construction, adaptation and
evaluation follow the executed GPU protocol (\secref{sec:app_gpu}). The released PPE selector and its score-ranked
interpretation are retained as separately named protocols where they differ.

\end{document}